\NeedsTeXFormat{LaTeX2e}
\documentclass{new_tlp}

\usepackage[utf8]{inputenc}
\usepackage{amsmath}
\usepackage{amssymb}
\usepackage{url}\urlstyle{tt}
\usepackage{xspace}
\usepackage{xcolor}
\usepackage{bm}

\usepackage{tikz}

\DeclareMathOperator{\K}{\mathbf{K}}
\DeclareMathOperator{\M}{\mathbf{M}}
\DeclareMathOperator{\LK}{\mathbf{L}}
\newcommand{\sneg}{\text{-}}
\DeclareMathOperator{\Not}{\text{not}}
\DeclareMathOperator{\eNot}{\mathbf{not}}
\DeclareMathOperator{\tor}{\,\text{or}\,}

\newtheorem{definition}{Definition}
\newtheorem{proposition}{Proposition}
\newtheorem{theorem}{Theorem}
\newtheorem{example}{Example}
\newtheorem{corollary}{Corollary}
\newtheorem{property}{Property}
\newtheorem{observation}{Observation}

\makeatletter
\def\@enumerateone{\expandafter
   \list
     \csname label\@enumctr\endcsname
     {\usecounter{\@enumctr}\let\makelabel\makeRRlabel}}
\makeatother

\def\At{\text{\em At}}

\newcommand{\eqdef}{\mathrel{\vbox{\offinterlineskip\ialign{\hfil##\hfil\cr $\scriptscriptstyle\mathrm{def}$\cr \noalign{\kern1pt}$=$\cr \noalign{\kern-0.1pt}}}}}
\def\sI{\mathcal{I}}

\def\At{\text{\em At}}

\newcommand{\C}{\mathcal{C}}

\def\alive{\mathit{alive}}

\def\trigger{\mathit{trigger}}
\def\load{\mathit{load}}
\def\loaded{\mathit{loaded}}

\def\impossible{\mathit{impossible}}

\def\eligible{\mathit{eligible}}
\def\minority{\mathit{minority}}
\def\high{\mathit{high}}
\def\fair{\mathit{fair}}
\def\interview{\mathit{interview}}
\def\appointment{\mathit{appointment}}

\def\Atoms{\mathit{Atoms}}

\def\SM{\text{\rm SM}}

\newcommand\wv{\mathbb{W}}
\newcommand\wvb{\bm{\mathsf{W}}}
\newcommand\wx{\mathbb{X}}
\def\cS{\mathcal{S}}
\newcommand\cset[1]{[#1]}
\newcommand\us{\mathbb{S}}
\newcommand\kdint[2]{\tuple{#2, #1}}

\def\Bodym{\mathit{Body}_{sub}}
\def\Bodyr{\mathit{Body}_{obj}}
\def\Head{\mathit{Head}}
\def\Body{\mathit{Body}}

\def\Bodyrp{\mathit{Body}^+_{obj}}
\def\Bodymp{\mathit{Body}^+_{sub}}

\newcommand{\fF}{F}
\newcommand{\fG}{G}

\newcommand{\falsif}{=\!\!\!| \;}

\newcommand{\tuple}[1]{\ensuremath{\langle #1 \rangle}}
\newcommand{\sset}[1]{\ensuremath{[#1]}}
\newcommand{\set}[1]{\ensuremath{\{#1\}}}
\newcommand{\setm}[2]{\ensuremath{\{\ #1\ |\ #2\ \}}}
\newcommand{\cI}{\mathcal{I}}

\newcommand{\EM}{\mathbf{EM}}
\newcommand{\KEM}{\mathbf{KEM}}

\newcommand{\Mefwv}{\mbox{M85-world}\xspace}
\newcommand{\Sntwv}{\mbox{S92-world}\xspace}

\newcommand{\mike}{\mathit{mike}}
\newcommand{\guess}{\mathit{guess}}
\newcommand{\define}{\mathit{define}}
\newcommand{\test}{\mathit{test}}
 
\begin{document}

\title{Thirty years of Epistemic Specifications}

\author[Jorge Fandinno, Wolfgang Faber and Michael Gelfond]{Jorge Fandinno$^{12}$, Wolfgang Faber$^3$  and Michael Gelfond$^4$\\
  $^1$University of Nebraska Omaha, USA\\
  $^2$University of Potsdam, Germany\\
  \email{jfandinno@unomaha.edu}\\
  $^3$Alpen-Adria-Universität Klagenfurt, Austria\\
  \email{wolfgang.faber@aau.at}\\
  $^4$Texas Tech University, USA\\
  \email{michael.gelfond@ttu.edu}\\
}

\maketitle

\begin{abstract}
The language of \emph{epistemic specifications} and  \emph{epistemic logic programs} extends disjunctive logic programs under the \emph{stable model} semantics with modal constructs called subjective literals.
Using subjective literals, it is possible to check whether a regular literal is true in every or some stable models of the program,
those models, in this context also called \emph{belief sets}, being collected in a set called \emph{world view}.
This allows for representing, within the language, whether some proposition should be understood accordingly to the open or the closed world assumption.
Several attempts for capturing the intuitions underlying the language by means of a formal semantics were given, resulting in a multitude of proposals that makes it difficult to understand the current state of the art.
In this paper, we provide an overview of the inception of the field and the knowledge representation and reasoning tasks it is suitable for.
We also provide a detailed analysis of properties of proposed semantics, and an outlook of challenges to be tackled by future research in the area.
\end{abstract}

\section{Introduction}\label{sec:introduction}

The language of \emph{epistemic specifications}~\cite{gelfond91a,gelpri93,gelfond94} (a.k.a. \emph{epistemic logic programs}),
proposed by Gelfond in 1991, extends disjunctive logic programs (under the \emph{stable model} semantics;~\citeNP{gellif88b,gellif91a}) with modal constructs called \emph{subjective literals}.
The introduction of this extension was originally motivated by the need to correctly represent
incomplete information in programs that have several stable models.
Using subjective literals, it is possible to check whether a regular literal is true in every or some stable models of the program,
those models being collected in a set called \emph{world view}.
This allows for representing, within the language, whether some proposition should be understood accordingly to the open or the closed world assumption.

Unfortunately, as first noticed by Teodor Przymusinski, world views of epistemic specifications
in this original semantics do not always correspond to those intended by the authors.
This was due to the presence of unsupported beliefs.
Attempts to get rid of these unsupported beliefs were first made by Gelfond in~\citeyearNP{gelfond94} and later in~\citeyearNP{gelfond11a}, followed by many other authors who proposed several new semantics attempting to solve this problem. Somewhat complicating matters, there were also proposals for extending or changing the language.
In this paper, we present a summary of the state of the art regarding epistemic specifications.

The structure of the paper is as follows.
In Section~\ref{sec:inception}, an account of the inception of the field is provided.
This also describes the main intuitions underlying epistemic specifications.
We then review the formal details of epistemic specifications and provide an updated view of the ideas behind the original definition of epistemic specifications (Section~\ref{sec:theories}). The language of epistemic specifications is general enough to accommodate the syntax of most proposals in the literature, but we also define the language of epistemic logic programs, which is the fragment treated by most proposals.
In Section~\ref{sec:applications}, we review some of the representation problems that epistemic specifications can help to solve.
We relate these problems with some formal properties that help understanding the behavior for that specific task.
Namely, we revisit the use of epistemic specifications to express integrity constraints over disjunctive databases, informally discussed in Section~\ref{sec:inception}, but including technical details.
Then, we illustrate how epistemic specifications are also useful when we need to reason about all the answer sets of a program at the same time, which is not possible by simply using Answer Set Prolog.
We show also how we can extend the guess-define-and-test methodology from Answer Set Programming to problems that lie on the second level of the polynomial hierarchy: in particular 
we illustrate this methodology showing how epistemic logic programs can be used to find conformant plans.
Finally, we provide an example from cybersecurity.
In Section~\ref{sec:semantics}, we survey the path followed in the attempt to get rid of self-supported beliefs and the different approaches proposed in the literature.
We also show which properties are satisfied (or not) by the various semantics and describe some deeper relationships between some of the semantics.
Section~\ref{sec:ael} puts epistemic specifications in the broader context of Knowledge Representation by studying the relation between epistemic specifications and autoepistemic logics.
Finally, Section~\ref{sec:conclusions} concludes the paper and  presents some challenges for the future.
 \section{The Inception}\label{sec:inception}

The idea of epistemic specifications was initially suggested
in three consecutive papers \cite{gelfond91a,gelfondP93,gelfond94}.
This work was a part of the larger research program, originated by John
McCarthy and others in the late fifties.
The program aimed to develop knowledge representation
languages capable of clear and succinct formalization of substantial parts
of commonsense knowledge and commonsense reasoning methods.
A substantial step in this direction was made
by~\mbox{\citeN{gellif91a}},
who extended the language of ``classical'' logic programming with disjunction
and classical negation.\!\!\footnote{Often referred to as strong or explicit negation.}
The new language allowed reasoning with some forms
of incomplete information. For instance, for a program with one answer set, say $S$,
a statement ``the truth or falsity of $p$ is \emph{unknown}'' can be expressed
in Answer Set Prolog as
\begin{gather*}
\Not p,\, \Not \sneg p
\end{gather*}
where~``$\Not$'' and~``$\sneg$'' respectively stand for default and classical negation.
The Closed World Assumption (CWA; \citeNP{reiter78a}), stating that
``$p(X)$ is false unless there is a reason to believe it to be
true'' has the form
\begin{gather}
\sneg p(X) \leftarrow \Not p(X),
	\label{eq:cwa.Not}
\end{gather}
These representations,
however, do not work for programs with multiple answer sets. The main goal of epistemic
specifications was to address this deficiency.
As stated by~\mbox{\citeN{gelfond91a}}, we wanted to ``expand the syntax and semantics of logic programs
and deductive databases to allow for the correct representation of incomplete information
in the presence of multiple extensions.'' The main idea was to expand the syntax and semantics
of Answer Set Prolog by modal operators $\K$ and $\M$ where $\K F $ holds if $F$ is true in all
answer sets of a program and $\M F$ holds if $F$ is true in at least one answer set.
In this notation
$$\Not \K p,\, \Not\K\sneg p$$
would correspond to ``the truth value of $p$
is unknown'' even in the presence of multiple answer sets; the CWA for a relation $p$
could be expressed as
$$\sneg p(X) \leftarrow \Not \M p(X).$$
In a language containing object constants $a$, $b$ and $c$
this rule, combined with a rule
\begin{gather*}
p(a) \tor p(b)
\end{gather*}
would produce answer \emph{No} to
a query $p(c)?$, but remain undecided about query $p(a)?$.
The same behavior will, of course,
be produced by the original  representation~\eqref{eq:cwa.Not} of CWA.
However, for a more complex query, say
\begin{gather*}
(\sneg p(a) \tor \sneg p(b))?
\end{gather*}
the behaviors differ: the
former answers the query by \emph{Yes}, while the latter remains undecided.
This was intended -- we wanted a form of CWA not applicable to undecided disjuncts.

The new features of epistemic specifications were not limited to modal operators.
Rules were allowed to contain more general formulas (most importantly existential quantifiers).
In addition to usual (Herbrand) objects constants, there were also so called \emph{generic constants} used to
refer to \emph{unnamed} objects. The former were defined by atoms of the form
$h(c)$ (where $h$ stands for Herbrand), listing all the named objects of the domain, together with the rule
\begin{gather*}
\sneg h(X) \leftarrow \mbox{not } h(X).
\end{gather*}
This separation between named and unnamed objects allowed
representation of information which would be difficult to express otherwise. In particular, it was used
to remove the Domain Closure Assumption from the semantics of logic programs.
Instead the assumption, which states that
``all objects in the domain of discourse described by a program $\Pi$ have names in the signature of $\Pi$'',
could be expressed by the constraint
$$ \leftarrow \exists X\, \sneg h(X).$$
Existential quantifiers combined with modal operators were shown to be instrumental in expressing various forms of
constraints understood as statements about the content of the knowledge base as opposed to
statements about the world \cite{Reiter92}. Consider, for instance, knowledge base $T$
\begin{flalign}
&\begin{aligned}
&\mathit{h(bob)} \ \ \ \ \ \ \mathit{h(mary)}\\
&\mathit{teach(bob,java)}\\
&\mathit{teach(staff,python)}\\
&\mathit{teach(bob,ai)} \tor \mathit{teach(mary,ai)}.
\end{aligned}&
	\label{eq:disjuntive.db}
\end{flalign}

\medskip\noindent
where $\mathit{bob}$ and $\mathit{mary}$ are professors in the department, and $\mathit{staff}$ refers to a professor yet to be hired.
Then a constraint
\begin{gather}
\exists X \K (h(X) \wedge \mathit{teach}(X,C))
	\label{eq:constraint.exists.K}
\end{gather}
is satisfied by $C=\mathit{java}$ (which is taught by Bob).
A weaker constraint
\begin{gather}
\exists X \K \mathit{teach}(X,C)
	\label{eq:constraint.exists.K2}
\end{gather}
is satisfied by $C=\mathit{java}$ and by $C=\mathit{python}$;
\begin{gather}
\K \exists X \mathit{teach}(X,C)
	\label{eq:constraint.K.exists}
\end{gather}
is satisfied by all three classes ($\mathit{java}$, $\mathit{python}$, $\mathit{ai}$).

\medskip

The semantics of the language was similar to that of Answer Set Prolog. In both cases
a program was viewed as a specification of sets of beliefs that could  be held by
a rational reasoner associated with the program. But, while in Answer Set Prolog rules
constrain the formation of each set of beliefs (i.e. each answer set)
independently from others, in epistemic specifications restrictions are put also
on the relationship between such sets. This intuition led to the notion of a
\emph{world view} -- a collection of answer sets formed simultaneously by a
rational agent to satisfy the program's rules.
The key technical problem, as in the
semantics of~Answer Set Prolog, was to find the proper definition of a reduct capturing
rationality of the agent.
While the original paper \cite{gelfond91a} had an egregious
error in this definition (to the best of the author's recollection introduced at the last
moment in the attempt to satisfy time and space requirements of the conference)
other two papers \cite{gelfondP93,gelfond94} had a definition believed to be reasonable. It soon became
clear, however, that this belief was unjustified. To see the reason, consider an
epistemic specification consisting of one rule:
\begin{gather}
p \leftarrow \K p.
	\label{es:self-supported}
\end{gather}
To the authors' surprise it was noticed that, according to the definition proposed by~\citeN{gelfondP93} and latter used by~\citeN{gelfond94}, it has two world views:
$[\{ \, \}]$ and $[\{p\}]$. The latter contains the unsupported belief $p$ and
is clearly unintended. According to the rationality principle, which serves as
the foundation of the semantics of Answer Set Prolog, an agent is not supposed to believe
anything that it is not forced to believe, which is the case in the second world view.
For some time Gelfond had been trying to modify the definition but, after a few years
of failure, gave up on the idea.

In~\citeyearNP{gelfond11a}, Gelfond gave yet another attempt to modify this definition and was soon joined by many other authors in this attempt, with several new semantics attempting to solve this problem~\cite{kawabagezh15,faheir15a,sheeit16,sheeit17a,cafafa19a,irazu20}.

Admittedly, rule~\eqref{es:self-supported} is unlikely to be written by a programmer.
It is, however, used here to distill a phenomenon that can occur as a result of more complex and reasonable rules.
As an example consider the following rule
\begin{gather*}
r(Y) \leftarrow \K r(X) \wedge edge(X,Y)
\end{gather*}
saying that if in a world view $\wv$, property~$r$ is known to be true in state $X$, and $Y$ is a
successor of $X$ then $r(Y)$ must be included in every belief set of $\wv$.
When this rule is combined with facts~$edge(a,b)$, $edge(c,d)$ and~$edge(d,c)$ representing a graph and the fact~$r(a)$ stating that the property~$r$ is satisfied in state~$a$,
we can observe that the resultant program suffers from the same problem as~\eqref{es:self-supported}. \section{Epistemic Theories}
\label{sec:theories}

In this section,
we review the syntax of epistemic theories.
We present a language flexible enough to relate all the approaches that we will study in this paper.
We start by introducing epistemic theories in a general way and later we review a specific subset corresponding closely to the syntax of logic programs.

\subsection{General Syntax}

The language of epistemic specifications is that of first-order modal logic~\cite{fitting2012first}
extended with explicit negation.
We follow the convention of the literature on epistemic specifications for writing modal operations.
That is, symbols~$\K$ and~$\M$ are used in place of~$\Box$ and~$\Diamond$, respectively.
Terms and atoms are defined as usual in first-order (non-modal) logic.
Formulas are defined according to the following grammar:
\[
\fF \ ::= \ 
\bot \,\mid\,
\top \,\mid\,
a \,\mid\, 
\sneg \fF \,\mid\, 
\fF_1 \wedge \fF_2 
\,\mid\, \fF_1 \vee \fF_2
\,\mid\, \fF_1 \leftarrow \fF_2
\,\mid\, \K \fF
\,\mid\, \M \fF
\,\mid\, \exists x \,  \fF
\,\mid\, \forall x \,  \fF
	\]
with~$a \in \At$ an atom and~$x$ an object variable.
We assume that ${\Not \fF}$ is an abbreviation for~${\bot\leftarrow\fF}$.
We call~``$\sneg$'' \emph{explicit negation} and~``$\Not$'' \emph{default negation}.
An occurrence of a variable $x$ in a formula~$\fF$ is bound if it belongs to a subformula of~$\fF$ that has the forms~$\forall x \, \fG$ or~$\exists x \, \fG$; otherwise it is free.
A \emph{sentence} is a formula without free variables.
An \emph{(epistemic) theory}~$\Gamma$ is a set of sentences.
We sometimes write formulas with free variables that should be understood as their universal closure.

An \emph{explicit literal} is either an atom or a formula of the form~$\sneg a$ with $a$ being an atom.
Terms, atoms, explicit literals and formulas not containing variables are called \emph{ground}.

\subsection{Monotonic semantics}
\label{sec:theories.semantics}

We introduce here two monotonic semantics for epistemic theories that will be instrumental in defining the non-monotonic semantics in the next section.
The semantics discussed here coincide with those of modal logics S5 and KD45 extended with strong negation~\cite{nelson1949,vakarelov1977notes}.

An \emph{interpretation} is a set of ground explicit literals~$I$ such that either~${a \notin I}$ or~${\sneg a \notin I}$ for every atom~$a$.
An \emph{epistemic interpretation}~$\wv$
is a non-empty set of interpretations.
A \emph{belief interpretation} $\sI = \tuple{\wv,I}$ is a pair where~$I$ is a propositional interpretation and~$\wv$ is an epistemic interpretation.
We write~${\tuple{\wv,I} \models \fF}$ to represent that a belief interpretation
${\tuple{\wv,I}}$ \emph{satisfies} a sentence~$\fF$
and~${\tuple{\wv,I} \falsif \fF}$ to represent that a belief interpretation ${\tuple{\wv,I}}$ \emph{falsifies} a sentence~$\fF$.
These two relation are defined according to the following mutually recursive conditions:
\begin{enumerate}
\item $\tuple{\wv,I} \models \top$;
\item $\tuple{\wv,I} \models a$ if $a \in I$, for any atom $a \in \At$:

\item $\tuple{\wv,I} \models \fF \wedge \fG$ if $\tuple{\wv,I} \models \fF$ and $\tuple{\wv,I} \models \fG$;

\item $\tuple{\wv,I} \models \fF \vee \fG$ if $\tuple{\wv,I} \models \fF$ or $\tuple{\wv,I} \models \fG$;

\item $\tuple{\wv,I} \models \fF \leftarrow \fG$ if 
$\tuple{\wv,I} \models \fF$ or $\tuple{\wv,I} \not\models \fG$

\item $\tuple{\wv,I} \models \exists x \, \fF(x)$ if $\tuple{\wv,I} \models \fF(t)$ for some ground term~$t$;

\item $\tuple{\wv,I} \models \forall x \, \fF(x)$ if $\tuple{\wv,I} \models \fF(t)$ for all ground terms~$t$;

\item $\tuple{\wv,I} \models \K \fF$ if $\tuple{\wv,I'} \models \fF$ for all $I' \in \wv$;

\item $\tuple{\wv,I} \models \M \fF$ if $\tuple{\wv,I'} \models \fF$ for some $I' \in \wv$;

\item $\tuple{\wv,I} \models \sneg \fF$ if $\tuple{\wv,I} \falsif \fF$;

\vspace{5pt}

\item $\tuple{\wv,I} \falsif \bot$;
\item $\tuple{\wv,I} \falsif a$ if $\sneg a \in I$, for any atom $a \in \At$:

\item $\tuple{\wv,I} \falsif \fF \wedge \fG$ if $\tuple{\wv,I} \falsif \fF$ or $\tuple{\wv,I} \falsif \fG$;

\item $\tuple{\wv,I} \falsif \fF \vee \fG$ if $\tuple{\wv,I} \falsif \fF$ and $\tuple{\wv,I} \falsif \fG$;

\item $\tuple{\wv,I} \falsif \fF \leftarrow \fG$ if 
$\tuple{\wv,I}  \falsif \fF$ and $\tuple{\wv,I} \models \fG$

\item $\tuple{\wv,I} \falsif \exists x \, \fF(x)$ if $\tuple{\wv,I} \falsif \fF(t)$ for all ground terms~$t$;

\item $\tuple{\wv,I} \falsif \forall x \, \fF(x)$ if $\tuple{\wv,I} \falsif \fF(t)$ for some ground term~$t$;

\item $\tuple{\wv,I} \falsif \K \fF$ if $\tuple{\wv,I'} \falsif \fF$ for all $I' \in \wv$;

\item $\tuple{\wv,I} \falsif \M \fF$ if $\tuple{\wv,I'} \falsif \fF$ for some $I' \in \wv$; and

\item $\tuple{\wv,I} \falsif \sneg \fF$ if $\tuple{\wv,I} \models \fF$.
\end{enumerate}
A belief interpretation~$\tuple{\wv,I}$ that satisfies a formula is called a~\emph{belief model}.
An epistemic interpretation~$\wv$
\emph{satisfies} a formula~$\fF$,
in symbols~$\wv \models \fF$ if~$\tuple{\wv,I} \models \fF$ for all~$I \in \wv$.
In this case, $\wv$ is also called an \emph{epistemic model} of~$\fF$.
Belief and epistemic models defined in this way correspond to models in modal logics~KD45 and~S5, respectively.
As mentioned above, these modal logics are extended here with strong negation (called here explicit negation).

Formulas not containing modal operators are called \emph{objective}.
Formulas in which all atoms are in the scope of modal operators are called \emph{subjective}.
A theory is called \emph{objective} or \emph{subjective} if all its formulas are objective or subjective, respectively.
For an objective formula~$\fF$, the component~$\wv$ is irrelevant.
Therefore, we abbreviate~$\tuple{\wv,I} \models \fF$ as~$I \models \fF$.

\subsection{Nonmonotonic semantics}
\label{sec:original}
We provide now a non-monotonic semantics for epistemic theories.
This semantics is a conservative extension of Answer Set Prolog.
As mentioned earlier,
the initial work focused on a restricted language syntax that did not allow arbitrary formulas as described in Section~\ref{sec:theories}.
\mbox{\citeN{truszczynski11b}} allowed arbitrary propositional formulas, but did not include first-order constructs, such as quantifiers.
However, their ideas apply directly to the language presented above by considering a definition of \emph{stable models} that covers arbitrary objective formulas.
For that definition we rely on quantified equilibrium logic~\cite{peaval06a} with explicit negation~\cite{agcafapepevi19b}.

Given an objective theory~$\Gamma$, by~$\SM[\Gamma]$, we denote the set of interpretations that are \emph{answer sets} (or \emph{stable models}) of~$\Gamma$~(see~\ref{sec:qel} for a formal definition).
With this notation, we can immediately provide an answer set based semantics to arbitrary epistemic theories.

\begin{definition}[G94-reduct]
\label{eq:G94-reduct}
The \emph{G94-reduct} of a theory~$\Gamma$ 
with respect to an epistemic interpretation~$\wv$, written~$\Gamma^\wv$, is obtained by replacing each maximal subformula~$\fF$ of the forms $\K \fG$ and $\M \fG$ by $\top$, if $\wv \models \fF$; or by $\bot$, otherwise.
\end{definition}

\begin{definition}[G94-world view]
\label{def:G94-world.view}
An epistemic interpretation~$\wv$ is called a \emph{G94-world view} of a theory~$\Gamma$
if~${\wv = \SM[\Gamma^\wv ]}$.
\end{definition}

\begin{samepage}
\begin{definition}[$\cS$-Belief set]
\label{def:belief set}
Given a semantics~$\cS$,
an interpretation~$I$ is called an \emph{$\cS$-belief set}
of a theory~$\Gamma$
if there is an $\cS$-world view~$\wv$ of~$\Gamma$ with~$I \in \wv$.
\end{definition}
\end{samepage}

Definition~\ref{def:belief set} is stated in a general way, so it can be applied to different semantics provided that they give a definition of~$\cS$-world views.
In particular, we get the definition of G94-belief sets by replacing~$\cS$ by G94.
This kind of parametrized definition is useful to accommodate different semantics that we review in the following sections.

\subsection{Epistemic Logic Programs}

From a Knowledge Representation point of view,
it is interesting to focus on a particular class of theories that have the form of logic programs with modal operators.
Formally,
an \emph{objective literal} $\ell$ is either an explicit literal, that is $\ell \in \At \cup \{\sneg a \mid a \in \At\}$, a truth constant\footnote{For a simpler description of program transformations, we allow truth constants where $\top$ denotes true and $\bot$ denotes false.}, that is \mbox{$\ell \in \{\top,\bot\}$}, or an explicit literal preceded by one or two occurrences of default negation,
that is \mbox{$l = \Not \ell$} or \mbox{$l = \Not\Not \ell$}.
A \emph{subjective literal} is an expression of the forms
$\K l$, $\M l$,
$\Not \K l$, $\Not \M l$,
$\Not\Not \K l$ or $\Not\Not \M l$
for any objective literal~$l$.
A \emph{literal} is either an objective or a subjective literal.
A \emph{rule} $r$ is an expression of the form:
\begin{gather}
l_1 \tor \dots \tor l_m \leftarrow L_1, \dots, L_n
	\label{eq:rule}
\end{gather}
with $m\geq 0$ and $n\geq 0$, where each $l_i$ is an objective literal and each $L_j$ a literal.
The left hand disjunction of \eqref{eq:rule} is called the rule \emph{head} and it is abbreviated as $\Head(r)$.
The right hand side of \eqref{eq:rule} is called the rule \emph{body} and it is abbreviated as $\Body(r)$.
An \emph{(epistemic) logic program}~$\Pi$ is a set of rules of the form~\eqref{eq:rule}.

We identify each rule of the form of~\eqref{eq:rule}
with the universal closure of the formula
\begin{gather}
l_1 \vee \dots \vee l_m \leftarrow L_1 \wedge \dots \wedge L_n
	\label{eq:rule.formula}
\end{gather}
When~$m=0$, we assume the head of the rule to be~$\bot$.
We also identify each logic program with a theory containing a formula as above for each rule in the program.
Accordingly, we immediately obtain a definition for the G94-world views of an epistemic logic program using Definition~\ref{def:G94-world.view}.
 \section{Epistemic specifications for Knowledge Representation}
\label{sec:applications}

In this section, we review some of the potential applications of epistemic specifications for knowledge representation.
Namely, we revisit the use of epistemic specifications to express integrity constraints over disjunctive databases informally discussed in Section~\ref{sec:inception}.
Then, we illustrate how epistemic specifications are also useful when we need to reason about all the answer sets of an objective program at the same time, which is usually not possible using Answer Set Prolog itself.
Alongside these two applications, we also review two formal properties (called \emph{subjective constraint monotonicity} and \emph{epistemic splitting}) that shed some light on the reasons why epistemic specifications are useful for these two classes of problems.
These properties are also used in the forthcoming sections to compare different semantics.
We then show how we can extend the guess-define-and-test methodology of Answer Set Programming to problems that lie on the second level of the polynomial hierarchy: in particular 
we illustrate this methodology showing how epistemic logic programs can be used to find conformant plans.
This methodology is also based on the aforementioned two properties: subjective constraint monotonicity and epistemic splitting.
Finally, we also sketch a potential application in cybersecurity.

\subsection{Integrity Constraints}

As mentioned in the introduction,
one of the initial motivations for epistemic specifications was to express various forms of constraints about the knowledge of disjunctive databases.
We show here how the above semantics allow us to represent constraints without free-variables, called here \emph{integrity constraints}.
Let us now formalize some of the intuitions mentioned there.

\begin{definition}
An \emph{epistemic specification}
is a pair~$E=\tuple{\Gamma,\C}$
where~$\Gamma$ is an epistemic theory and~$\C$ is a subjective theory
whose sentences are called \emph{integrity constraints}.
An \emph{$\cS$-world view} of~$E$ is an $\cS$-world view~$\wv$ of~$\Gamma$ such that~$\wv \models \C$.
\end{definition}

If we consider now a program containing the knowledge base~\eqref{eq:disjuntive.db},
we can see that such program has a unique world view containing two belief sets:
\begin{gather*}
A \cup \{ \mathit{teach(bob,ai)} \}
\hspace{2cm}
A \cup \{ \mathit{teach(mary,ai)} \}
\end{gather*}
with~$A = \{ \mathit{h(bob)}, \mathit{h(mary)}, \mathit{teach(bob,java)}, \mathit{teach(staff,python)} \}$ being common to both belief sets.
It is easy to see that
formula~${h(\mathit{bob}) \wedge teach(\mathit{bob},\mathit{java})}$
is satisfied by both belief sets.
This implies that
the unique world view of this program
satisfies the formula
${\K (h(\mathit{bob}) \wedge teach(\mathit{bob},\mathit{java}))}$
and, as a result,
also
${\exists X \K (h(X) \wedge teach(X,\mathit{java}))}$.
On the other hand,
neither
${\mathit{teach(bob,ai)}}$
nor
${\mathit{teach(mary,ai)}}$
are satisfied by both belief sets
and as a result,
sentence
${\exists X \K (h(X) \wedge teach(X,\mathit{ai}))}$
is not satisfied by the unique world view.
This implies that the
program does not satisfy the universal closure of constraint~\eqref{eq:constraint.exists.K}, that is, the sentence:
\begin{gather}
\forall C \, \exists X \K (h(X) \wedge \mathit{teach}(X,C))
	\label{eq:constraint.exists.K.sentence}
\end{gather}
Similarly, we can see that the universal closure of constraint~\eqref{eq:constraint.exists.K2}
is not satisfied either, but the universal closure of~\eqref{eq:constraint.K.exists} is.
Note that in each belief set there is someone teaching each of the subjects,
even if that person may vary between belief sets (in the case of $\mathit{ai}$) or may be unknown (in the case of $\mathit{python}$).

An interesting property of some semantics is that integrity constraints can be fully integrated into a single theory, while other semantics do not allow for this.
This property was called~\emph{subjective constraint monotonicity} by~\citeN{cafafa19b}.

\begin{property}[Subjective constraint monotonicity]
\label{property:constraint.monotonicity}
A semantics~$\cS$ is said to satisfy \emph{subjective constraint monotonicity} if, for any epistemic specification~$E=\tuple{\Gamma,\C}$, 
an epistemic interpretation $\wv$ is a $\cS$-world view of $E$ iff $\wv$ is a $\cS$-world view of
$\Gamma \cup \{ \bot \leftarrow \Not\varphi \mid \varphi \in \C \}$.
\end{property}

This property is analogous to the monotonicity of constraints in Answer Set Prolog.
Recall that an interpretation is an answer set of a program iff it satisfies all its constraints and is an answer set of the rest of the program.
Similarly, subjective constraint monotonicity allows us to work simply with a single theory (resp. logic program), instead of giving a special treatment to constraints.
It also ensures that certain intuitions from Answer Set Prolog are carried to epistemic logic programs.

Note that Property~\ref{property:constraint.monotonicity} is enunciated in a semantics-dependent way (depends on the semantics~$\cS$ selecting some $\cS$-world views),
so it can be applied to alternative semantics.
With respect to the semantics corresponding to Definition~\ref{def:G94-world.view},
\citeN{cafafa19b} show that Property~\ref{property:constraint.monotonicity} is satisfied for ground theories.
It is not difficult to see that this property is also satisfied for \mbox{non-ground} ones.
We discuss it in the context of other semantics below.
As we point out, some semantics satisfy this property and others do not (see Table~\ref{table:summary} in page~\pageref{table:summary} for a quick overview).

 \subsection{Reasoning about incomplete knowledge}
\label{sec:splitting}

Beyond expressing integrity constraints about the knowledge implied by a database,
an interesting feature of epistemic specifications is their ability to deduce new information about the  knowledge in the database.
To illustrate this claim, consider the following example introduced by~\citeN{gelfond94}.

\begin{samepage}
\begin{example}\label{ex:college}
A given college uses the following set of rules to decide whether a student $X$ is eligible for a scholarship:
\begin{eqnarray}
\eligible(X) & \leftarrow & \high(X)   \label{ex:college.1} \\
\eligible(X) & \leftarrow & \minority(X),\, \fair(X) \label{ex:college.2}\\
\sneg\eligible(X) & \leftarrow & \sneg\fair(X),\, \sneg\high(X) \label{ex:college.3}
\end{eqnarray}
Here, $\high(X)$ and $\fair(X)$ refer to the grades of student~$X$.
We want to encode the additional college criterion
``\emph{The students whose eligibility is not determined by the college rules should be interviewed by the scholarship committee}'' 
as another rule in the program.
\end{example}
\end{samepage}

\noindent The interesting issue is that deciding whether $\eligible(X)$ ``\emph{can be determined}'' requires reasoning about all the stable models of the program at the same time.
For instance, if the only available information for some student $mike$ is the disjunction
\begin{gather}
\fair(\mike) \tor \high(\mike) \label{ex:college.4}
\end{gather}
we get that program \mbox{$\{\, \eqref{ex:college.1} \text{\,-\,} \eqref{ex:college.4}\,\}$} has a unique world view containing the following two belief sets:
\begin{gather}
\{\, \high(\mike), \eligible(\mike) \,\}
	\label{f:sm1.pre}
\\
\{\, \fair(\mike) \,\}
	\label{f:sm2.pre}
\end{gather}
so $\eligible(\mike)$ cannot be determined and an interview should follow.
If we are interested only in querying $\K\eligible(\mike)$ or $\M\eligible(\mike)$, we can do it inside standard logic programming.
For instance, the addition of constraint:
\begin{eqnarray*}
\bot \leftarrow \eligible(\mike)
\end{eqnarray*}
allows us to decide if $\eligible(\mike)$ is a consequence of all answer set of the original program by just checking that the resulting program has no stable model.
In such case, we can also conclude that $\K\eligible(\mike)$ is a consequence of the program.
The difficulty comes when we try to \emph{derive} new information from that knowledge.
Rule
\begin{gather}
\interview(X) \leftarrow \Not\K \eligible(X),\,
\Not\K \sneg \eligible(X) 
\label{ex:college.5}
\end{gather}
precisely allows us to derive that $\interview(X)$ needs to hold for every student~$X$ for whom neither $\eligible(X)$ nor $\sneg eligible(X)$ are satisfied in all belief sets of \mbox{$\{\, \eqref{ex:college.1} \text{\,-\,} \eqref{ex:college.4}\,\}$}.
If we now consider the program
\mbox{$\{\, \eqref{ex:college.1} \text{\,-\,} \eqref{ex:college.4}, \, \eqref{ex:college.5}\,\}$},
we can see that this program has a unique world view containing
the following two belief sets:
\begin{eqnarray}
&\{\, \fair(\mike),\interview(\mike)\,\}& \label{f:sm1}\\
&\{\, \high(\mike), \eligible(\mike),\interview(\mike)\,\} &\label{f:sm2}
\end{eqnarray}
The intuition behind the reasoning process followed in this example
relies on a kind of reasoning by layers.
First, we compute the world views of the first layer~\mbox{$\{\eqref{ex:college.1} \text{\,-\,} \eqref{ex:college.4}\}$};
then, the second layer inspects the world views of the first layer through subjective formulas and derives new information.
We can also extend this example with a third layer that uses the knowledge about $\interview$ to derive further information,
for instance, by including the rule:
\begin{eqnarray}
\appointment(X) \leftarrow  \K \interview(X) \label{ex:college.6}
\end{eqnarray}
The two belief sets of program~\mbox{$\set{\eqref{ex:college.1} \text{\,-\,} \eqref{ex:college.4}, \eqref{ex:college.5}}$} contain $\interview(\mike)$ and, as a result,
we may expect that $\appointment(\mike)$ should be added to both belief sets
of program~\mbox{$\set{\eqref{ex:college.1} \text{\,-\,} \eqref{ex:college.4}, \eqref{ex:college.5}, \eqref{ex:college.6} }$}.
Indeed,
the unique world view of this program contains the two belief sets resulting from adding
$\appointment(\mike)$ to~\eqref{f:sm1} and~\eqref{f:sm2}.

This kind of reasoning was formalized in the form of a \emph{splitting property} by~\citeANP{cafafa19b}~\citeyear{cafafa19b,cafafa21a}.
This property resembles the \emph{splitting theorem} for Answer Set Prolog~\cite{liftur94a}.
It is worth noting that this splitting property was stated only for ground programs.
However, it directly extends to non-ground programs without quantifiers, by understanding each of them as the ground program obtained by replacing all variables by all possible object constants.
It is still an open issue to generalize this property to arbitrary theories containing quantifiers.

We introduce this property now,
but we need the following notation first.
Given a ground rule~$r$ of the form~\eqref{eq:rule},
by~$\Atoms(r)$ we denote the set of all atoms occurring in~$r$.
By~$\Bodyr(r)$ we denote the set of all atoms occurring in objective literals in the body of~$r$.
By abuse of notation we also use $\Head(r)$ to denote the set of all atoms occurring in the head of~$r$.

\begin{definition}[Epistemic splitting set]\label{def:splitting}
A set of ground atoms \mbox{$U \subseteq \At$} is said to be an \emph{epistemic splitting set} of a ground program~$\Pi$ if for any rule $r$ in $\Pi$ one of the following conditions hold:
\begin{enumerate}
\item $\Atoms(r) \subseteq U$,
    \label{item:1:def:splitting}
\item 
$(\Bodyr(r) \cup \Head(r)) \cap U = \emptyset$.
\label{item:3:def:splitting}\end{enumerate}
We define a \emph{splitting} of 
$\Pi$ as a pair $\tuple{B_U(\Pi),T_U(\Pi)}$ satisfying $B_U(\Pi) \cap T_U(\Pi) = \emptyset$ and $B_U(\Pi) \cup T_U(\Pi) = \Pi$,
and also that all rules in $B_U(\Pi)$ satisfy (i) and all rules in $T_U(\Pi)$ satisfy (ii).
\end{definition}

We also need to introduce a variation of the subjective reduct in Definition~\ref{eq:G94-reduct} that is restricted to a particular set of atoms.

\begin{definition}
\label{eq:es-reduct.signature}
The \emph{subjective reduct} of a ground program $\Pi$ with respect to an epistemic interpretations~$\wv$ and a signature $U \subseteq \At$,
written $\Pi^\wv_U$, is obtained by replacing each subjective literal $L$ with $\Atoms(L) \subseteq U$ by~$\top$ if $\wv \models L$ or by~$\bot$ otherwise.
\end{definition}

It is easy to see that, when~$U=\At$,
Definition~\ref{eq:es-reduct.signature} coincides with  Definition~\ref{eq:G94-reduct}.
Given an epistemic splitting set~$U$ for a program~$\Pi$ and an epistemic interpretation~$\wv$, we define $E_U(\Pi,\wv) \eqdef T_U(\Pi)^\wv_U$, that is, we make the subjective reduct of the top with respect to $\wv$ and signature $U$.

\begin{definition}
A pair $\tuple{\wv_b,\wv_t}$ is said to be an $\cS$-\emph{solution} of a ground program~$\Pi$ with respect to an epistemic splitting set $U$ if $\wv_b$ is an $\cS$-world view of $B_U(\Pi)$ and $\wv_t$ is an $\cS$-world view of $E_U(\Pi,\wv_b)$.
\end{definition}

The following operation allows reconstructing the world view of the whole program from the world views of its parts:
$$\wv_b \sqcup \wv_t \ \  = \ \ \setm{I_b \cup I_t}{ I_b \in \wv_b  \text{ and } I_t \in \wv_t  }$$

\begin{property}[Epistemic splitting]
\label{property:epistemic.splitting}
A semantics $\cS$ satisfies \emph{epistemic splitting} if for any epistemic splitting set $U$ of any ground program $\Pi$, epistemic interpretation~$\wv$ is an $\cS$-world view of $\Pi$ iff there is an $\cS$-solution $\tuple{\wv_b,\wv_t}$ of~$\Pi$ with respect to $U$ such that
$\wv=\wv_b \sqcup \wv_t$.
\end{property}

As with subjective constraint monotonicity,
this property is also stated in a semantics-dependent way,
so we can study its applicability to other semantics reviewed later.
In particular,
the semantics described above does satisfy this property~\cite[Main Theorem]{cafafa19b}.
Interestingly,
every semantics satisfying epistemic splitting also satisfies subjective constraint monotonicity (Property~\ref{property:constraint.monotonicity}; for a proof of this result see the paper by~\citeNP[Theorem~3]{cafafa19b}).
A different notion of splitting in the context of this semantics was first studied by~\citeN{watson00}.

Getting back to our running example,
we can see that the set $U$, consisting of atoms $\high(\mike), \fair(\mike),$ $\eligible(\mike), \minority(\mike)$,
is an epistemic splitting set that divides the program~\mbox{$\{\, \eqref{ex:college.1} \text{\,-\,} \eqref{ex:college.4}, \eqref{ex:college.5} \,\}$}
into a bottom 
\mbox{$\{\,\eqref{ex:college.1} \text{\,-\,} \eqref{ex:college.4}\,\}$}
and top part
\mbox{$\{\,\eqref{ex:college.5}\,\}$}.
The bottom part is an objective program, without epistemic operators,
which has a unique world view
$\wv_b=
[ \eqref{f:sm1.pre}, \eqref{f:sm2.pre} ]$.
The corresponding simplification of the top contains (after grounding) the single rule
\begin{gather}
\interview(\mike) \leftarrow \Not \bot, \Not \bot
\end{gather}
Again, this program is objective and its unique world view is $\wv_t=[\{interview(mike)\}]$.
Now it is easy to see how epistemic splitting guarantees that program
\mbox{$\{\,\eqref{ex:college.1} \text{\,-\,} \eqref{ex:college.4}, \eqref{ex:college.5}\,\}$}
has a unique world view~$\wv_b \sqcup \wv_t = [\eqref{f:sm1}, \eqref{f:sm2}]$.
We can recursively apply this reasoning to program~\mbox{$\{\,\eqref{ex:college.1} \text{\,-\,} \eqref{ex:college.4}, \eqref{ex:college.5}, \eqref{ex:college.6} \,\}$}
to see that its unique world view is the result of adding
$\appointment(\mike)$ to each belief set in~$[\,\eqref{f:sm1}, \eqref{f:sm2}\,]$.

This shows that we can apply the semantics defined above to problems that require to reason about all/some of the answer sets of a program, when this can be done by layers.
In particular, this is interesting for queering databases that may contain disjunctive information as illustrated by Example~\ref{ex:college}.

 \subsection{A guess-define-and-test methodology for conformant planning}
\label{sec:guess-and-check}

The problem of conformant planning consists of finding a sequence of (possibly
concurrent) actions that guarantee the achievement of some goal~\cite{smiwel98a}.
Different to classical planning, the action domain may be nondeterministic, and the initial state may not be completely specified.
A conformant plan is valid if it is guaranteed to be executable and its execution achieves the goal in all possible initial states and all possible effects of the actions.
It is well-known that the problem of finding conformant plans of polynomially-bounded length is \mbox{$\Sigma^P_2$-complete}~\cite{Turner02}.
Quantified Boolean Formulas (QBFs) are one choice for encoding problems in this complexity class.
Indeed, there are QBF encodings for conformant planning, but the logic programming encoding of the problem is much closer to its natural language description and, thus, it is more declarative. This makes the design, understanding and maintenance of the problem solution substantially easier.
It is also well-known that Answer Set Prolog can be used to represent problems on the second level of the polynomial hierarchy~\cite{eitgot95a}.
This may suggest that Answer Set Prolog may be a prime candidate to represent conformant planning problems.
However, tackling problems on the second level of the polynomial hierarchy in Answer Set Prolog usually comes at the cost of using highly sophisticated encodings based on \emph{saturation} that break the intuitive understanding of normal programs.
On the other hand, normal (or head-cycle-free) epistemic programs can also represent problems in the second level of the polynomial hierarchy~\cite{truszczynski11b}, thus constituting an alternative to represent this class of problems.
In particular, semantics satisfying the epistemic splitting property (Property~\ref{property:epistemic.splitting}) provide a natural guess-define-and-test methodology~\cite{martru99a,niemela99a} to represent these problems~\cite{cafafa21a}.
In this methodology, each solution~$\pi$ to the problem at hand corresponds to a set~$S_\pi$ of epistemic literals of the form~$\K a$.
The program is divided into three parts~$\Pi_{\guess}$, $\Pi_{\define}$ and~$\Pi_{\test}$ as follows:
\begin{itemize}
\item The guess part~$\Pi_{\guess}$ generates world views where each of them corresponds to a potential solution of the problem.
Since solutions are encoded using subjective literals of the form~$\K a$, we may assume that each world view~$\wv$ of~$\Pi_{\guess}$ is a singleton satisfying
\begin{gather*}
\K a \vee \K \Not a
	\label{eq:epistemic.choice.property}
\end{gather*}
This means that a solution does not only need to exist, but that a solution needs to be known.
For instance, in the case of conformant planning, this means that the agent needs to know the sequence of actions that it will be performing.

\item The test part~$\Pi_{\test}$ is a set of subjective constraints imposing the conditions to be a solution.
For instance, in the case of conformant planning, this may consist of subjective constraints ensuring that the goal is achieved and that the plan is executable.

\item The define part~$\Pi_{\define}$ is a program, in most cases an objective one, defining auxiliary concepts.
For instance, in the case of conformant planning, this encodes the action domain and the initial state.
\end{itemize}
Let us now illustrate this methodology in more detail.
The use of epistemic logic programs to obtain conformant plans was first advocated by~\citeN{kawabagezh15}.
The semantics used there satisfies neither the epistemic splitting nor the subjective constraint monotonicity properties.
Then, \mbox{\citeN{cafafa21a}} showed that the use of these two properties can greatly simplify the representation.
We follow here this latter approach.
Consider the following variation of the well-known Yale shooting problem~\cite{hanmcd87a} introduced by~\citeN{kawabagezh15}.

\begin{samepage}
\begin{example}
The agent is operating in a domain in which there is a turkey and a gun. The turkey can be alive or not.
The gun may be loaded or not. If the gun is loaded and the trigger is pulled, then the turkey will be dead.
Pulling the trigger will unload the gun. The agent can load the gun, but this action is impossible if the gun is already loaded.
The goal is to kill the turkey.
\end{example}
\end{samepage}

The main difference between this example and the original one introduced by~\citeANP{hanmcd87a} is that we do not know the actual initial state.
The action domain of this example can be represented by objective rules of the form
\begin{align}
\sneg\alive_{i+1} &\leftarrow \trigger_{i},\, \loaded_{i}
	\label{eq:no-alive}
\\
\sneg\loaded_{i+1} &\leftarrow \trigger_{i}
	\label{eq:no-loaded}
\\
\loaded_{i+1} &\leftarrow \load_{i}
	\label{eq:loaded}
\\
\impossible &\leftarrow \load_{i},\, \loaded_{i}
	\label{eq:impossible}
\end{align}
for all~${1 \leq i \leq n}$, where $n$ is a given planning horizon.
Rules~\eqref{eq:no-alive}-\eqref{eq:loaded} describe the effects of the actions,
while rule~\eqref{eq:impossible} captures the fact that \emph{load} cannot occur if the gun is already loaded.
This is an objective program whose representation of the action domain is similar to the one usually used for classical planning in Answer Set Prolog (see for example the paper by~\citeNP{lifschitz02a}).

The initial situation can be represented by the following two disjunctions:
\begin{gather}
\alive_0 \tor \sneg\alive_0
\hspace{2cm}
\loaded_0 \tor \sneg\loaded_0
	\label{eq:initial}
\end{gather}
The define part~$\Pi_\define$ for the conformant planning problem consists of rules~\eqref{eq:no-alive}-\eqref{eq:initial}.
Since this is an objective program, we can use it together with any existing solver for Answer Set Prolog to see how the world evolves if some sequence of actions is executed.
For instance, it is easy to check that program~$\{ \trigger_0,\load_1,\trigger_2 \} \cup \Pi_\define$ has four stable models, that all four contain the literal~$\sneg \alive_3$,
and that none of them contain~$\impossible$.
As a result, we can deduce that~$\pi = \tuple{\trigger_0,\load_1,\trigger_2}$ is a conformant plan, since the goal is achieved in all cases and the sequence of actions is executable.
Using an epistemic logic program, we can encode this final step in the test part.
In this example, program~$\Pi_{\test}$ consists of the following two subjective constraints
\begin{align}
&\leftarrow \Not \K \sneg\alive_n
	\label{eq:goal}
\\
&\leftarrow \M \impossible
	\label{eq:impossible.check}
\end{align}
In particular,~\eqref{eq:goal} states that, in the last situation, the agent must know that the turkey is not alive; while~\eqref{eq:impossible.check} ensures that no impossible action has occurred.
Note that, since~$\Pi_{\define}$ is an objective program, it has a unique world view consisting of all its stable models.
Furthermore, since all those stable models contain~$\sneg \alive_3$, it follows that this unique world view satisfies~$\K \sneg \alive_3$.
Since none of them contain~$\impossible$, we get that it does not satisfy~$\M \impossible$.
Hence, this world view satisfies both~\eqref{eq:goal} and~\eqref{eq:impossible.check}.
As a result, this is the unique world view of the program~$\{ \trigger_0,\load_1,\trigger_2 \} \cup \Pi_\define \cup \Pi_\test$.
This is a direct consequence of subjective constraint monotonicity (Property~\ref{property:constraint.monotonicity}).

Let us now illustrate that this method also allows to show that~$\pi' = \tuple{\load,\trigger}$ is not a conformant plan for this scenario.
In this case, program~$\{ \load_1,\trigger_2 \} \cup \Pi_\define$ also has four stable models and all four contain~$\sneg \alive_3$.
However, two of them also contain~$\impossible$.
As a result, the unique world view of this program does not satisfy constraint~\eqref{eq:impossible.check}, which implies that program~$\{ \load_1,\trigger_2 \} \cup \Pi_\define \cup \Pi_\test$ has no word view at all.
This shows that~$\pi'$ is not a conformant plan because it is not executable in all possible initial situations.
In particular, we cannot $\load$ the gun when it is initially $\loaded$.

Epistemic logic programs can not only be used to check that a sequence of action is a conformant plan, but it can also be used to generate all possible conformant plans.
As usual in Answer Set Prolog, this is achieved by including a choice of the form
\begin{gather}
a_i \tor \Not a_i
	\label{eq:epistemic.choice}
\end{gather}
for all actions at each time step~$1 \leq i < n$.
As mentioned above, for conformant planning, this is not enough because it allows that different actions can be performed for different initial situations.
This is avoided by introducing a rule of the form
\begin{gather}
a_i \leftarrow \M a_i
	\label{eq:epistemic.choice.constraint}
\end{gather}
stating that, if an action $a_i$ occurs in any belief set, it must occur in all of them.
In this sense, $\Pi_{\guess}$ consists of rules of the form~\eqref{eq:epistemic.choice} and~\eqref{eq:epistemic.choice.constraint} for each action and time step.
For instance, in our running example, $\Pi_{\guess}$ consists of the following rules
\begin{gather}
\begin{aligned}
\trigger_i &\tor \Not \trigger_i
\\
\load_i &\tor \Not \load_i
\end{aligned}
\hspace{2cm}
\begin{aligned}
\trigger_i &\leftarrow   \M \trigger_i
\\
\load_i &\leftarrow  \M \load_i
\end{aligned}
	\label{eq:action.guess}
\end{gather}
for $0 \leq i < 3$.
It can be checked that program~$\Pi_\guess \cup \Pi_\define \cup \Pi_\test$
has a unique world view and that this world view satisfies the subjective literals~$\K \trigger_0$, $\K \load_1$ and~$\K \trigger_2$.
That is, the unique world view of this program corresponds to the unique conformant plan for this scenario.

Though we have illustrated the application of the generate-define-and-test methodology for the conformant planning problem, we believe that this methodology can be applied to other problems that fit in the second level of the polynomial hierarchy.
As another example we could consider conformant diagnosis, where we are tasked to find a diagnosis that explains the observations in all possible initial situations.
As with classical planning problems, Answer Set Prolog is well suited to represent diagnostic problems, but not conformant diagnosis ones.
This methodology, not only allow us to represent a conformant diagnosis problem, but also to reuse the existing domain representation used in Answer Set Prolog to represent its non-conformant variation.

 \subsection{Reasoning over attack trees and graphs}
\label{sec:attackgraphs}

The term \emph{attack trees} was coined by Bruce Schneier~\citeyear{Schneier99}, but the concept has most likely existed prior to that. The trees represent chains of attacks that can lead to a goal, the root vertex. Sets of vertices in the tree serve as preconditions (conjunctive or disjunctive) to other vertices, thus forming trees. The idea is to identify conditions that allow for achieving the goal. This notion can be generalized to \emph{attack graphs}, which model collections of attacks and exploits. Usually these graphs are directed acyclic graphs.

In Figure~\ref{fig:attackgraph}, a simplification of a scenario described by \citeN{AlbaneseJN2012} is represented as an attack graph, which in this case forms a tree. The elliptical vertices represent exploits, the other vertices represent conditions or achievements. When exploits have several conditions as predecessors, all of them have to be met to achieve the exploit. Exploits themselves cause new conditions to hold. In turn, for a condition to hold, only one preceding exploit needs to be achieved. Achieving an exploit by leveraging conditions would form an attack. Attacks can lead to conditions that allow for achieving other exploits, this is usually referred to as chains of attacks.

In Figure~\ref{fig:attackgraph} one exploit is ftp\_rhosts(0,2), where an issue in the ftp service is exploited to overwrite the .rhosts file of machine~2. The prerequisite is having a user on machine~0 and ftp access from machine~0 to~2. The exploit causes trust(2,0), which in turn allows for accessing machine~2 from machine~0 via rsh (exploit rsh(0,2)), thus getting user access on machine~2. On the right hand side of the graph having user access to machine~1 and the ability to reach the ssh daemon on machine~2 from machine~1 allows the exploit sshd\_bof(1,2), a buffer overflow exploit in the ssh daemon, which also allows for having user access to machine~2. Having user access to machine~2 allows for the exploit local\_bof(2), which exploits a local buffer overflow issue to gain root access.

\usetikzlibrary{arrows,shapes}

\begin{figure}
  \centering
  
\begin{tikzpicture}[node distance=1.5cm,>=stealth',bend angle=45,auto]

  \tikzstyle{condition}=[rectangle,rounded corners,thick,minimum size=6mm,draw]
  \tikzstyle{exploit}=[ellipse,thick,minimum size=6mm,draw]

  \node [condition] (ftp02) {ftp(0,2)};
  \node [condition] (user0) [right of=ftp02] {user(0)} ;
  \node [exploit] (ftprhosts02) [below of=ftp02,xshift=5mm] {ftp\_rhosts(0,2)};
    \draw [-to,thick] (ftp02) -- (ftprhosts02) {};
    \draw [-to,thick] (user0) -- (ftprhosts02) {};

    \node [condition] (trust02) [below of=ftprhosts02] {trust(2,0)};
      \draw [-to,thick] (ftprhosts02) -- (trust02) {};

      \node [exploit] (rsh02) [below of=trust02] {rsh(0,2)};
      \draw [-to,thick] (trust02) -- (rsh02) {};
      
  \node [condition] (sshd12) [right of=trust02,xshift=10mm] {sshd(1,2)};
  \node [condition] (user0) [right of=sshd12,xshift=3mm] {user(1)} ;
  \node [exploit] (sshdbof12) [below of=sshd12,xshift=5mm] {sshd\_bof(1,2)};
      \draw [-to,thick] (sshd12) -- (sshdbof12) {};
    \draw [-to,thick] (user0) -- (sshdbof12) {};

    \node [condition] (user2) [below of=rsh02,xshift=15mm] {user(2)};
    \draw [-to,thick] (sshdbof12) -- (user2) {};
    \draw [-to,thick] (rsh02) -- (user2) {};

    \node [exploit] (localbof22) [below of=user2] {local\_bof(2)};
    \draw [-to,thick] (user2) -- (localbof22) {};

    \node [condition] (root2) [below of=localbof22] {root(2)};
    \draw [-to,thick] (localbof22) -- (root2) {};
\end{tikzpicture}
\caption{Example attack graph}
\label{fig:attackgraph}
\end{figure}
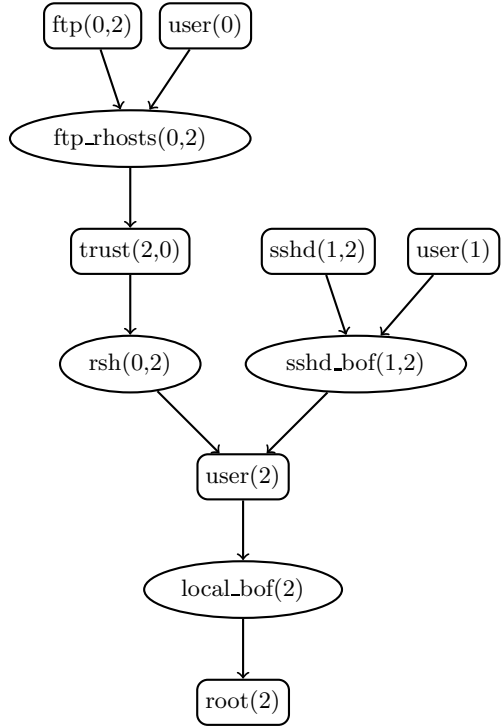

It is clear that graphs of this kind can be modeled by objective rules, in particular, one can create a rule
\begin{gather}
e \leftarrow c_1, \ldots, c_n
	\label{eq:attackgraphs.results}
\end{gather}
for each exploit $e$ and conditions $c_1, \ldots, c_n$ that point to it. For conditions caused by an exploit, we create rules
\begin{gather}
c \leftarrow e
	\label{eq:attackgraphs.exploits}
\end{gather}
for each condition $c$ caused by exploit $e$. If some exploits are already known to hold or not to hold, they can be affirmed as facts. For the example in Figure~\ref{fig:attackgraph}, this would lead to the following rules:
\begin{gather*}
  \begin{aligned}
    \mathit{ftp\_rhosts}(0,2) & \leftarrow \mathit{ftp}(0,2), \mathit{user}(0)\\
    \mathit{trust}(2,0) & \leftarrow \mathit{ftp\_rhosts}(0,2)\\
    \mathit{rsh}(0,2) & \leftarrow \mathit{trust}(2,0)\\
    \mathit{sshd\_bof}(1,2) & \leftarrow \mathit{sshd}(1,2), \mathit{user}(1)\\
    \mathit{user}(2) & \leftarrow \mathit{rsh}(0,2)\\
    \mathit{user}(2) & \leftarrow \mathit{sshd\_bof}(1,2)\\
    \mathit{local\_bof}(2) & \leftarrow \mathit{user}(2)\\
    \mathit{root}(2) & \leftarrow \mathit{local\_bof}(2)
  \end{aligned}
\end{gather*}

One advantage of Epistemic Logic Programs over the graph representation is the ability to abstract from specific machines and thus arrive at more compact representations. For example, the following rule represents that having ftp access from a machine X to another machine Y and having a user on X gives rise to the  ftp\_rhosts exploit from X to Y:
\begin{gather*}
  \begin{aligned}
      \mathit{ftp\_rhosts}(X,Y) & \leftarrow \mathit{ftp}(X,Y), \mathit{user}(X)
  \end{aligned}
\end{gather*}

In order to use the attack graph to establish that the exploit $\mathit{root}(2)$ can not be reached under any circumstances, one can add a choice of the form
\begin{gather}
c \tor \sneg c
	\label{eq:attackgraphs.choice}
\end{gather}
for each condition $c$ with no incoming arc, and the constraint
\begin{gather}
\leftarrow \M e
	\label{eq:attackgraphs.secure_constraint}
\end{gather}
for each exploit $e$ to be checked.

In the example, this gives rise to
\begin{gather*}
  \begin{aligned}
    \mathit{ftp}(0,2) \tor \sneg \mathit{ftp}(0,2)\\
    \mathit{user}(0) \tor \sneg \mathit{user}(0)\\
    \mathit{sshd}(1,2) \tor \sneg \mathit{sshd}(1,2)\\
    \mathit{user}(1) \tor \sneg \mathit{user}(1)\\
    \leftarrow \M \mathit{root}(2)
  \end{aligned}
\end{gather*}

This modeling allows for flexibility in further reasoning. As an example, we consider the application of hardening measures, as they are called by~\citeN{AlbaneseJN2012}, in order to close vulnerabilities. In the running example, hardening measures could be to close ftp and/or ssh access to the network. One could add ``epistemic guesses'' for each of hardening measures and rules that describe their consequences as follows:

\begin{gather*}
  \begin{aligned}
  \mathit{close\_ftp} & \leftarrow \Not \K \sneg \mathit{close\_ftp}\\
  \sneg \mathit{close\_ftp} & \leftarrow \Not \K \mathit{close\_ftp}\\
  \mathit{close\_sshd} & \leftarrow \Not \K \sneg \mathit{close\_sshd}\\
  \sneg \mathit{close\_sshd} & \leftarrow \Not \K \mathit{close\_sshd}\\
  \sneg \mathit{ftp(0,2)} & \leftarrow \mathit{close\_ftp}\\
  \sneg \mathit{sshd(1,2)} & \leftarrow \mathit{close\_sshd}
  \end{aligned}
\end{gather*}

Each subset of available hardening measures thus gives rise to a potential world view, but only those that guarantee that the exploit is impossible to achieve will be world views, due to~\eqref{eq:attackgraphs.secure_constraint}. 
Note that this guarantee is provided by the subjective constraint monotonicity property (Property~\ref{property:constraint.monotonicity}).
This property is satisfied by the semantics we discuss in Section~\ref{sec:original} and some other semantics we will see in the next section.
However, there are also semantics that do satisfy this property and for which this claim may not hold.

In the example, there is one potential world view in which $\sneg \mathit{close\_ftp}$ and $\sneg \mathit{close\_sshd}$ hold, but clearly it contains several answer sets (containing both $\mathit{ftp}(0,2)$ and $\mathit{user}(0)$ or both $\mathit{sshd}(1,2)$ and $\mathit{user}(1)$) that contain $\mathit{root}(2)$ and therefore violate the constraint. Similarly, the world view in which  $\mathit{close\_ftp}$ and $\sneg \mathit{close\_sshd}$ hold, will contain an answer set that contains $\mathit{sshd}(1,2)$ and $\mathit{user}(1)$ and therefore also $\mathit{root}(2)$, while the world view in which  $\sneg \mathit{close\_ftp}$ and $\mathit{close\_sshd}$ hold, will contain an answer set that contains $\mathit{ftp}(0,2)$ and $\mathit{user}(0)$ and therefore also $\mathit{root}(2)$. Only the world view in which  $\mathit{close\_ftp}$ and $\mathit{close\_sshd}$ hold can guarantee that $\mathit{root}(2)$ is false in each of its answer sets. Indeed, in this example the only hardening that avoids the exploit $\mathit{root}(2)$ is to close both ftp and ssh access.
  \section{The search for self-support-free world views}
\label{sec:semantics}

As mentioned in Section~\ref{sec:inception},
early formalizations of epistemic specifications contained unsupported beliefs.
In this section, we review the major approaches that have addressed this issue.
In Subsection~\ref{sec:foundedness}, we also review a new property called \emph{foundedness} that aims to capture the essence of self-supported-free world views in a formal way.
We use this property, together with the subjective constraint monotonicity and epistemic splitting properties defined earlier, to provide a formal comparison between different approaches.
We also go deeper in this comparison by providing some translations between approaches and identifying some agreement subclasses. 

With the exception of the work by~\citeN{sheeit16},
all the existing work addressing this issue focused on ground theories.
Thus, in the rest of this section, we restrict ourselves to ground theories.
Another interesting point to mention are the languages used by different approaches.
As mentioned earlier,
the original language of
epistemic specifications contained two modalities~$\K$ and~$\M$.
\begin{table}[h]
\centering
\begin{tabular*}{12.25cm}{ @{\hskip1cm} c @{\hskip2.25cm}  @{\hskip.75cm} c c c  }
\hline\hline
primitive 			& &defined operators
\\\hline
   			& $\K\fF$ & $\M\fF$ & $\eNot \fF$
\\\hline
$\K$		& - & $\Not\K\Not\fF$ & $\Not\K\fF$
\\\hline
$\M$		& $\Not\M\Not\fF$ & - & $\M\Not\fF$
\\\hline
$\eNot$		& $\Not\eNot\fF$ & $\eNot\Not\fF$ & -
\\\hline\hline
\end{tabular*}
	\caption{Interdefinability of epistemic operators.
	If the rewriting produces a formula of the form~$\Not\Not\Not \varphi$,
	it is replaced by~$\Not \varphi$.
	For instance, rewriting~$\Not\Not \M \varphi$ produces~$\Not\Not\Not \K \Not\varphi$ and, thus, we get $\Not \K \Not \varphi$.}
	\label{table:interdefinition.opeartors}
\end{table}
Interestingly,
according to the semantics given so far,
these two modalities are interdefinable as shown it Table~\ref{table:interdefinition.opeartors}.
This interdefinability holds for all approaches we review below with two exceptions:
\citeN{faheir15a} presented a semantics where the above equivalences do not hold, and~\citeN{cafafa19a} leaves the discussion about the modal operator~$\M$ for future work.

\citeN{sheeit17a} introduced a third modal operator~$\eNot$
where~$\eNot \fF$ can be read as ``there is no evidence proving that~$\fF$ is true.''
Interestingly,
this third modal operator is also interdefinable with the other two.
In view of these facts,
in the following we focus on reviewing the different semantics for the~$\K$ operator
and assume that, unless stated otherwise,~$\M$ and~$\eNot$ are treated as abbreviations following Table~\ref{table:interdefinition.opeartors}.

 \subsection{Gelfond 2011}\label{sec:gelfond2011}

\citeN{gelfond11a} was the first to discuss the existence of unintended world views in the early works on epistemic specifications and to propose an alternative semantics.
As mentioned in the introduction, an example of these unintended world views is the existence of the unsupported belief~$p$ in one of the world views of the program consisting of rule
\begin{gather}
p \leftarrow \K p.
	\tag{\ref{es:self-supported}}
\end{gather}
We can check that, according to Definition~\ref{def:G94-world.view}, this program has two world views:
$[\{ \, \}]$ and $[\{p\}]$.
For the former, note that $[\{ \, \}] \not\models \K p$.
Then, the G94-reduct of~$p \leftarrow \K p$ with respect to~$[\{ \, \}]$ is the tautological rule
\begin{gather}
p \leftarrow \bot.
	\label{eq:self-supported.G94-reduct.empty}
\end{gather}
It is easy to see that this objective program is equivalent to the empty program and, thus, it has the empty set as its unique stable model.
As a result, we obtain that $[\{ \, \}]$ is indeed a world view of program~$\{ p \leftarrow \K p \}$.
On the other hand, we can see that $[\{  p \}] \models \K p$.
As a result, the G94-reduct of~$p \leftarrow \K p$ with respect to~$[\{  p  \}]$ is
\begin{gather}
p \leftarrow \top.
	\label{eq:self-supported.G94-reduct.p}
\end{gather}
The unique stable model of this objective program is~$\{p\}$ and, therefore, $[\{  p  \}]$ is also a G94-world view of~$\{ p \leftarrow \K p \}$.

Motivated by this issue, \citeN{gelfond11a} proposed the following variation of the reduct.
The definition of G11-world views is exactly as the definition of G94-world views, but it uses this new reduct instead of the G94-reduct.

\begin{definition}[G11-reduct and world views]
	\label{def:g11.reduct}
Given a logic program~$\Pi$, its G11-reduct with respect to a non-empty set of interpretations~$\wv$ 
is the program obtained by:
\begin{enumerate}
\item replacing by $\bot$ every subjective literal~$L$ such that $\wv \not\models L$;
\item removing all other occurrences of subjective literals in the scope of default negation;
\item replacing all other occurrences of subjective literals of the form $\K l$ by $l$.
\end{enumerate}
An epistemic interpretation~$\wv$ is a G11-world view of $\Pi$ iff $\wv$ is the set of all stable models of the G11-reduct of $\Pi$ with respect to~$\wv$.
\end{definition}

Definition~\ref{def:g11.reduct} was an attempt to find a formalization of the Rationality Principle.
The main technical tool used for this purpose was this new reduct.
Unlike other existing ASP reducts which normally remove a program's rule or some extended literal from the rule's body, the new reduct allowed replacement of an epistemic literal~$\K l$ by its corresponding objective literal~$l$.
The intention was to ensure that the rule allows the inclusion of the head in a particular belief set only if it already contains the objective literals corresponding to all the epistemic literals in its body.
It worked for rules like $p \leftarrow \K p$ and other simple examples, but failed to
completely eliminate unintended beliefs (see an example below).

Continuing with our running example, we can see now that the G11-reduct of~$p \leftarrow \K p$ with respect to the epistemic interpretation~$[\{ \, \}]$ is the same as its \mbox{G94-reduct}.
Therefore, $[\{ \, \}]$ is also a G11-world view of~$\{ p \leftarrow \K p\}$.
In contrast, the G11-reduct of~$p \leftarrow \K p$ with respect to~$[\{ p \}]$
is the tautology
\begin{gather}
p \leftarrow p.
	\label{eq:self-supported.G11-reduct.p}
\end{gather}
The unique stable model of this program is the empty set and, thus, $[\{ p \}]$ is not a G11-world view of~$\{ p \leftarrow \K p\}$.

It worth noting that, if all occurrences of epistemic literals are in the scope of negation, this semantics coincide with the G94-semantics introduced above.
On the other hand, if no epistemic literal occurs in the scope of negation, this semantics coincide with the K15-semantics introduced in the next section.

Despite the success of this semantics in removing the unsupported belief~$p$ in the simple example given above, it still presents unsupported beliefs in more complex examples.
Take for instance the following program used by~\citeN{cafafa19a} to illustrate this fact:
\begin{gather}
p \tor q
\hspace{2cm}
p \leftarrow \K q
\hspace{2cm}
q \leftarrow \K p.
\label{theory:larger.set.of.worlds}
\end{gather}
We can see that this program has two G94-world views, $\sset{ \set{p}, \set{q} }$ and
$\sset{ \set{p , q} }$.
Here, the belief~${p \wedge q}$ in the second world view is unsupported.
Note that the first rule does not support~${p \wedge q}$ and the other two rules only can support this fact if $p$ or~$q$ were supported in all belief sets, which is not the case.
Still, this second G94-world view is also a \mbox{G11-world} view.
To see why, note that the G11-reduct of program~$\{\eqref{theory:larger.set.of.worlds}\}$ with respect to $\sset{ \set{p , q} }$
is the objective program:
\begin{gather*}
p \tor q
\hspace{2cm}
p \leftarrow q
\hspace{2cm}
q \leftarrow p
\end{gather*}
which has the unique stable model $\set{p,q}$.
In fact, we can see below that this example (or a slight variations of it) provides a major challenge to most existing approaches.

It is worth mentioning that this semantics satisfies subjective constraint monotonicity (Property~\ref{property:constraint.monotonicity}; see the paper by~\citeNP{fandinno19a}), although it does not satisfy the epistemic splitting property (Property~\ref{property:epistemic.splitting}).
To see that this semantics does not satisfy epistemic splitting,
take the following program from the paper by~\citeN{cafafa21a}:
\begin{gather}
p \tor q 
\hspace{2cm}
s \leftarrow  \K p
\hspace{2cm}
\leftarrow \Not s
	\label{prg:g11.no.epistemic.splitting}
\end{gather}
This program has no G94-world view.
Note that~$U=\{p,q\}$ is a splitting set for this program that divides it in a bottom part~$\{p \tor q\}$ and a top part~$\{ s \leftarrow \K p,\, \leftarrow \Not s\}$.
It is easy to see that the unique world view of the bottom part is~$[\{p\},\,\{q\}]$
and, simplifying the top part with respect to this world view, we obtain
the unsatisfiable program
\begin{gather*}
c \leftarrow  \bot
\hspace{2cm}
\leftarrow \Not c
\end{gather*}
Since the G94-semantics satisfies the epistemic splitting property, this immediately implies that this program has no G94-world view.
Similarly, if the G11-semantics would satisfy the epistemic splitting property, we would expect that this program had no \mbox{G11-world} view either.
However, this program does have the G11-world view~$[\{p,s \}]$.
To see this fact, note that the G11-reduct of the program containing the rules~\eqref{prg:g11.no.epistemic.splitting} with respect to~$[\{p,s \}]$
is
\begin{gather*}
p \tor q
\hspace{2cm}
s \leftarrow  p
\hspace{2cm}
\leftarrow \Not s.
\end{gather*}
The unique stable model of this objective program is~$\set{p,s}$.
This program also illustrates another example of unsupported beliefs:
here neither of the beliefs~$p$ and~$s$ are supported.
This can be easily seen by removing the constraint~$\leftarrow \Not s$.
The resulting program
\begin{gather}
p \tor q 
\hspace{2cm}
s \leftarrow  \K p
	\label{prg:g11.no.epistemic.splitting.no.constraint}
\end{gather}
has a unique world view~$[\{p\},\{q\}]$ in all semantics discussed in this paper.
Here, neither~$p$ nor~$s$ are believed, and adding the constraint~$\leftarrow \Not s$ does not provide any reason to believe either of them.
 \subsection{Kahl, Watson, Balai, Gelfond \&Zhang 2015}\label{sec:kahl2015}

\citeN{kawabagezh15} revised the semantics introduced by~\citeN{gelfond11a} in order to avoid the presence of multiple world views due to recursion through the operator~$\M$.
As an example of this issue consider the program consisting of rule
\begin{gather}
p \leftarrow \M p.
	\label{eq:self-supported.M}
\end{gather}
With respect to the G94- and the \mbox{G11-semantics}, this program has two worlds views, namely $[\{\,\}]$ and~$[\{ p \} ]$.
\citeN{kawabagezh15} argue that this program should have a unique world view and that it should be~$[\{ p \} ]$.
This argument is based on the following observation.
\begin{quote}
	It was observed by looking at the definitions for satisfiability that a rational agent should find it easier
	to accept certain extended literals over others. This is clear when we look at the fact that, e.g. given
	a belief interpretation\footnote{\citeN{kawabagezh15} use the term ``pointed ES structure'' instead of ``belief interpretation.'' We made the replacement here to keep the coherence with the rest of the text.}
$\tuple{\wv,I}$, in order to establish $\K l$, it must be demonstrated that $l$ belongs to all
	belief sets in $\wv$.
To establish $l$, it must be demonstrated that $l$ belongs to a particular belief set in~$\wv$, namely~$I$.
But to establish $\M l$, it is sufficient to demonstrate that~$l$ belongs to some belief set in~$\wv$.
\end{quote}
As a result, a preference order among literals~$\K l$, $l$ and~$\M l$ is established, where~$\K l$ is the hardest to accept (or it is the one that requires the highest degree of conviction) and~$\M l$ is the easiest to accept.
Taken into account this preference order, one can deduce that the fact that accepting~$l$ requires it to be self-support-free does not imply that accepting~$\M l$ should require it to be self-support-free, too.
Unfortunately, this observation does not imply the contrary either; and other authors, like~\citeN{sufahe20}, have opted to require that~$\M l$ should be self-support-free.
According to~\citeN{sufahe20}, the unique world view of this program should be~$[\{\,\}]$.
The intuition of recursion through the~$\M$ operator is subject of open debate and one could even develop a semantics with two different~$\M$-like modal operators, where one of them requires to be self-support-free and the other does not.

Focusing on the semantics introduced by~\citeN{kawabagezh15}, recall that following our convention (see Table~\ref{table:interdefinition.opeartors}), rule~\eqref{eq:self-supported.M} is an abbreviation for rule
\begin{gather}
p \leftarrow \Not \K \Not p.
	\label{eq:self-supported.M.translated}
\end{gather}
Note that $[\{ \, \} ] \models \K \Not p$.
Therefore, the G94-reduct of~\eqref{eq:self-supported.M.translated} with respect to~$[\{ \, \} ]$ is
\begin{gather*}
p \leftarrow \Not \top,
\end{gather*}
while its G11-reduct is
\begin{gather*}
p \leftarrow \bot.
\end{gather*}
It is easy to see that these two rules are equivalent and their unique stable model is the empty set.
As a result, $[\{ \, \} ]$ is both a G94- and a G11-world view.

Motivated by this issue, \citeN{kawabagezh15} proposed a new variation of the reduct.
The definition of K15-world views is exactly as the definition of G94- and \mbox{G11-world} views, but using this new reduct instead.
We present here the definition introduced by~\citeN[Appendix~C]{kawabagezh15}.

\begin{definition}[K15-reduct and world view]
\label{eq:K15-reduct}
The \emph{K15-reduct} of a ground program~$\Pi$ 
with respect to an epistemic interpretation~$\wv$ is obtained by replacing each maximal subformula of the form~$\K l$ by $l$, if $\wv \models \K l$, or by~$\bot$, otherwise.\footnote{If replacing~$\K l$ by $l$ results in more than two nested default negations we simplify it using the following rewriting rule recursively:
$\Not\Not\Not \fF \mapsto \Not \fF$.}

An epistemic interpretation~$\wv$ is a K15-world view of $\Pi$ iff $\wv$ is the set of all stable models of the K15-reduct of $\Pi$ with respect to~$\wv$.
\end{definition}

Continuing with our running example, we can now see that the K15-reduct of~\eqref{eq:self-supported.M.translated} with respect to $[\{ \, \} ]$ is
\begin{gather*}
p \leftarrow \Not \Not p,
\end{gather*}
and the resulting program has two stable models: $\{\, \}$ and~$\{p\}$.
Therefore, $[\{ \, \} ]$ is not a K15-world view.
In a similar way, we can check that $[\{ p \} ]$ is indeed a K15-world view.
Note that the K15-reduct with respect to this epistemic interpretation is
\begin{gather*}
p \leftarrow \Not \bot.
\end{gather*}

It is worth mentioning that for programs, in which the epistemic operator does not occur under the scope of default negation, this semantics coincides with G11.
As a result, the arguments stated in Section~\ref{sec:gelfond2011} for programs~\eqref{theory:larger.set.of.worlds} and~\eqref{prg:g11.no.epistemic.splitting} also apply to this semantics.
This implies that this semantics also manifests unsupported beliefs and that it does not satisfy epistemic splitting.
Besides, as observed by~\citeN{leckah18b}, this semantics does not satisfy subjective constraint monotonicity (Property~\ref{property:constraint.monotonicity}), while G11 does.
The following program taken from the paper by~\citeN{leckah18b} illustrates this fact:
\begin{gather}
p \tor q
\hspace{2cm}
\leftarrow  \Not \K p.
	\label{eq:leckah18b}
\end{gather}
This program has a unique K15-world view $\cset{ \set{ p } }$.
Note however that the program $\{ p \tor q \}$ is objective and has two stable models~$\set{p}$ and~$\set{q}$.
Thus, it has the unique world view~$\cset{ \set{ p }, \set{ q } }$,
which does not satisfy the subjective constraint~$\leftarrow \Not  \K p$.
We can see that adding this subjective constraint makes $\cset{ \set{ p } }$ a world view, which contradicts this property.

An interesting observation about this semantics is that it can be considered as the \emph{reflexive} counterpart of G94 in a sense similar to the relation between Moore's autoepistemic logic and reflexive autoepistemic logic~\cite{schwarz91a}.
In fact, we can use the embedding from reflexive autoepistemic logic into Moore's autoepistemic logic~\cite[page~304]{martru93a} to illustrate this fact.
This embedding~$(\,\cdot\,)^B$ is defined recursively as follows:
\begin{enumerate}
\item $\fF^B = \fF$ if $\fF$ is an atom or~$F = \bot$;

\item $(\sneg\fF)^B = \sneg(\fF^B)$

\item $(\fF \otimes \fG)^B = \fF^B \otimes \fG^B$ for $\otimes \in \{\wedge, \vee, \to \}$; 

\item $(Qx\, \fF(x))^B = Qx\, \fF(x)^B$ for $Q \in \{\forall, \exists \}$; and

\item $(\K \fF)^B = \fF^B \wedge \K \fF^B$.
\end{enumerate}
For a theory~$\Gamma$, the embedding is defined as~$\Gamma^B = \{ \fF^B \mid \fF \in \Gamma \}$.

\begin{proposition}
\label{prop:reflexive.embedding}
The K15-word views of any program~$\Pi$ coincide with the G94-world views of $\Pi^B$.
\end{proposition}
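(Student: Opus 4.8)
The plan is to reduce the statement to a comparison of reducts: I will show that, for every (ground) epistemic logic program~$\Pi$ and every epistemic interpretation~$\wv$, the G94-reduct~$(\Pi^B)^{\wv}$ of the embedded program and the K15-reduct of~$\Pi$ with respect to~$\wv$ are objective theories with the same stable models. Granted this, the proposition follows by unfolding definitions: $\wv$ is a G94-world view of~$\Pi^B$ iff $\wv = \SM[(\Pi^B)^{\wv}]$, iff $\wv$ equals the set of all stable models of the K15-reduct of~$\Pi$ w.r.t.~$\wv$, iff $\wv$ is a K15-world view of~$\Pi$.

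First I would record two elementary properties of the embedding~$(\,\cdot\,)^{B}$. By induction on formula structure, $\fF^{B}=\fF$ whenever $\fF$ is objective, since rules~1--4 never touch a modal operator and rule~5 is the only one that does. Also, by rules~2--4, $(\,\cdot\,)^{B}$ commutes with explicit negation, with the binary connectives, and with the quantifiers, hence with the abbreviation $\Not\fF=(\bot\leftarrow\fF)$ as well. Writing every subjective literal of~$\Pi$ in terms of the primitive~$\K$ via Table~\ref{table:interdefinition.opeartors}, it follows that the sole effect of~$(\,\cdot\,)^{B}$ on~$\Pi$ is to replace each occurrence of a subformula~$\K l$ (with $l$ an objective literal, so $l^{B}=l$) by~$l\wedge\K l$. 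In particular, since objective literals contain no modal operators, $\K l$ is the (unique) maximal subformula of the form~$\K\fG$ inside each such conjunction, so the G94-reduct acts on exactly those occurrences.

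Next I would compare the two reducts occurrence by occurrence at each such~$\K l$. Because $l$ is objective, $\wv\models\K l$ holds iff $l\in I'$ for all $I'\in\wv$, and this condition is evaluated identically whether we read~$\K l$ as a subformula of~$\Pi$ or of~$\Pi^{B}$. If $\wv\models\K l$, the K15-reduct replaces $\K l$ by $l$, whereas the G94-reduct replaces the $\K l$ inside $l\wedge\K l$ by $\top$, giving $l\wedge\top$; if $\wv\not\models\K l$, the K15-reduct gives $\bot$ and the G94-reduct gives $l\wedge\bot$. Since $l\wedge\top$ and $l\wedge\bot$ are equivalent, respectively, to $l$ and $\bot$ in the logic of here-and-there underlying quantified equilibrium logic, and since substituting a subformula by an HT-equivalent one preserves HT-models and hence stable models, $(\Pi^{B})^{\wv}$ and the K15-reduct of~$\Pi$ w.r.t.~$\wv$ have the same stable models. (The nested-negation simplification built into the definition of the K15-reduct, $\Not\Not\Not\fF\mapsto\Not\fF$, is itself such an HT-equivalence, so it does not disturb the argument.)

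I expect the only delicate point to be bookkeeping with the abbreviations: one must check that the occurrences of~$\K$ that the K15-reduct rewrites are precisely the ones the embedding reintroduces, one level deeper, inside~$l\wedge\K l$ — in particular for subjective literals written with~$\M$ or prefixed by one or two default negations, where unfolding $\M l=\Not\K\Not l$ and $\Not\fF=(\bot\leftarrow\fF)$ produces nested implications. Running through the finitely many shapes of a subjective literal ($\K l$, $\M l$, and each prefixed by $\Not$ or $\Not\Not$) confirms that in every shape the two rewritings yield HT-equivalent results, which is all the argument requires; the rest is a direct unfolding of the definitions of G94- and K15-world views.
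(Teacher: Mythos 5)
Your proposal is correct and follows essentially the same route as the paper's own proof: reduce the claim to showing that the K15-reduct of $\Pi$ and the G94-reduct of $\Pi^B$ are equivalent for every epistemic interpretation $\wv$, then argue by cases on whether $\wv \models \K l$, comparing $l$ with $l \wedge \top$ and $\bot$ with $l \wedge \bot$. The extra bookkeeping you supply (objective formulas being fixed by $(\cdot)^B$, commutation with connectives and abbreviations, HT-equivalence justifying substitution) merely fills in details the paper leaves implicit.
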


\begin{proof*}
It is enough to show that the K15-reduct of~$\Gamma$ is equivalent to the G94-reduct of $\Gamma^B$ for any epistemic interpretation~$\wv$.
Pick any maximal subformula of the form~$\K l$.
We proceed by cases.
\begin{itemize}
\item If $\wv \models \K l$, then the K15-reduct replaces it by~$l$.
On the other hand, the \mbox{G94-reduct} of $(\K l)^B = l^B \wedge \K l^B = l \wedge \K l$ is $l \wedge \top$.
Clearly these two formulas are equivalent.
Note also that~$l^B = l$ because~$l$ is an objective literal.

\item Otherwise, the K15-reduct replaces $\K l$ by~$\bot$
while the \mbox{G94-reduct} of $(\K l)^B = l \wedge \K l$ is $l \wedge \bot$, which are also equivalent.\hspace*{1em}\hbox{\proofbox}
\end{itemize}
\end{proof*}
Proposition~\ref{prop:reflexive.embedding} gives us a straightforward way to extend the K15-semantics from ground logic programs to arbitrary theories:
we can define the K15-world views of any theory~$\Gamma$ as the G94-world views of~$\Gamma^B$.

A converse embedding~$(\,\cdot\,)^K$ from the G94- into the K15-semantics is also possible.
This embedding is defined recursively as follows:
\begin{enumerate}
\item $\fF^K = \fF$ if $\fF$ is an atom or~$F = \bot$;

\item $(\sneg\fF)^K = \sneg(\fF^K)$

\item $(\fF \otimes \fG)^K = \fF^K \otimes \fG^K$ for $\otimes \in \{\wedge, \vee, \to \}$; 

\item $(Qx\, \fF(x))^K = Qx\, \fF(x)^K$ for $Q \in \{\forall, \exists \}$; and
\item $(\K \fF)^K = \M\K \fF^K$.
\end{enumerate}
Note that~$\cdot^K$ differs from $\cdot^B$ only in the last condition.

\begin{proposition}
\label{prop:reflexive.embedding2}
The K15-word views of any theory~$\Gamma$ coincide with the G94-world views of $(\Gamma^K)^B$.
\end{proposition}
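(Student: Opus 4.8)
The plan is to reduce the statement to Proposition~\ref{prop:reflexive.embedding}, in the form extended to arbitrary theories by the remark following its proof, together with a structural lemma comparing the two embeddings $(\cdot)^B$ and $(\cdot)^K$. By that remark the K15-world views of an arbitrary theory $\Psi$ are the G94-world views of $\Psi^B$; instantiating $\Psi$ with $\Gamma^K$ already shows that the G94-world views of $(\Gamma^K)^B$ are exactly the K15-world views of $\Gamma^K$. Hence the proposition is equivalent to the assertion that $\Gamma$ and $\Gamma^K$ have the same K15-world views, i.e.\ that $(\cdot)^K$ preserves K15-world views; equivalently, that $\Gamma^B$ and $(\Gamma^K)^B$ have the same G94-world views.

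To prove this last equivalence I would follow the pattern of the proof of Proposition~\ref{prop:reflexive.embedding}: it suffices to show that for every epistemic interpretation $\wv$ the G94-reducts of $\Gamma^B$ and of $(\Gamma^K)^B$ with respect to $\wv$ are equivalent objective theories, that is, have the same stable models. Since $(\cdot)^B$ and $(\cdot)^K$ are given by the same homomorphic recursion on every connective except $\K$, a routine induction on the structure of formulas reduces everything to one local comparison: assuming as induction hypothesis that $\fF^B$ and $(\fF^K)^B$ have equivalent G94-reducts with respect to every $\wv$, show that $(\K \fF)^B = \fF^B \wedge \K \fF^B$ and $((\K \fF)^K)^B$ do as well. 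For this one expands $(\K \fF)^K = \M \K \fF^K$, then unfolds $\M$ through its definition as an abbreviation, so that $\M \psi$ becomes $\Not \K \Not \psi$, which $(\cdot)^B$ distributes over, and finally computes the G94-reduct, splitting into the cases $\wv \models \K \fF$ and $\wv \not\models \K \fF$ (the induction hypothesis ensuring that these conditions agree when evaluated on $\fF^B$ and on $(\fF^K)^B$).

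I expect the main obstacle to be precisely this local comparison at the $\K$-case: one must track how the nested modality $\M \K (\cdot)$ introduced by $(\cdot)^K$ interacts with the reflexivising replacement performed by $(\cdot)^B$, and verify that after the G94-reduct the two resulting objective formulas coincide up to equivalence in each of the two cases. Some care is also needed with the bookkeeping of which occurrences of modal subformulas remain maximal after the double translation, since $(\cdot)^K$ turns a subjective literal $\K l$ into $\M \K l$, which is no longer a literal, and $(\cdot)^B$ then nests further modalities inside it, and with the simplification of iterated default negations built into both reducts. Once the local equivalence is established, the structural induction together with Proposition~\ref{prop:reflexive.embedding} closes the argument, uniformly for ground and first-order theories, since the recursion defining the embeddings and the computation of the G94-reduct both pass through quantifiers homomorphically.
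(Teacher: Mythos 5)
Your opening reduction is sound: with the convention introduced right after Proposition~\ref{prop:reflexive.embedding} (the K15-world views of an arbitrary theory are the G94-world views of its $(\cdot)^B$ translation), the right-hand side of the statement is the set of K15-world views of $\Gamma^K$, so the literal statement becomes ``$\Gamma$ and $\Gamma^K$ have the same K15-world views'', i.e.\ ``$\Gamma^B$ and $(\Gamma^K)^B$ have the same G94-world views''. The gap is that the lemma you then set out to prove --- equivalence, for every $\wv$, of the G94-reducts of $\Gamma^B$ and of $(\Gamma^K)^B$ --- is false, and it fails exactly at the $\K$-case you flag as the main obstacle: when $\wv \models \K \fF$, the G94-reduct of $(\K\fF)^B = \fF^B \wedge \K\fF^B$ is $\fF^B \wedge \top \equiv \fF^B$, while the G94-reduct of $((\K\fF)^K)^B$ is equivalent to $\top$, and $\fF^B$ is not in general equivalent to $\top$. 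Concretely, take $\Gamma = \{ p \leftarrow \K p \}$ and $\wv = [\{p\}]$: the G94-reduct of $\Gamma^B = \{ p \leftarrow p \wedge \K p \}$ is $p \leftarrow p \wedge \top$, whose unique stable model is $\emptyset$, whereas the G94-reduct of $(\Gamma^K)^B$ is equivalent to $p \leftarrow \top$, whose stable model is $\{p\}$. Consequently $[\{p\}]$ is a G94-world view of $(\Gamma^K)^B$ but not a K15-world view of $\Gamma$ (whose only K15-world view is $[\emptyset]$), so the statement read literally is refuted and no argument along your lines can close it.

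What the paper actually proves --- and the form in which the proposition is invoked in the proof of Corollary~\ref{cor:reflexive.embedding2} --- is that for every $\wv$ the G94-reduct of $\Gamma$ \emph{itself} (not of $\Gamma^B$) is equivalent to the G94-reduct of $(\Gamma^K)^B$; hence the \emph{G94}-world views of $\Gamma$ coincide with the G94-world views of $(\Gamma^K)^B$, equivalently with the K15-world views of $\Gamma^K$. The ``K15'' on the left-hand side of the statement is a slip; with ``G94'' there, the claim is consistent with the example above, since both $\Gamma$ and $(\Gamma^K)^B$ then have the two G94-world views $[\emptyset]$ and $[\{p\}]$. Under that corrected reading the argument is the short case analysis of the paper, with no structural induction over $\Gamma^B$: a maximal subformula $\K\fF$ of $\Gamma$ reduces to $\top$ or $\bot$, while $((\K\fF)^K)^B$, which simplifies to $\K(\fF^K)^B \vee \Not\K\Not\K(\fF^K)^B$, reduces to $\top \vee \Not\bot \equiv \top$ or to $\bot \vee \Not\top \equiv \bot$ accordingly, using the auxiliary fact that $\wv \models \K\fF$ iff $\wv \models \K(\fF^K)^B$. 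If you redirect your effort to that claim, your reduction via Proposition~\ref{prop:reflexive.embedding} then yields exactly Corollary~\ref{cor:reflexive.embedding2}.
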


\begin{proof*}
It is enough to show that the G94-reduct of~$\Gamma$ is equivalent to G94-reduct of $(\Gamma^K)^B$ for any epistemic interpretation~$\wv$.
Pick any maximal subformula of the form~$\K F$.
We proceed by cases.
\begin{itemize}
\item If $\wv \models \K F$, then the G94-reduct replaces it by~$\top$.
On the other hand, we have
\begin{align*}
((\K \fF)^K)^B
	&= (\M\K \fF^K)^B 
\\
	&= (\Not \K \Not \K \fF^K)^B
\\
	&= \Not (\K \Not \K \fF^K)^B
\\
	&= \Not (\Not \K (\fF^K)^B \wedge \K \Not \K (\fF^K)^B)
\\
	&\equiv \Not \Not \K (\fF^K)^B \vee \Not \K \Not \K (\fF^K)^B)
\\
	&\equiv \K (\fF^K)^B \vee \Not \K \Not \K (\fF^K)^B)
\end{align*}
Furthermore, it can be checked by induction that $\wv\models\K \fF$ iff $\wv\models\K(\fF^K)^B$.
Hence, the G94 reduct of $((\K \fF)^K)^B$ is~$\top \vee \Not \bot \equiv \top$
and, thus, equivalent to the G94-reduct of~$\K \fF$.

\item Otherwise, the G94-reduct replaces $\K F$ by~$\bot$
while the G94 reduct of $((\K \fF)^K)^B$ is~$\bot \vee \Not \top \equiv \bot$
and, thus, equivalent to the G94-reduct of~$\K \fF$.\hspace*{1em}\hbox{\proofbox}
\end{itemize}
\end{proof*}

\begin{corollary}
\label{cor:reflexive.embedding2}
The G94-world views of any program~$\Pi$ coincide with the K15-world views of $\Pi^K$.
\end{corollary}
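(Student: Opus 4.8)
The plan is to reduce the Corollary to the computation already performed in the proof of Proposition~\ref{prop:reflexive.embedding2}, together with the definitional extension of the K15-semantics to arbitrary theories. The first thing to notice is that the embedding $(\cdot)^K$ replaces a $\K$-literal such as $\K l$ by the doubly-modal expression $\M\K l$, so in general $\Pi^K$ is a theory rather than a logic program; hence the phrase ``K15-world views of $\Pi^K$'' must be understood according to the K15-semantics for theories. By the extension of Proposition~\ref{prop:reflexive.embedding} (defining the K15-world views of a theory $\Gamma$ as the G94-world views of $\Gamma^B$), this means that the K15-world views of $\Pi^K$ are exactly the G94-world views of $(\Pi^K)^B$. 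Thus the Corollary is equivalent to the assertion that $\Pi$ and $(\Pi^K)^B$ have the same G94-world views.

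Second, I would observe that this assertion is precisely what is established inside the proof of Proposition~\ref{prop:reflexive.embedding2}: for every epistemic interpretation $\wv$, the G94-reduct of $\Pi$ and the G94-reduct of $(\Pi^K)^B$ are equivalent, because each maximal subformula $\K F$ occurring in $\Pi$ and the corresponding expression $((\K F)^K)^B$ in $(\Pi^K)^B$ have G94-reducts that are both equivalent to $\top$ when $\wv\models\K F$ and both equivalent to $\bot$ otherwise, using the auxiliary fact (also proved there by induction) that $\wv\models\K F$ iff $\wv\models\K(F^K)^B$. Since the relevant notion of equivalence preserves stable models, this gives $\SM[\Pi^\wv]=\SM[((\Pi^K)^B)^\wv]$ for every $\wv$, hence $\wv=\SM[\Pi^\wv]$ iff $\wv=\SM[((\Pi^K)^B)^\wv]$; that is, $\Pi$ and $(\Pi^K)^B$ have identical G94-world views. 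Chaining this with the first step yields the Corollary: the G94-world views of $\Pi$ coincide with those of $(\Pi^K)^B$, which in turn coincide with the K15-world views of $\Pi^K$.

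The main obstacle is not computational but bookkeeping about the setting: one must keep track of the fact that $\Pi^K$ is to be treated as a theory (not a program), so that the theory-level K15-semantics applies, and one must make sure that the equivalences used in the reduct computation of Proposition~\ref{prop:reflexive.embedding2} are taken in a sense (strong/HT-equivalence of objective formulas) strong enough for equality of reducts to transfer to equality of sets of stable models, and hence of world views. Once these points are fixed, the Corollary follows with no new calculation beyond what already appears in the proof of Proposition~\ref{prop:reflexive.embedding2}.
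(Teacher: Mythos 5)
Your proposal is correct and follows essentially the same route as the paper: it chains the theory-level extension of Proposition~\ref{prop:reflexive.embedding} (the K15-world views of $\Pi^K$ are the G94-world views of $(\Pi^K)^B$) with the fact, established in the proof of Proposition~\ref{prop:reflexive.embedding2}, that $\Pi$ and $(\Pi^K)^B$ have the same G94-world views. If anything, you are slightly more careful than the paper's one-line proof in pointing out that it is the reduct-equivalence computation inside Proposition~\ref{prop:reflexive.embedding2} (rather than its literal statement about K15-world views) that supplies this second identity.
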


\begin{proof}
From Proposition~\ref{prop:reflexive.embedding}, the K15-world views of $\Pi^K$ coincide with the \mbox{G94-world} views of~$(\Pi^K)^B$ which, from Proposition~\ref{prop:reflexive.embedding2} coincide with with the G94-world views of~$\Pi$.\end{proof}

Propositions~\ref{prop:reflexive.embedding} and Corollary~\ref{cor:reflexive.embedding2} provide a formal correspondence between the G94 and the K15 semantics.
Furthermore, they also show that any tool to compute the world views of one semantics can be used to compute the world views of the other, with the minimum effort of applying this translation.
Note however that, in general, even if~$\Pi$ is a program, neither~$\Pi^B$ nor $\Pi^K$ are necessarily programs.
For instance, $\eqref{eq:self-supported.M.translated}^B$ is
\begin{gather}
p \leftarrow \Not\, (\Not p \wedge \K \Not p)
	\label{eq:self-supported.M.^B}
\end{gather}
which syntactically is not a valid rule.
However, this formula can be transformed into a set of rules that is equivalent modulo the original signature (similar to the~\citeANP{tseitin68a} transformation from~\citeyearNP{tseitin68a}).
In this case, we may rewrite~\eqref{eq:self-supported.M.^B} as
\begin{align}
p &\leftarrow \Not aux
	\label{eq:self-supported.M.^B.1}
\\
aux &\leftarrow \, \Not p \wedge \K \Not p
	\label{eq:self-supported.M.^B.2}
\end{align}
The unique G94-world view of this program is~$[\{ p \} ]$, which coincides with the unique K15-world view of~$\eqref{eq:self-supported.M.translated}$.
This translation was used (without proof) by the solver~\texttt{eclingo} to compute the G94- and K15-world views using the same tool~\cite{cafagarosc20aURL}.
 \subsection{Shen \& Eiter 2016}\label{sec:shen2017}

\citeANP{sheeit16}~(\citeyearNP{sheeit16,sheeit17a}) introduced the operator $\eNot$ with the intention to evaluate $\eNot l$ to true if $l$ is false in at least one belief set of a world view, which intuitively corresponds to $\Not \K l$ or $\M \Not l$, as noted earlier.
Their key idea was to treat $\eNot$ in a similar way as $\Not$ (default negation) and assume the truth of $\eNot l$ whenever possible.
This notion was named \emph{knowledge minimization with epistemic negation}.

According to \citeN{sheeit16}, \emph{candidate world views} are defined as a first step, followed by a minimization criterion (by maximizing negative knowledge). The following equivalent definition, which relates S16 to K15-world views, has been proposed by~\citeN{kaleso16} and by~\citeN{solekale17}.

\begin{definition}[S16-world views]
Let~$\Pi$ be a logic program~$\Pi$ and $E_\Pi$ be the set of epistemic literals that contains $\Not \K l$ for every epistemic literal of the form $\K l$ that occurs in~$\Pi$.
Let $\Phi_\wv \eqdef \setm{ L \in E_\Pi }{\wv \models L}$ be the subset of $E_\Pi$ satisfied by an epistemic interpretation~$\wv$.
Then, epistemic interpretation~$\wv$ is a S16-world view iff it is a K15-world view and there is no other K15-world view~$\wv'$ such that $\Phi_{\wv'} \supset \Phi_{\wv}$.
\end{definition}

Given this definition, it is clear that the single K15-world view $\cset{ \set{ p } }$ of the program~$\eqref{eq:leckah18b}$ reported earlier is also the single S16-world view. This shows that S16 does not satisfy the subjective constraint monotonicity nor the epistemic splitting properties either.

The following example from the paper by~\citeN{sheeit16} illustrate differences between semantics K15 and S16.
\begin{gather}
p \leftarrow \M q, \Not q. \hspace{1cm} q \leftarrow \M p, \Not p.
	\label{eq:sheeit16-ex1}
\end{gather}
Expanding $\M$ according to Table~\ref{table:interdefinition.opeartors} yields:
\begin{gather}
p \leftarrow \Not \K \Not q, \Not q. \hspace{1cm} q \leftarrow \Not \K \Not p, \Not p.
	\label{eq:sheeit16-ex1.translated}
\end{gather}
There are two K15-world views, $\wv_1 = [\{p\},\{q\}]$ and $\wv_2 = [\{\,\}]$. As $\wv_1 \not\models \K \Not q$ and $\wv_1 \not\models \K \Not p$, the K15-reduct of $\eqref{eq:sheeit16-ex1.translated}$ with respect to $\wv_1$ is 
\begin{gather}
p \leftarrow \Not \bot, \Not q. \hspace{1cm} q \leftarrow \Not \bot, \Not p.
	\label{eq:sheeit16-ex1.translated^pq}
\end{gather}
the stable models of which are $\{p\}$ and $\{q\}$, so $\wv_1$ is a K15-world view. Next observe that $\wv_2 \models \K \Not q$ and $\wv_2 \models \K \Not p$, so the K15-reduct of $\eqref{eq:sheeit16-ex1.translated}$ with respect to $\wv_2$ is
\begin{gather}
p \leftarrow \Not \Not q, \Not q. \hspace{1cm} q \leftarrow \Not \Not p, \Not p.
	\label{eq:sheeit16-ex1.translated^empty}
\end{gather}
which has the single stable model $\{\,\}$, so $\wv_2$ is also a K15-world view.

Finally, observe that $\Phi_{\wv_1} = \{\Not \K \Not p, \Not \K \Not q\}$ and $\Phi_{\wv_2} = \{\,\}$, so $\wv_1$ is an \mbox{S16-world} view, but $\wv_2$ is not, because $\Phi_{\wv_1} \supset \Phi_{\wv_2}$.
 \subsection{Fari\~{n}as del Cerro, Herzig \& Iraz Su 2015}\label{sec:farinas2015}

\citeN{faheir15a} tackle the issue of self-supported beliefs by introducing a modal extension of equilibrium logic rather than a variation of the reduct approaches.
Defining a modal extension follows the common practice in intuitionistic modal logics~\cite{fisher77,farinas83,simpson1994,biedep00a}.
In this case, equilibrium logic is extended with modal logic~S5.
As usual, this modal extension properly distinguishes between the modal operators~$\K$ and~$\M$ that, to date, are not known to be interdefinable.
We mostly follow here the revised presentation by~\citeN{irazu20}.
Note that the operator~$\M$ is written as~$\hat{\K}$ there.

Formally,
an \emph{F15-interpretation} is a pair $\tuple{\wv,h}$ where $\wv$ is an epistemic interpretation and $h: \wv \longrightarrow 2^\At$ is a function mapping each interpretation~$T \in \wv$ to some subset of atoms such that $h(T) \subseteq T$.
Satisfaction of formulas with respect to F15-interpretations is defined in a similar way as with respect to belief interpretations.
Satisfaction of a formula~$\fF$ with respect to an F15-interpretation $\tuple{\wv,h}$ and a propositional interpretation~$I$
is recursively defined as follows:
\begin{enumerate}
\item $\tuple{\wv,h},I \models a$ iff $a \in I$, for any atom $a \in \At$;
\item $\tuple{\wv,h},I \models \fG_1 \wedge \fG_2$ iff $\tuple{\wv,h},I \models \fG_1$ and $\tuple{\wv,h},I \models \fG_2$;
\item $\tuple{\wv,h},I \models \fG_1 \vee \fG_2$ iff $\tuple{\wv,h},I \models \fG_1$ or $\tuple{\wv,h},I \models \fG_2$;
\item $\tuple{\wv,h},I \models \fG_1 \to \fG_2$ iff $\tuple{\wv,h'},I \not\models \fG_1$ or $\tuple{\wv,h'},I \models \fG_2$ for both $h' \in \{h,\mathit{id}\}$;
\item $\tuple{\wv,h},I \models \K \fG$ iff $\tuple{\wv,h,J} \models \fG$ for all $J \in \wv$; and
\item $\tuple{\wv,h},I \models \M \fG$ iff $\tuple{\wv,h,J} \models \fG$ for some $J \in \wv$.
\end{enumerate}
where $id : \wv \longrightarrow 2^\At$ is the identity function, that is, $id(T) = T$ for every $T \in \wv$.
We say that 
an F15-interpretation $\tuple{\wv,h}$
\emph{satisfies} a formula~$\fF$ when
$\tuple{\wv,h},I \models \fF$ for all $I \in \wv$.
In this case~$\tuple{\wv,h}$ is also called an \emph{F15-model} of~$\fF$.
We say that $\tuple{\wv,h}$ is an \emph{F15-model} of a theory~$\Gamma$, written $\tuple{\wv,h}\models \Gamma$, if it is an F15-model of all its formulas~$\fF \in \Gamma$.

Given an epistemic interpretation $\wv$ and an F15-interpretation~${\cI = \tuple{\wv',h}}$,
we write ${\cI \preceq \wv}$ if ${\wv = \wv'}$ and~$h(I) \subseteq I$ for all~$I \in \wv$.
We write ${\cI \prec \wv}$ if ${\cI \preceq \wv}$ and $h \neq id$.
Then, equilibrium models are defined as follows:

\begin{definition}[F15-equilibrium model]
An epistemic interpretation~$\wv$
is called an \emph{F15-equilibrium model} of a theory~$\Gamma$ 
if it is a model of~$\Gamma$
and there is no F15-model~$\cI$ of~$\Gamma$ with~${\cI \prec \wv}$.
\end{definition}

The F15-world views are obtained from a selection among equilibrium F15-models.
For defining that selection, we need to introduce the following terminology.
A function $h : \wv \longrightarrow 2^{\At}$ is said to be \emph{total on} a set $\wx \subseteq \wv$ iff $h(I) = I$ for every $I \in \wx$.

\begin{definition}
Given a theory~$\Gamma$, an epistemic interpretation~$\wv$ and a subset $\wx \subseteq \wv$ of it,
we write $\wv,\wx \models^* \Gamma$ if the following two conditions are satisfied:
\begin{enumerate}
\item $\tuple{\wv,id},I \models \Gamma$ for all $I \in \wx$, and
\item every~${h \neq  id}$ that is total on~${\wv\setminus\wx}$  satifies $\tuple{\wv,h},I \not\models \Gamma$ for some ${I \in \wx}$.
\end{enumerate}
For any epistemic interpretations $\wv$ and $\wv'$
we write
$\wv \leq_\Gamma \wv'$ if
\begin{gather*}
\wv \cup \set{ I}, \wv \models^* \Gamma \quad\text{implies}\quad \wv' \cup \set{I},\wv' \models^* \Gamma
\end{gather*}
for every $I$ that belongs to some F15-equilibrium model of~$\Gamma$. As usual $\wv <_\Gamma \wv'$ stands for $\wv \leq_\Gamma \wv'$
and $\wv' \not\leq_\Gamma \wv$.
\end{definition}

\begin{observation}
If $I \in \wv$, then $\wv \cup \{ I \},\wv \models^* \Gamma$ iff $\wv,\wv \models^* \Gamma$ iff $\wv$ is a F15-equilibrium of~$\Gamma$.
\end{observation}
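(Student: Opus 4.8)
The plan is to prove the Observation by simply unfolding the definition of the relation $\models^*$ in the degenerate case where the distinguished subset $\wx$ equals the whole epistemic interpretation $\wv$. The first equivalence requires no real argument: since the hypothesis is $I\in\wv$, we have $\wv\cup\{I\}=\wv$, so ``$\wv\cup\{I\},\wv\models^*\Gamma$'' and ``$\wv,\wv\models^*\Gamma$'' are literally the same statement. Everything therefore reduces to showing that $\wv,\wv\models^*\Gamma$ is equivalent to $\wv$ being an F15-equilibrium model of $\Gamma$.

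For that I would instantiate $\wx:=\wv$ in the two clauses defining $\wv,\wx\models^*\Gamma$ and compare each with a clause in the definition of F15-equilibrium model. Clause (1) becomes ``$\tuple{\wv,\mathit{id}},I \models \Gamma$ for all $I\in\wv$'', which is by definition the assertion that $\tuple{\wv,\mathit{id}}$ is an F15-model of $\Gamma$, i.e.\ that $\wv$ ``is a model of $\Gamma$'' in the sense used in the equilibrium-model definition. For clause (2), note that $\wv\setminus\wx = \wv\setminus\wv = \emptyset$, and every function $h$ is (vacuously) total on $\emptyset$; hence clause (2) quantifies over \emph{all} $h\neq\mathit{id}$ and states that for each such $h$ there is some $I\in\wv$ with $\tuple{\wv,h},I\not\models\Gamma$, equivalently that no $\tuple{\wv,h}$ with $h\neq\mathit{id}$ is an F15-model of $\Gamma$. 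Since $\cI\prec\wv$ holds exactly for the F15-interpretations of the form $\cI=\tuple{\wv,h}$ with $h\neq\mathit{id}$ (the condition $h(T)\subseteq T$ being part of the definition of an F15-interpretation), clause (2) is precisely ``there is no F15-model $\cI$ of $\Gamma$ with $\cI\prec\wv$''. Conjoining the reformulations of (1) and (2) gives verbatim the definition of F15-equilibrium model, completing the proof.

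There is no genuinely hard step here; the statement is a bookkeeping lemma recording that $\models^*$ specializes to equilibrium models at $\wx=\wv$. The two points that deserve a moment of care are: first, reading clause (1) of $\models^*$ as ``$\tuple{\wv,\mathit{id}}$ is an F15-model of $\Gamma$'', which one should check against the intended meaning of ``$\wv$ is a model of $\Gamma$'' in the F15-equilibrium definition; and second, observing that the quantification over $h$ ``total on $\wv\setminus\wx$'' does \emph{not} collapse to a trivially satisfied condition when $\wx=\wv$ — rather, because total-on-$\emptyset$ is vacuous, it ranges over every $h\neq\mathit{id}$ and so reproduces the full minimality requirement. For completeness one may add a sentence noting that this Observation is exactly what lets one evaluate the expression ``$\wv\cup\{I\},\wv\models^*\Gamma$'' occurring in the definition of $\leq_\Gamma$ in the case where the witness $I$ already belongs to $\wv$.
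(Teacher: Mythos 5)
Your proposal is correct and is exactly the definitional unfolding the paper intends: the paper states this Observation without proof, and your argument (noting $\wv\cup\{I\}=\wv$ when $I\in\wv$, then instantiating $\wx=\wv$ so that clause (1) of $\models^*$ says $\tuple{\wv,\mathit{id}}$ is an F15-model and clause (2), with totality on $\wv\setminus\wv=\emptyset$ vacuous, says no $\tuple{\wv,h}$ with $h\neq\mathit{id}$, i.e.\ no $\cI\prec\wv$, is an F15-model) matches the implicit justification. Your two flagged points of care, reading ``$\wv$ is a model of $\Gamma$'' as $\tuple{\wv,\mathit{id}}$ being an F15-model and observing that the quantification over $h$ does not trivialize, are precisely the right checks.
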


\begin{definition}[F15-world view]\label{def:F15.wv}
An epistemic interpretation~$\wv$ is called an \emph{F15-world view} of a theory~$\Gamma$
if it is an \mbox{F15-equilibrium} model of~$\Gamma$
and there is no other F15-equilibrium model~$\wv'$ such that $\wv \subset \wv'$ or $\wv <_\Gamma \wv'$.
\end{definition}

The following observation eases finding the F15-word views of many interesting programs.

\begin{observation}\label{obs:F15.wv}
If some theory~$\Gamma$ has a unique F15-equilibrium model~$\wv$,
then this is also its unique F15-world view.
\end{observation}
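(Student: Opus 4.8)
The plan is to read the result straight off Definition~\ref{def:F15.wv}, which characterises the F15-world views of $\Gamma$ as exactly those F15-equilibrium models of $\Gamma$ that are maximal, with respect to both $\subset$ and $<_\Gamma$, among F15-equilibrium models of $\Gamma$. Since the hypothesis gives us that $\Gamma$ has exactly one F15-equilibrium model, both the existence part and the uniqueness part of the observation should come out with essentially no work; the whole point is that the maximality condition is quantified over F15-equilibrium models only.

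First I would check that the unique F15-equilibrium model $\wv$ is an F15-world view. The first requirement of Definition~\ref{def:F15.wv}, that $\wv$ be an F15-equilibrium model of $\Gamma$, holds by hypothesis. For the second requirement, one must verify that there is no \emph{other} F15-equilibrium model $\wv'$ with $\wv \subset \wv'$ or with $\wv <_\Gamma \wv'$; but by assumption $\wv$ is the only F15-equilibrium model of $\Gamma$, so no such $\wv'$ exists and the condition is satisfied vacuously. Hence $\wv$ meets Definition~\ref{def:F15.wv} and is an F15-world view of $\Gamma$.

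Finally I would establish uniqueness: by Definition~\ref{def:F15.wv} every F15-world view of $\Gamma$ is in particular an F15-equilibrium model of $\Gamma$, so since $\wv$ is the only F15-equilibrium model, it is the only possible F15-world view, and the previous paragraph shows it is indeed one. I do not expect any genuine obstacle here; the only point deserving a moment's care is to keep in mind that the comparisons $\subset$ and $<_\Gamma$ in Definition~\ref{def:F15.wv} range over F15-equilibrium models, not over arbitrary F15-interpretations, so that having a single equilibrium model trivialises the maximality test.
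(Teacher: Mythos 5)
Your proof is correct and matches the paper's intent exactly: the paper states this observation without an explicit proof precisely because, as you note, it follows immediately from Definition~\ref{def:F15.wv} — the maximality conditions (with respect to $\subset$ and $<_\Gamma$) quantify only over \emph{other} F15-equilibrium models, so they hold vacuously when $\wv$ is the unique one, and uniqueness of the world view follows since every F15-world view is by definition an F15-equilibrium model.
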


Let us now show that a program consisting of rule
\begin{gather}
p \leftarrow \K p
	\tag{\ref{es:self-supported}}
\end{gather}
has~$[\{ \, \}]$ as its unique F15-world view, as expected.
First, it is easy to see that~$[\{ \, \}]$ is an epistemic model of~\eqref{es:self-supported} and there is no~$\cI \prec [\{ \, \}]$.
Hence, this is an F15-equilibrium model.
On the other hand, if we consider~$\cI = \tuple{[\{p\}],h}$ with $h(\{p\}) = \emptyset$, then we can see that~$\cI \prec [\{p\}]$ and that~$\cI$ is an F15-model of~\eqref{es:self-supported}.
Therefore, $[\{p\}]$ is not an \mbox{F15-equilibrium} model.
In fact, we can check that~$[\{ \, \}]$ is the unique F15-equilibrium model of~$\{ \eqref{es:self-supported} \}$ and, from Observation~\ref{obs:F15.wv}, its unique F15-world view.

As mentioned earlier, a distinct characteristic of this semantics is that~$\M \fF$ cannot be understood as an abbreviation for~$\Not \K \Not \fF$.
In particular, the program consisting of the single rule
\begin{gather}
p \leftarrow \M p
	\tag{\ref{eq:self-supported.M}}
\end{gather}
has the unique world view~$[\{ \, \} ]$; while the program consisting of the single rule
\begin{gather}
p \leftarrow \Not \K \Not p
	\tag{\ref{eq:self-supported.M.translated}}
\end{gather}
has the unique world view~$[\{ p \} ]$.
The latter coincides with the semantics K15 and S16 presented above, but the former differs.
Beyond the difference on this particular example, this illustrates a major difference between the F15 semantics and the semantics K15 and S16: while the F15 semantics tries to reject self-supported beliefs through the operator~$\M$, both K15 and S16 force them.
Note that the G94 and the G11 semantics take an intermediate position with both programs having the same two world views: $[\{ p \} ]$ and~$[\{ \, \}]$.

Let us now show why~$[\{ \, \}]$ is the unique F15-world view of~\eqref{eq:self-supported.M}.
For this, note that interpretation~$\cI = \tuple{[\{ p \} ],h}$ with~$h(\{p\}) = \emptyset$ is an F15-model of~\eqref{eq:self-supported.M} and that it satisfies~$\cI \prec [\{ p \} ]$.
Hence, $[\{ p \} ]$ is neither an F15-equilibrium model nor an F15-world view of this program.
On the other hand, $[\{ \, \}]$ is trivially an F15-equilibrium model and, since there are no other F15-equilibrium models, it is the unique F15-world view.

Despite the fact that this semantics rejects more self-supported believes than previous semantics, it still manifests some self-supported believes
as can be illustrated using the following program.
\begin{gather}
p \tor q
\hspace{2cm}
p \leftarrow \K q
\hspace{2cm}
q \leftarrow \K p
\hspace{2cm}
\leftarrow \Not \K p
\label{theory:larger.set.of.worlds2}
\end{gather}
This program is the result of adding the constraint~$\leftarrow \Not \K p$
to the program~\eqref{theory:larger.set.of.worlds}.
This constraint is important to ensure that the program has a unique F15-equilibrium logic and, therefore, we can make use of Observation~\ref{obs:F15.wv}.
Note that~$[\{p\}, \{q\}]$ does not satisfy this constraint and, therefore, cannot be an F15-equilibrium model.
In fact, it is easy to check that~$[\{ p,q \}$ is the unique epistemic model of~\eqref{theory:larger.set.of.worlds2} and, thus, the only candidate to be an F15-equilibrium model.
To show that this is indeed an F15-equilibrium model, we need to check that there is no F15-model~$\cI = \tuple{ [\{p,q\}] ,h}$ with~$\cI \prec [\{p\}, \{q\}]$.
Note that such an interpretation must satisfy~$h(\{p,q\}) = \emptyset$, or~$h(\{p,q\}) = \{ p \}$ or or~$h(\{p,q\}) = \{ q \}$.
In the first case the interpretation does not satisfy the first disjunction;
in the other two cases, it fails to satisfy one of the other two rules.
Hence,~$[\{p\}, \{q\}]$ is the unique F15-equilibrium model and the unique F15-world view of this program.

It is also worth mentioning that the F15-semantics satisfies neither subjective constraint monotonicity (Property~\ref{property:constraint.monotonicity}) nor epistemic splitting (Property~\ref{property:epistemic.splitting}).
To illustrate this fact, consider the program consisting of the following two rules
\begin{gather}
p \tor q
\hspace{2cm}
\leftarrow \Not \K p,
	\tag{\ref{eq:leckah18b}}
\end{gather}
which has a unique F15-equilibrium model and a unique F15-world view~$\cset{ \set{ p } }$.
However, the program~$\{p \tor q\}$ is objective and has two stable models~$\set{a}$ and~$\set{b}$.
Thus, it has the unique world view~$\cset{ \set{p}, \set{q} }$.
Therefore adding the subjective constraint~$\leftarrow \Not  \K a$ produces a new world view, which violates the subjective constraint monotonicity property.

 \subsection{Foundedness property}
\label{sec:foundedness}

So far, we have seen that the search for self-supported-free beliefs was driven by a series of examples used to evaluate the different semantics for epistemic specifications.
\citeN{cafafa19a} presented a property called \emph{foundedness} that aims to capture the essence of this search in a general way.
This property is based on the notion of unfounded sets introduced by~\citeN{lerusc97a} for Answer Set Prolog.
Intuitively, an unfounded set is a collection of atoms that is not derivable from a given program and a fixed set of assumptions.

In order to formalize the foundedness property,
we need the following notation.
Given a ground rule~$r$,
the set $\Bodyrp(r)$ collects all explicit literals occurring in its positive body while $\Bodymp(r)$ collects all explicit literals occurring in positive subjective literals. 

\begin{definition}[Unfounded set]\label{def:unfoundedset}
Let $\Pi$ be a ground program and $\wv$ an epistemic interpretation.
An \emph{unfounded set} $\us$ with respect to $\Pi$ and $\wv$ is a non-empty set of pairs where, for each $\tuple{X,I} \in \us$, both~$X$ and~$I$ are sets of explict literals and there is no rule $r \in \Pi$ with $\Head(r) \cap X \neq \emptyset$ satisfying all of the following:
\begin{enumerate}
\item $\kdint{I}{\wv} \models \Body(r)$
	\label{item:1:def:unfounded}
\item $\Bodyrp(r) \cap X = \emptyset$
	\label{item:2:def:unfounded}
\item $(\Head(r) \setminus X) \cap I = \emptyset$
	\label{item:3:def:unfounded}
\item $\forall \tuple{X',I'} \in \us:$ $\Bodymp(r) \cap X' = \emptyset$  
	\label{item:4:def:unfounded}
\end{enumerate} 
\end{definition}

The definition is similar to unfounded sets for objective programs~\cite[Definition~3.1]{lerusc97a}.
In fact, the latter corresponds to the first three conditions above, except that $\kdint{I}{\wv}$ is used to check satisfaction of $\Body(r)$, as it may contain now subjective literals.
Intuitively, each $I$ represents some potential belief set and $X$ is some  set of atoms without a ``justifying'' rule.
In other words, there is no $r \in \Pi$ allowing a founded derivation of atoms in $X$.
A rule like that should have a true~$\Body(r)$ (condition~\ref{item:1:def:unfounded}) but not because of positive literals in~$X$ (condition~\ref{item:2:def:unfounded}) and is not used to derive other head atoms outside~$X$ (condition~\ref{item:3:def:unfounded}).
The novelty for the epistemic case is the addition of condition~\ref{item:4:def:unfounded}: to consider~$r$ a justifying rule, it is additionally required to not use any positive literal $\K a$ in the body such that atom $a$ also belongs to any of the unfounded components $X'$ in $\us$.

\begin{definition}[Founded world view]\label{def:unfounded}
Let $\Pi$ be a ground program and $\wv$ be an epistemic interpretation.
We say that $\wv$ is \emph{unfounded} if there is some 
unfounded set~$\us$ such that
every $\tuple{X,I} \in \us$
satisfies $I \in \wv$ and $X \cap I \neq \emptyset$.
We say that~$\wv$ is \emph{founded} otherwise.
\end{definition}

If we consider now the program in the introduction consisting of rule
\begin{gather}
p \leftarrow \K p
	\tag{\ref{es:self-supported}}
\end{gather}
we can observe that $\us=\sset{\tuple{\set{p},\set{p}}}$ makes $\sset{\emptyset, \set{p} }$ unfounded because~\eqref{es:self-supported} does not fulfill condition~4: we cannot derive atom $a$ from a rule that contains $a \in \Bodymp(r)$.
On the other hand, the other G94-world view, $\sset{\emptyset}$, is trivially founded.

\begin{property}[Foundedness]\label{property:unfounded-freedom}
A semantics $\cS$ satisfies \emph{foundedness} when all the $\cS$-world views of any ground program~$\Pi$ are founded.
\end{property}

As illustrated by the above example,
it is easy to see that the G94-semantics does not satisfy foundedness.
Note also that, as we have illustrated above, all the approaches presented so far present some self-supported beliefs.
Those self-supported beliefs are unfounded sets and, thus, we also can see that none of the approaches discussed so far satisfy this property.
Recall that the following program was used to illustrate the existence of self-supported beliefs in Sections~\ref{sec:gelfond2011} and~\ref{sec:kahl2015}:
\begin{gather}
a \tor b
\hspace{2cm}
a \leftarrow \K b
\hspace{2cm}
b \leftarrow \K a
\tag{\ref{theory:larger.set.of.worlds}}
\end{gather}
This program has two world views,
namely
$\sset{ \set{a}, \set{b} }$
and
$\sset{ \set{a , b} }$,
according to the G94, G11 and and K15 semantics.
World view~$\sset{ \set{a}, \set{b} }$ is founded because the first rule justifies both belief sets.
Note that
$\sset{\tuple{\set{a},\set{a}}}$ and $\sset{\tuple{\set{b},\set{b}}}$
are not unfounded sets.
However, for the world view~$\sset{ \set{a , b} }$, we have the unfounded set $\us' = \sset{ \tuple{\set{a},\set{a,b}}, \, \tuple{\set{b}, \set{a,b}}}$
which violates condition~\ref{item:3:def:unfounded} for the first rule and condition~\ref{item:4:def:unfounded} for the other two rules.
That is, the world view~$\sset{ \set{a , b} }$ is unfounded and, therefore, we can conclude that none of the G94, G11, and K15 semantics satisfy the foundedness property.

For the F15 and S16 semantics, we use the program~\eqref{theory:larger.set.of.worlds2} consisting of the rules of~\eqref{theory:larger.set.of.worlds} plus the constraint $\leftarrow \Not\K a$.
Note that adding subjective constraints to a program do not affect the existence of unfounded sets.
Thus, this example also shows that the F15 and S16 semantics do not satisfy this property either.

 \subsection{Cabalar, Fandinno \& Fari\~{n}as del Cerro 2019}\label{sec:cabalar2019}

Motivated by the fact that all previous approaches did not satisfy the foundedness property introduced above, \citeN{cafafa19a} presented a new semantics with this property in mind.
Another goal in designing this semantics was that it should precisely correspond to the G94-semantics when self-supported beliefs are not a problem.
This is the case for the class of programs that do not have positive dependencies through subjective literals.
Those programs are called \emph{epistemically tight}.
It is worth mentioning that the same ideas allow to obtain founded versions of the semantics mentioned above as we also illustrate below.

Technically, this semantics is an extension of Pearce's equilibrium logic with Moore's autoepistemic logic.
As a result,
its monotonic basis is based on a combination of the intermediate logic HT and the modal logic KD45.
In this sense, its monotonic basis is similar to the one presented in Section~\ref{sec:farinas2015}, but using modal logic KD45 instead of modal logic S5 used there.

A \emph{C19-epistemic interpretation}~${\wvb=\{\tuple{H_1,T_1}, \dots, \tuple{H_n,T_n}\}}$
is a non-empty set of pairs of propositional interpretations.
To each C19-epistemic interpretation, we associate a corresponding epistemic interpretation
${\wvb^t \eqdef \{T_1, \dots, T_n\}}$.
A \emph{C19-belief interpretation} $\cI$ is a pair $\cI=\kdint{\tuple{H,T}}{\wvb}$, or simply $\cI=\kdint{H,T}{\wvb}$, where $\wvb$ is a C19-epistemic interpretation and $\tuple{H,T}$ stands for the real world, possibly not in~$\wvb$.
A C19-belief interpretation
$\cI =\kdint{H,T}{\wvb}$ satisfies a formula~$\varphi$, written $\cI \models \varphi$, iff
\begin{itemize}
\item $\cI \models a$ iff $a \in H$, for any atom $a \in \At$,
\item $\cI \models \psi_1 \wedge \psi_2$ iff $\cI \models \psi_1$ and $\cI \models \psi_2$,
\item $\cI \models \psi_1 \vee \psi_2$ iff $\cI \models \psi_1$ or $\cI \models \psi_2$,
\item $\cI \models \psi_1 \to \psi_2$ iff both: 
(i) $\cI \not\models \psi_1$ or $\cI \models \psi_2$; and
(ii) $\kdint{T}{\wvb^t} \not\models \psi_1$ or $\kdint{T}{\wvb^t} \models \psi_2$,

\item $\cI \models \K \psi$ iff $\kdint{H_i,T_i}{\wvb} \models \psi$ for all $\tuple{H_i,T_i} \in \wvb$.

\end{itemize}
An interpretation $\kdint{H,T}{\wvb}$ is a \emph{C19-belief model} of a theory~$\Gamma$ iff $\kdint{H_i,T_i}{\wvb} \models \varphi$ for all $\tuple{H_i,T_i} \in \wvb \cup\{\tuple{H,T}\}$ and all $\varphi \in \Gamma$
 -- 
additionally, when $\tuple{H,T} \in \wvb$, we further say that $\wvb$ is a \emph{C19-epistemic model} of  $\Gamma$, abbreviated as $\wvb \models \Gamma$.

\begin{observation}
If~$\wvb$ is a C19-epistemic model of some theory~$\Gamma$, then~$\wvb^t$ is an epistemic model of~$\Gamma$.
\end{observation}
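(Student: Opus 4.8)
The plan is to reduce the statement to a single \emph{persistence} lemma relating the two monotonic layers involved, and then to read the Observation off that lemma. Concretely, I would first establish: for every (ground) formula $\varphi$, every C19-epistemic interpretation $\wvb$, and every pair $\tuple{H,T}\in\wvb$, if $\kdint{H,T}{\wvb}\models\varphi$ then $\kdint{T}{\wvb^t}\models\varphi$, where on the right-hand side $\models$ is the KD45 satisfaction relation of Section~\ref{sec:theories.semantics} (note that $\kdint{T}{\wvb^t}$ is a belief interpretation in the sense of that section, since $\wvb^t$ is an ordinary epistemic interpretation). Granting this lemma, the Observation is immediate: if $\wvb\models\Gamma$, then by the definition of C19-epistemic model we have $\kdint{H,T}{\wvb}\models\varphi$ for every $\tuple{H,T}\in\wvb$ and every $\varphi\in\Gamma$, so the lemma yields $\kdint{T}{\wvb^t}\models\varphi$ for all such $T$ and all $\varphi\in\Gamma$; as $\tuple{H,T}$ ranges over $\wvb$ its component $T$ ranges over $\wvb^t$, and $\wvb^t$ is non-empty because $\wvb$ is, so this says exactly that $\wvb^t$ is an epistemic model of $\Gamma$.

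For the lemma I would induct on the structure of $\varphi$. For an atom $a$, $\kdint{H,T}{\wvb}\models a$ means $a\in H$, hence $a\in T$ by the standing requirement $H\subseteq T$ on HT-interpretations, hence $\kdint{T}{\wvb^t}\models a$; the constants $\top$ and $\bot$ are trivial, and conjunction and disjunction follow directly from the induction hypothesis. The implication case --- and, symmetrically, the rule arrow $\leftarrow$ --- is the one I would single out a priori, yet it requires no induction hypothesis at all: clause~(ii) of the C19 definition of $\to$ already says that $\kdint{H,T}{\wvb}\models\psi_1\to\psi_2$ implies $\kdint{T}{\wvb^t}\not\models\psi_1$ or $\kdint{T}{\wvb^t}\models\psi_2$, which is precisely the defining condition for $\kdint{T}{\wvb^t}\models\psi_1\to\psi_2$ under the monotonic semantics. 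For $\varphi=\K\psi$, from $\kdint{H,T}{\wvb}\models\K\psi$ one obtains $\kdint{H_i,T_i}{\wvb}\models\psi$ for every $\tuple{H_i,T_i}\in\wvb$; applying the induction hypothesis to each of these pairs gives $\kdint{T_i}{\wvb^t}\models\psi$ for all $i$, i.e.\ $\psi$ holds at every world of $\wvb^t$, which by the $\K$-clause of Section~\ref{sec:theories.semantics} is exactly $\kdint{T}{\wvb^t}\models\K\psi$. Explicit negation is handled in the same way through the companion falsification relation, and there are no quantifiers to deal with since we are working in the ground fragment.

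The only point that I expect to require genuine care is the bookkeeping between the two satisfaction relations --- the C19 ``here'' relation on C19-belief interpretations $\kdint{H,T}{\wvb}$ and the KD45 relation of Section~\ref{sec:theories.semantics} on belief interpretations $\kdint{T}{\wvb^t}$ --- and, within that, two identifications that have to be made explicit: that the ``there'' component $\kdint{T}{\wvb^t}$ occurring inside the C19 implication clause is evaluated under the monotonic KD45 semantics (so that clause~(ii) literally is the KD45 implication condition), and that the $\K$-clauses of the two semantics range over the very same set of worlds, namely $\wvb^t$. Once these are pinned down the induction is essentially mechanical; in particular it appeals to no minimality or equilibrium condition, since the Observation concerns only the monotonic (C19-belief versus KD45) layers.
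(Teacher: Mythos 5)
The paper states this Observation without proof, so there is no argument of the authors' to compare against; your persistence lemma, proved by structural induction with the implication case read off clause~(ii) of the C19 clause for $\to$ and the $\K$ case obtained by applying the induction hypothesis at every pair of $\wvb$, is correct and is exactly the natural justification (no minimality or equilibrium condition is involved, as you note). Two caveats, both about the survey's presentation rather than your reasoning: the atom case uses $H\subseteq T$ for the pairs in $\wvb$, which the survey never imposes on C19-epistemic interpretations (it appears only inside the definition of $\prec$), yet it is the intended reading inherited from the original C19 papers and is genuinely needed --- with $\Gamma=\{a\}$ and $\wvb=\{\tuple{\{a\},\emptyset}\}$ the Observation itself would fail --- so you are right to make it explicit; and the survey's C19 satisfaction clauses include neither a $\sneg$ case nor a companion falsification relation, so your explicit-negation step appeals to definitions not given here (harmless: as presented there is no such case to handle, and default negation $\Not\fF=\bot\leftarrow\fF$ is already covered by your implication case).
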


\begin{definition}\label{def:int.prec.views}
Given an epistemic interpretation~$\wv$ and a C19-epistemic interpretation~$\wvb$,
we write $\wvb \prec \wv$
if the following two conditions hold:
\begin{enumerate}
\item $\wvb^t = \wv$; and

\item for every $T \in \wv$,
there is some $\tuple{H,T} \in \wvb$,
with $H \subseteq T$.
\end{enumerate}
For a belief interpretation~$\tuple{\wv,I}$ and a C19-belief interpretation~$\tuple{\wvb,H,T}$, we write $\tuple{\wv,I} \preceq \tuple{\wvb,H,T}$ if $\wvb \preceq \wv$ and $I = T$.
We write $\wvb \prec \wv$ if $\wvb \preceq \wv$ and one of the following conditions hold:
\begin{enumerate}
\item there is $\tuple{H',T'} \in \wvb$ with $H' \subset T'$; or

\item $H \subset T$.
\end{enumerate}
\end{definition}

\begin{definition}
A belief interpretation $\kdint{I}{\wv}$ is said to be a \emph{C19-equilibrium model} of some theory~$\Gamma$, in symbols~$\kdint{I}{\wv} \models_{eq}$ iff its is a belief model of $\Gamma$ and there is no C19-belief model $\cI'$ of $\Gamma$ with $\cI' \prec \kdint{I}{\wv}$.
\end{definition}

As a final step, we impose a fixpoint condition to minimize the agent's knowledge as follows.
\begin{definition}[C19-world view]\label{def:au-eqmodel}
An epistemic interpretation~$\wv$ is called a \emph{C19-world view} of~$\Gamma$
if:
\begin{gather*}
\wv \ \ = \ \ \setm{ I }{ \kdint{I}{\wv } \models_{eq}  \Gamma  }
\end{gather*}
\end{definition}

\begin{theorem}[Main Theorem of the paper by~\citeNP{cafafa19a}]
\label{thm:g19.founded.G94}
Given any ground program~$\Pi$, its C19-world views coincide with its founded \mbox{G94-world} views.
\end{theorem}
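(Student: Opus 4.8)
The plan is to establish the two inclusions separately, and to reduce both to two ingredients: a routine \emph{monotonic} correspondence between C19-belief models built on the ``total'' epistemic interpretation attached to~$\wv$ and HT-models of the G94-reduct~$\Pi^\wv$; and a \emph{foundedness lemma} matching the unfounded sets of Definition~\ref{def:unfoundedset} with the non-total C19-belief models that the C19-equilibrium condition must rule out. For the first ingredient, I would fix an epistemic interpretation~$\wv=\{T_1,\dots,T_n\}$, write $\wvb_0$ for the total C19-epistemic interpretation $\{\tuple{T_1,T_1},\dots,\tuple{T_n,T_n}\}$ (so $\wvb_0^t=\wv$), and observe that every subjective literal occurring in~$\Pi$ has the form~$\K a$ with~$a$ objective, so its truth at any world of~$\wvb_0$ (here or there) depends only on whether~$\wv\models\K a$; hence it behaves exactly like the constant~$\top$ or~$\bot$ by which the G94-reduct replaces it. A structural induction then gives: for all~$H\subseteq T$, $\kdint{H,T}{\wvb_0}\models\Pi$ iff~$\tuple{H,T}$ is an HT-model of~$\Pi^\wv$. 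In particular $\kdint{T,T}{\wvb_0}\models\Pi$ iff~$T$ classically satisfies~$\Pi^\wv$, and ``$\kdint{I}{\wv}$ is minimal among the belief models obtained by shrinking only its here-component while keeping the epistemic part equal to~$\wvb_0$'' is equivalent to~$I\in\SM[\Pi^\wv]$.

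The technical core, which I expect to be the main obstacle, is the \emph{foundedness lemma}: for any~$\wv$ whose members all classically satisfy~$\Pi^\wv$, the epistemic interpretation~$\wv$ is unfounded (Definition~\ref{def:unfounded}) if and only if there is a C19-belief model~$\cI'=\kdint{H,T}{\wvb}$ of~$\Pi$ with~$\wvb^t=\wv$ in which some ``here'' component of a world of~$\wvb$ is strictly shrunk, i.e.\ some~$\tuple{H_j,T_j}\in\wvb$ has~$H_j\subsetneq T_j$ (the here-world $\tuple{H,T}$ being auxiliary and chosen among the worlds of~$\wvb$). From an unfounded set~$\us$ certifying that~$\wv$ is unfounded, I would put~$H_i\eqdef T_i\setminus\bigcup\{\,X\cap T_i\mid\tuple{X,T_i}\in\us\,\}$ for each~$i$, set~$\wvb\eqdef\{\tuple{H_i,T_i}\}$, and check that every rule of~$\Pi$ holds at every world of~$\wvb$: conditions~1--3 of Definition~\ref{def:unfoundedset} handle the objective body and the head of a rule exactly as in the unfounded-set/HT-model correspondence of~\citeN{lerusc97a}, while condition~4 handles the positive~$\K a$ literals; the requirement $X\cap I\neq\emptyset$ for members of~$\us$ forces a strict shrink at the world~$I$. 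Conversely, from such a~$\cI'$ I would set~$X_i\eqdef T_i\setminus H_i$ and~$\us\eqdef\{\tuple{X_i,T_i}\mid X_i\neq\emptyset\}$, and read off conditions~1--4 from the fact that~$\cI'$ is a belief model of~$\Pi$; nonemptiness of~$\us$ and~$X_i\cap T_i\neq\emptyset$ then witness that~$\wv$ is unfounded. The delicate part is the bookkeeping that relates the family of ``gaps'' $T_i\setminus H_i$ across all worlds of~$\wvb$ with the family of pairs of~$\us$, and lining up conditions~3 and~4 with HT-satisfaction of rule heads and of~$\K a$ literals respectively.

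With these ingredients, the inclusion \emph{every C19-world view is a founded G94-world view} proceeds as follows. Let~$\wv$ be a C19-world view, so by Definition~\ref{def:au-eqmodel} $\wv=\{\,I\mid\kdint{I}{\wv}\models_{eq}\Pi\,\}$, whence every~$I\in\wv$ is a C19-belief model and (via the total perturbation~$\wvb_0$) classically satisfies~$\Pi^\wv$. If~$\wv$ were unfounded, the foundedness lemma would produce a belief model strictly below~$\kdint{I}{\wv}$ for some~$I\in\wv$, contradicting~$\kdint{I}{\wv}\models_{eq}\Pi$; so~$\wv$ is founded. For $\wv=\SM[\Pi^\wv]$: if~$I\in\wv$ then~$\kdint{I}{\wv}\models_{eq}\Pi$ excludes in particular all purely-here perturbations of~$\wvb_0$, so~$I\in\SM[\Pi^\wv]$ by the first ingredient; conversely if~$I\in\SM[\Pi^\wv]$ then~$\kdint{I}{\wv}$ is a belief model, and any strictly smaller belief model~$\cI'$ is either obtained by shrinking only the here-component of~$\wvb_0$, contradicting~$I\in\SM[\Pi^\wv]$, or shrinks some world of its epistemic part, contradicting the already-established foundedness of~$\wv$ via the lemma; hence~$\kdint{I}{\wv}\models_{eq}\Pi$ and~$I\in\wv$. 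Thus~$\wv$ is a G94-world view (Definition~\ref{def:G94-world.view}).

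For the converse, let~$\wv$ be a founded G94-world view, so~$\wv=\SM[\Pi^\wv]$ and every~$I\in\wv$ classically satisfies~$\Pi^\wv$; I must show $\wv=\{\,I\mid\kdint{I}{\wv}\models_{eq}\Pi\,\}$. If~$I\in\wv=\SM[\Pi^\wv]$ then~$\kdint{I}{\wv}$ is a belief model by the first ingredient, and a hypothetical strictly smaller belief model is again either a purely-here shrink of~$\wvb_0$, contradicting~$I\in\SM[\Pi^\wv]$, or shrinks a world of the epistemic part, contradicting foundedness via the lemma; hence~$\kdint{I}{\wv}\models_{eq}\Pi$. Conversely, if~$\kdint{I}{\wv}\models_{eq}\Pi$ then~$I$ classically satisfies~$\Pi^\wv$ and no purely-here shrink of~$\wvb_0$ is a belief model below it, so~$I\in\SM[\Pi^\wv]=\wv$. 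This yields the equality, hence the theorem. Throughout, explicit negation is handled uniformly by treating each literal~$\sneg a$ as a fresh atom together with the constraint~$\bot\leftarrow a\wedge\sneg a$, which leaves every step above unaffected.
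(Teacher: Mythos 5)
Your high-level architecture — a routine total-level correspondence between C19-belief models over the total epistemic part and HT-models of the G94-reduct, plus a lemma matching unfounded sets with non-total C19-belief models whose total part is $\wv$ — is the right skeleton (the survey only cites the result; the cited paper argues along these lines). However, the forward direction of your foundedness lemma, which you yourself identify as the technical core, is carried out with a construction that fails. You build exactly one pair per world of $\wv$, setting $H_i = T_i\setminus\bigcup\{X\cap T_i\mid\tuple{X,T_i}\in\us\}$. Take the paper's own running example, program~\eqref{theory:larger.set.of.worlds}, its unfounded G94-world view $[\{p,q\}]$ and the unfounded set $\us'=\{\tuple{\{p\},\{p,q\}},\,\tuple{\{q\},\{p,q\}}\}$: your recipe yields $\wvb=\{\tuple{\emptyset,\{p,q\}}\}$, whose single world does not satisfy the disjunction $p\tor q$ at the ``here'' level, so $\wvb$ is not a C19-belief model and certifies nothing. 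Worse, no witness with a single pair over $\{p,q\}$ exists at all: $\tuple{\{p\},\{p,q\}}$ violates $q\leftarrow\K p$ and $\tuple{\{q\},\{p,q\}}$ violates $p\leftarrow\K q$ at the here level. The correct witness needs several here-worlds over the same total world, one per element of the unfounded set, e.g.\ $\wvb=\{\tuple{\{p\},\{p,q\}},\,\tuple{\{q\},\{p,q\}}\}$, together with total pairs $\tuple{T,T}$ for those $T\in\wv$ not occurring in $\us$; with that per-pair construction the rule-by-rule verification via conditions~1--4 of Definition~\ref{def:unfoundedset} (plus persistence and the hypothesis that every $T\in\wv$ classically satisfies $\Pi^\wv$) does go through. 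Your union construction is in effect the functional, one-here-set-per-world picture of F15, and the paper's discussion of F15 shows that this weaker structure cannot detect precisely this kind of unfoundedness; so the gap is not bookkeeping, it is missing the defining feature of C19-epistemic interpretations, namely that they are sets of H--T pairs that may contain several here-worlds over one total world.

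The rest of your argument is essentially sound modulo routine checks: the converse direction of the lemma (taking the per-pair gaps $T\setminus H$ of a non-total belief model as the unfounded set), the observation that at the total epistemic part every subjective literal evaluates like the constant substituted by the G94-reduct, and the case split of the equilibrium condition of Definition~\ref{def:au-eqmodel} into purely-here perturbations (equivalent to $I\in\SM[\Pi^\wv]$) versus perturbations shrinking some world of the epistemic part (handled by the lemma). So the repair is local but essential: replace the forward construction by the per-pair one and redo the here-satisfaction check at each world $\tuple{I\setminus X, I}$ for $\tuple{X,I}\in\us$, where a violating rule would have a head atom in $I\cap X$ and would satisfy conditions~1--4, contradicting that $\us$ is an unfounded set.
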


This theorem does not only guarantee that all C19-world views are founded, but that the C19-world views are precisely those G94-world views that are founded.
As a result, it is easy to see that a program consisting of rule
\begin{gather}
p \leftarrow \K p
	\tag{\ref{es:self-supported}}
\end{gather}
has~$[\{ \, \}]$ as its unique C19-world view as expected, because this is the only G94-world view that is founded.
Similarly, a program consisting of rules
\begin{gather}
p \tor q
\hspace{2cm}
p \leftarrow \K q
\hspace{2cm}
q \leftarrow \K p
\tag{\ref{theory:larger.set.of.worlds}}
\end{gather}
has a unique founded G94-world views, namely $\sset{ \set{p}, \set{q} }$, which is thus its only C19-world view.
The same applies to the program consisting of rules
\begin{gather}
p \tor q
\hspace{2cm}
p \leftarrow \K q
\hspace{2cm}
q \leftarrow \K p
\hspace{2cm}
\leftarrow \Not \K p
\tag{\ref{theory:larger.set.of.worlds2}}
\end{gather}
which has the same unique C19-world view.

Given that Theorem~\ref{thm:g19.founded.G94} states that the C19-world views are the founded G94-world views, we may expect that these two semantics coincide for programs where self-supported beliefs are not an issue.
As mentioned above, this class of programs is called epistemically tight and consist of programs that do not have positive dependencies through subjective literals.
Formally, the positive epistemic dependence relation among atoms in a program $\Pi$ is defined so that $dep^+(a,b)$ is true iff there is any rule \mbox{$r \in \Pi$} such that
$a \in \Head(r) \cup \Bodyr(r)$ and $b \in \Bodymp(r)$.

\begin{definition}[Epistemically tight program]
We say that an epistemic program $\Pi$ is \emph{epistemically tight} if we can assign an integer mapping
\mbox{$\lambda: \At \longrightarrow \mathbb{N}$} to each atom such that
\begin{enumerate}
    \item \mbox{$\lambda(a) = \lambda(b)$} for any rule $r \in \Pi$ and atoms $a,b \in (\Atoms(r)\setminus \Bodym(r))$,
    \item \mbox{$\lambda(a)>\lambda(b)$} for any pair of atoms $a,b$ satisfying $dep^+(a,b)$.
\end{enumerate}
\end{definition}

\begin{theorem}[Theorem~8 in the paper by~\citeNP{fandinno19a}]\label{thm:tight.coincide}
C19- and \mbox{G94-world} views coincide for epistemically tight programs.
\end{theorem}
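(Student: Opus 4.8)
The plan is to reduce everything to Theorem~\ref{thm:g19.founded.G94}, which identifies the C19-world views of a ground program with its \emph{founded} G94-world views. Since a founded G94-world view is in particular a G94-world view, one inclusion --- every C19-world view is a G94-world view --- then holds for arbitrary programs, with no use of tightness. So the entire burden of the theorem falls on the converse: for an epistemically tight $\Pi$, every G94-world view of $\Pi$ is already founded.

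To prove that, I would argue by contradiction. Let $\wv$ be a G94-world view of a tight program $\Pi$, so $\wv = \SM[\Gamma^{\wv}]$ and each $I \in \wv$ is an answer set of the objective theory $\Gamma^{\wv}$; and suppose $\wv$ is unfounded, witnessed by an unfounded set $\us$ with $I \in \wv$ and $X \cap I \neq \emptyset$ for every $\tuple{X,I} \in \us$. Fix a level mapping $\lambda$ witnessing tightness, let $\lambda^{*}$ be the smallest value of $\lambda$ on the atoms occurring in the active part $X \cap I$ of some pair of $\us$, and restrict $\us$ to its bottom layer (the atoms of level $\lambda^{*}$). The point is that on this layer Definition~\ref{def:unfoundedset} collapses to the classical unfounded-set notion of~\citeN{lerusc97a} applied to $\Gamma^{\wv}$: condition~\ref{item:1:def:unfounded} is just satisfaction of the reduced body in $I$ (because $\wv$ fixes the truth value of every subjective literal, so the G94-reduct faithfully records it), and condition~\ref{item:4:def:unfounded} becomes vacuous --- the first tightness requirement forces all atoms of $\Head(r) \cup \Bodyr(r)$ of a relevant rule to share the level $\lambda^{*}$, while the second forces every atom of $\Bodymp(r)$ to have level strictly below the head, hence below $\lambda^{*}$, so it cannot occur in the active part of any pair of $\us$. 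Thus the bottom layer of $\us$ produces a non-empty classical unfounded set contained in an answer set $I$ of $\Gamma^{\wv}$, contradicting that answer sets are unfounded-free. Together with the easy inclusion above, this yields the theorem.

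The step I expect to be the real obstacle is the bookkeeping in passing to the bottom layer. The $X$-components of pairs in $\us$ need not be subsets of the corresponding $I$, so after intersecting each $X$ with the set of level-$\lambda^{*}$ atoms one has to re-verify conditions~\ref{item:1:def:unfounded}--\ref{item:3:def:unfounded} (and the ``no justifying rule'' quantifier) for the shrunken pairs. Tightness is again what rescues this --- it pins every objective atom of a rule whose head meets the bottom layer to the single level $\lambda^{*}$, so any such atom that also lies in $I$ survives the restriction --- but making the argument airtight may well require first establishing a normal form for unfounded witnesses (for instance, that one may always take each $X \subseteq I$, or that each $X$ lies entirely within one level). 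The remaining ingredients --- that the C19-world views sit among the G94-world views, and that the reduct translates subjective literals as claimed --- are routine.
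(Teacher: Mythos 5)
Your overall architecture is the natural one and its first half is fine: by Theorem~\ref{thm:g19.founded.G94} the C19-world views are exactly the founded G94-world views, so one inclusion holds for arbitrary programs and the theorem reduces to showing that every G94-world view of an epistemically tight program is founded. (Note that this survey does not itself prove the statement but imports it from~\citeNP{fandinno19a}, so what follows assesses your argument on its own.) The genuine gap sits exactly at the step you wave through, the elimination of condition~\ref{item:4:def:unfounded} of Definition~\ref{def:unfoundedset}. That condition quantifies over the sets $X'$ of the pairs in $\us$, not over their active parts $X'\cap I'$, so minimality of $\lambda^*$ (taken over active parts) does not make it ``vacuous'' in the sense you need. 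Worse, after restricting $\us$ to its level-$\lambda^*$ layer, condition~\ref{item:4:def:unfounded} is indeed trivially \emph{satisfied} by every relevant rule --- but that works against you: being an unfounded set requires that \emph{no} rule satisfies all four conditions, so trivializing condition~\ref{item:4:def:unfounded} makes it harder, not easier, for the restricted collection to remain unfounded. Concretely, a rule $r$ whose head meets the bottom layer and which was blocked in the original witness \emph{only} by condition~\ref{item:4:def:unfounded} (conditions \ref{item:1:def:unfounded}--\ref{item:3:def:unfounded} holding) may satisfy all conditions with respect to the restricted pairs; then the restricted collection is not an unfounded set, you obtain no classical unfounded set of the reduct, and the contradiction evaporates. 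Your proposal never excludes this case.

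The missing idea that closes the hole is the interplay between conditions \ref{item:1:def:unfounded} and \ref{item:4:def:unfounded}. If condition~\ref{item:1:def:unfounded} holds for $r$ and $\tuple{X_0,I_0}$, then $\tuple{\wv,I_0}\models\Body(r)$; since each $\ell\in\Bodymp(r)$ occurs in a positive subjective literal $\K\ell$ of the body, this yields $\wv\models\K\ell$, hence $\ell\in I'$ for \emph{every} $I'\in\wv$. So a violation of condition~\ref{item:4:def:unfounded}, i.e.\ $\ell\in X'$ for some $\tuple{X',I'}\in\us$, would place $\ell$ in the active part $X'\cap I'$, while tightness forces $\lambda(\ell)$ strictly below the level $\lambda^*$ of the head --- contradicting the minimality of $\lambda^*$. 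Hence for bottom-layer rules one of conditions \ref{item:1:def:unfounded}--\ref{item:3:def:unfounded} must fail, which is exactly what lets you read off a classical unfounded set of the reduct meeting $I_0\in\SM[\Pi^{\wv}]$. With that argument inserted your plan goes through, modulo two technicalities you should still discharge: atoms occurring both in objective literals and in subjective literals of the same rule are excluded from the equal-level requirement of tightness, so the claim that a witness of the failure of condition~\ref{item:2:def:unfounded} lies at level $\lambda^*$ needs a separate (short) argument; and the reduct may contain doubly negated and explicit literals, so the fact that its answer sets admit no set satisfying the objective analogue of conditions \ref{item:1:def:unfounded}--\ref{item:3:def:unfounded} and meeting them is only ``similar to'' the result of Leone et al.\ and must be justified for this richer syntax.
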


This class of programs includes for instance the eligibility program introduced in Example~\ref{ex:college} (Section~\ref{sec:splitting}).
It also includes the programs corresponding to the rules~$\eqref{prg:g11.no.epistemic.splitting}$, $ \eqref{prg:g11.no.epistemic.splitting.no.constraint} $ and~$\eqref{eq:leckah18b}$ discussed above.

Theorems~\ref{thm:g19.founded.G94} and~\ref{thm:tight.coincide} also provide means for using tools to compute G94-world views as a means to compute C19-world views.
If the program is epistemically tight, we can just use a tool for the G94 semantics directly.
Otherwise, we can use a tool for the G94 semantics to compute a candidate and then check whether this candidate is founded.

Interestingly, the G94 semantics can also be characterized as a particular class of theories under the C19 semantics as illustrated next.
Let~$\KEM$ be the set containing the following form of \emph{exclude middle axiom}
\begin{gather*}
\K\,(\ell \vee \Not \ell)
\end{gather*}
for every explicit literal~$\ell$.

\begin{proposition}[Proposition~5 by~\protect\citeNP{cafafa20a}]\label{thm:g94.weak}
The G94-world views of any theory $\Gamma$ coincide precisely with the C19-world views of
$\Gamma \cup \KEM$.
\end{proposition}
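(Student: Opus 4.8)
The plan is to reduce the claim to the single statement that the C19-world views of $\Gamma\cup\KEM$ are precisely the G94-world views of $\Gamma$; the description as the G94-world views of $\Gamma\cup\KEM$ is then immediate. First I would dispose of $\KEM$ on the G94 side. For every explicit literal~$\ell$ and every belief interpretation $\kdint{I}{\wv}$ one has $\kdint{I}{\wv}\models\ell\vee\Not\ell$ (either $\ell\in I$, or $\kdint{I}{\wv}\not\models\ell$ and hence $\kdint{I}{\wv}\models\Not\ell$), so $\wv\models\K(\ell\vee\Not\ell)$ for \emph{every} epistemic interpretation~$\wv$. Thus the G94-reduct turns each axiom of $\KEM$ into $\top$, so $(\Gamma\cup\KEM)^\wv$ is $\Gamma^\wv$ together with a conjunction of $\top$'s; hence $\SM[(\Gamma\cup\KEM)^\wv]=\SM[\Gamma^\wv]$ and $\Gamma$ and $\Gamma\cup\KEM$ have the same G94-world views. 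It remains to compare, for a fixed~$\wv$, the condition ``$\wv$ is a C19-world view of $\Gamma\cup\KEM$'' with ``$\wv=\SM[\Gamma^\wv]$''.

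The crux is that $\KEM$ \emph{totalises} every C19-model. A C19-belief interpretation $\kdint{H,T}{\wvb}$ satisfies $\K(\ell\vee\Not\ell)$ iff every $\tuple{H_k,T_k}\in\wvb$ satisfies $\ell\vee\Not\ell$, i.e.\ $\ell\in H_k$ or $\ell\notin T_k$; letting $\ell$ range over all explicit literals forces $T_k\subseteq H_k$, hence $H_k=T_k$, for every~$k$. Consequently, every C19-belief model of $\Gamma\cup\KEM$ whose epistemic component projects to~$\wv$ has epistemic component exactly $\wvb_\wv\eqdef\{\tuple{T,T}\mid T\in\wv\}$. Crucially, the clause for $\K$ inspects only~$\wvb$, so $\KEM$ places no constraint on the ``real world'' pair $\tuple{H,T}$ of a C19-belief interpretation — which is exactly what is needed for the equilibrium minimisation to still be able to shrink $H$ strictly below~$T$.

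Next I would establish two ``collapse'' lemmas by induction on formulas. (i) For every~$\psi$ and every~$I$: $\kdint{I}{\wv}\models\psi$ in the KD45 semantics iff $I\models\psi^\wv$ classically (and dually for~$\falsif$); the only interesting case, $\psi=\K\fG$, uses that $\kdint{I}{\wv}\models\K\fG$ is independent of~$I$ and agrees with the truth constant $(\K\fG)^\wv$. (ii) For every~$\varphi$ and every pair $H\subseteq T$: $\kdint{H,T}{\wvb_\wv}\models\varphi$ iff $\tuple{H,T}\models\varphi^\wv$ in the monotonic here-and-there logic underlying $\SM$; here the implication case uses that the ``there'' point $\kdint{T}{\wv}$ appearing in the C19 clause for $\to$ is precisely the KD45 interpretation handled by~(i), and the $\K$ case collapses $\K\psi$ over the total $\wvb_\wv$ to the constant $(\K\psi)^\wv$, again via~(i). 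I expect the main obstacle to be making these inductions airtight in the presence of explicit negation and nested modalities: one must track \emph{maximal} modal subformulas carefully, and supply the strong-negation clauses of the C19 satisfaction relation (absent from the displayed list) so that they mirror the $\models/\falsif$ clauses of Section~\ref{sec:theories.semantics}; restricting $\sneg$ to atoms, as is standard, makes this routine.

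Finally I would assemble. Fix~$\wv$. If some $T\in\wv$ is not a classical model of $\Gamma^\wv$, then by~(ii) no $\kdint{I}{\wv}$ is even a C19-belief model of $\Gamma\cup\KEM$, so $\{I\mid\kdint{I}{\wv}\models_{eq}\Gamma\cup\KEM\}$ is empty and $\wv$ is neither a C19-world view of $\Gamma\cup\KEM$ nor equal to $\SM[\Gamma^\wv]$ (since $T\in\wv$ but $T\notin\SM[\Gamma^\wv]$). Otherwise, by totalisation the C19-belief models of $\Gamma\cup\KEM$ projecting to~$\wv$ are exactly the $\kdint{H,I}{\wvb_\wv}$ with $\tuple{H,I}\models\Gamma^\wv$ (using~(ii) on the $\tuple{H,I}$ pair and the hypothesis on the pairs of $\wvb_\wv$), so $\kdint{I}{\wv}\models_{eq}\Gamma\cup\KEM$ iff $\tuple{I,I}\models\Gamma^\wv$ and no $H\subsetneq I$ satisfies $\tuple{H,I}\models\Gamma^\wv$, i.e.\ iff $I\in\SM[\Gamma^\wv]$. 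Taking the fixpoint of Definition~\ref{def:au-eqmodel}, $\wv$ is a C19-world view of $\Gamma\cup\KEM$ iff $\wv=\{I\mid\kdint{I}{\wv}\models_{eq}\Gamma\cup\KEM\}=\SM[\Gamma^\wv]$, i.e.\ iff $\wv$ is a G94-world view of~$\Gamma$ (Definition~\ref{def:G94-world.view}), and by the first paragraph equivalently a G94-world view of $\Gamma\cup\KEM$.
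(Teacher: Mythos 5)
Your argument is sound, but note that this survey never proves Proposition~\ref{thm:g94.weak} itself: it is imported verbatim as Proposition~5 of \citeN{cafafa20a}, and your reconstruction follows essentially the route of that source — the $\K$-guarded excluded middle axioms of $\KEM$ force every pair in the epistemic component of a C19-belief model to be total while leaving the ``real-world'' pair unconstrained, so over a fixed $\wv$ the equilibrium condition collapses, via your two reduction lemmas, to $I$ being an equilibrium (stable) model of the G94-reduct $\Gamma^\wv$, and the two fixpoint conditions then coincide. The caveats you flag (the missing $\sneg$-clauses in the C19 satisfaction relation, the garbled $\preceq/\prec$ definition, and the implicit requirement $H_i\subseteq T_i$ needed for totalisation) are gaps in this survey's presentation rather than in your proof, and you handle them in the way the cited paper does.
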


In light of these results, we can understand the C19 semantics as a founded version of the G94 semantics.
Interestingly, these results allow for providing founded versions for all of the semantics presented above.
Recall from Section~\ref{sec:kahl2015} that the K15-world views of any program~$\Pi$ can be characterized as the G94-world views of program~$\Pi^K$ (Corollary~\ref{cor:reflexive.embedding2}).
Using this translation we can obtain a founded version of the K15 semantics.

\begin{definition}
An epistemic interpretation~$\wv$ is called a \emph{FK15-world view} of~$\Gamma$ if
$\wv$ is a C19-world view of~$\Gamma^K$.
\end{definition}

A founded version of the G11 semantics can be obtained in a similar way, by providing a variation of the translation $\cdot^K$ that only affects positive occurrences of the operator~$\K$.
Furthermore, since S16-world views can be defined in terms of K15-world views, we can immediately get a a founded version of the S15 semantics by replacing in the definition of S16-world views each occurrence of K15 by FK15.
A founded version of the F15 semantics is slightly more involved and we refer to the paper by~\citeN{fandinno19a} for more details.

 \subsection{The state of the search}
\label{sec:semantics.state}

So far in this section, we have reviewed the major approaches that have addressed the issue of the existence of self-supported beliefs.
While doing so, we have also reviewed how these approaches behave with respect to several properties inspired by properties satisfied by the stable model semantics.
Table~\ref{table:summary} is taken from the paper by~\citeN{fandinno19a} and summarizes the known results for these semantics with respect to those properties.
\begin{table}
\centering
\begin{tabular*}{13.25cm}{  @{}p{11pc} @{\hskip1cm}  @{}p{3.2pc} @{}p{3.2pc} @{}p{3.2pc} @{}p{3.2pc} @{}p{3.2pc} @{}p{3.2pc} @{}p{3.2pc} @{}p{3.2pc} @{}p{3.2pc} }
\hline\hline
&G94 & G11 
& F15 
& K15 & S16
& C19
\\\hline
Supra-S5& 
\checkmark & \checkmark 
&\checkmark
& \checkmark & \checkmark 
& \checkmark
\\\hline
Supra-ASP& 
\checkmark & \checkmark 
& \checkmark
& \checkmark & \checkmark
& \checkmark 
\\\hline
Sub. constraint monotonicity&
\checkmark & \checkmark 
& 
& &
& \checkmark
\\\hline
Splitting
&\checkmark &  
& 
& &
& \checkmark
\\\hline
Foundedness&
& & & &
& \checkmark
\\\hline\hline
\end{tabular*}
	\caption{Summary of properties in different semantics.}
	\label{table:summary}
\end{table}
As we can see, C19 is the only one that satisfies foundedness
and, as illustrated in Section~\ref{sec:cabalar2019}, we can use this semantics to construct founded versions of all other semantics.
In fact, C19 can be considered as the ``founded version'' of G94.

Another interesting fact is that only G94 and C19 satisfy the epistemic splitting property.
Founded versions of the other semantics do not satisfy epistemic splitting either.
In fact, this property seems to be tightly connected with the \emph{non-reflexivity} of these two semantics.
We say that a semantics~$\cS$ is \emph{reflexive} when the~$\cS$-world views of~$\Gamma$ and~${\Gamma \cup \{ p \leftarrow \K p  \mid p \in \At \}}$ coincide for every possible epistemic theory~$\Gamma$.
It is easy to see that G11, F15, K15 and S16 are all reflexive, while G94 and C19 are not.
As a result of this trade-off, we can find examples that appear to have self-supported beliefs in all semantics.
Take for instance, the program consisting of the following rules:
\begin{align}
p &\leftarrow \K p
	\tag{\ref{es:self-supported}}
\\
p &\tor q
\\
s &\leftarrow \K p
	\label{eq:challenge:3}
\\
&\leftarrow \Not s
\end{align}
This program is the result of adding rule~\eqref{es:self-supported} to~\eqref{prg:g11.no.epistemic.splitting} and has the unique world view~$[\{p,s\}]$ according to all semantics.
As mentioned above, for reflexive semantics, rule~\eqref{es:self-supported} is redundant and the world views of this program coincide with the world views of~\eqref{prg:g11.no.epistemic.splitting}.
We already analyzed this program in previous sections and showed that~$[\{p,s\}]$ is its unique G11- and K15-world view.
As mentioned in Section~\ref{sec:shen2017}, being a unique K15-world view immediately implies that this is also the unique S16-world view.
It also can be checked that this is the unique F15-world view.
Note that the first rule supports that~$p$ may be true in some answer sets, but does not support that~$p$ is true in all answer sets.
Therefore, the body of~\eqref{eq:challenge:3} is not supported and neither should be~$s$.
Thus, both~$p$ and~$s$ are unsupported beliefs.
Recall that the reason why these semantics produce this unintended world view is related with their failure to satisfy the epistemic splitting property.
In fact, program~\eqref{prg:g11.no.epistemic.splitting} has no world view according to any semantics that satisfy epistemic splitting, like~G94 or C19.
However, as a result of the non-reflexivity of these semantics, adding~\eqref{es:self-supported} produces the world view~$[\{p,s\}]$.
This also seems unjustified as the body of~\eqref{es:self-supported} still lacks justification.
More research is necessary to understand the behavior of this kind of programs and whether these apparently unsupported beliefs can be avoided.

 \section{Relation to autoepistemic logics}
\label{sec:ael}

It is well-known that, for any ground program that includes a choice rule for all its atoms, its stable models coincide with the classical models of the program understood as a propositional theory.
In this section, we show that there is a similar relation between some of the approaches for epistemic specifications and some autoepistemic logics.
Recall that autoepistemic logics are nonmonotonic logics for
modeling the beliefs of ideally rational agents who reflect on their own
beliefs.

The first and most influential of these logics was the one introduced by~\citeN{moore85}.
The language of autoepistemic logic is that of ordinary propositional logic, augmented by a modal operator~$\LK$.
Formulas of the form~$\LK \varphi$ can be read as ``$\varphi$ is believed.''
In order to make the comparison with epistemic specifications easier,
we replace the modal operator~$\LK$ by~$\K$.
With this notation at hand,
we can say that a set of formulas~$E$ is a \emph{stable expansion}
of a theory~$\Gamma$ (whose only modal operator is~$\K$)
if~$E$ is the set of all consequences (in the sense of classical
propositional logic) of theory
\begin{gather*}
\Gamma \cup \{ \K \varphi \mid \varphi \in E \} \cup \{ \Not \K \varphi  \mid \varphi \notin E \}
\end{gather*}
We can easily extend this definition to arbitrary theories by assuming
that operators~$\M$ and~$\eNot$ are shorthands as stated in Table~\ref{table:interdefinition.opeartors}.

Moore soon realized that autoepistemic logic can be characterized in terms of the consequences of modal logic KD45 instead of classical
propositional logic.
Later,
\mbox{\citeN{schwarz92}}
showed that it is also possible to characterize autoepistemic logic as a particular class of minimal epistemic models.
This characterization can be rewritten in form of a fixpoint similar to Definition~\ref{def:au-eqmodel}.
We name those epistemic models as M85-world views by analogy with epistemic specifications.

\begin{samepage}
\begin{definition}[\Mefwv view]
We say that an epistemic interpretation~$\wv$ is an \emph{\Mefwv view} of some theory~$\Gamma$ when it satisfies the following fixpoint condition:
\begin{gather*}
\wv \quad = \quad \{ \, I \mid \tuple{\wv,I} \models \Gamma \, \}
\end{gather*}
\end{definition}
\end{samepage}

\begin{proposition}[Proposition~4.1 in the paper by~\citeNP{schwarz92}]
\label{prop:moore85.models}
Let~$\Gamma$ be a theory, $\wv$ be an epistemic interpretation
and~$E = \{ \varphi \mid \wv \models \varphi \}$ be the set of formulas satisfied by~$\wv$.
Then,
$\wv$ is a \Mefwv view of~$\Gamma$
iff
$E$ is a stable expansion of~$\Gamma$.
\end{proposition}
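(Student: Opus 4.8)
The plan is to turn the two fixpoint conditions into one and the same purely propositional statement about the G94-reduct, and then invoke completeness of classical propositional logic. Fix an epistemic interpretation~$\wv$ and put $E = \{\varphi \mid \wv\models\varphi\}$, and write $\mathrm{Mod}(\Delta)$ for the set of classical propositional models of an objective theory~$\Delta$. The first and only genuinely new ingredient is a \emph{reduct lemma}: since $\K$ is the sole modal operator and whether $\tuple{\wv,I}\models\K\varphi$ holds does not depend on~$I$, a routine structural induction shows that $\tuple{\wv,I}\models\varphi$ iff $I\models\varphi^\wv$ classically, where $\varphi^\wv$ is the objective formula obtained exactly as in Definition~\ref{eq:G94-reduct} by replacing each maximal subformula~$\K\psi$ by~$\top$ if $\wv\models\K\psi$ and by~$\bot$ otherwise (non-emptiness of~$\wv$ is what makes the $\bot$-case go through). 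In particular $\{I \mid \tuple{\wv,I}\models\Gamma\} = \mathrm{Mod}(\Gamma^\wv)$, so $\wv$ is an \Mefwv view iff $\wv = \mathrm{Mod}(\Gamma^\wv)$.

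Next I would record the elementary fact that $\wv\models\varphi$, $\wv\models\K\varphi$ and ``$\K\varphi\in E$'' are pairwise equivalent; consequently the theory $T_E := \Gamma\cup\{\K\varphi\mid\varphi\in E\}\cup\{\Not\K\varphi\mid\varphi\notin E\}$ assigns to every modal atom~$\K\varphi$ precisely the truth value it already receives under~$\wv$. Reading $\mathrm{Cn}$ as classical consequence over the signature extended by the modal atoms, this means that the propositional models of~$T_E$ are, up to the (fully pinned down) modal part, exactly~$\mathrm{Mod}(\Gamma^\wv)$; hence for every formula~$\varphi$ one gets $\varphi\in\mathrm{Cn}(T_E)$ iff $\mathrm{Mod}(\Gamma^\wv)\subseteq\mathrm{Mod}(\varphi^\wv)$, while the reduct lemma gives $\varphi\in E$ iff $\wv\subseteq\mathrm{Mod}(\varphi^\wv)$.

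Putting the two translations together, the proposition reduces to: $\wv = \mathrm{Mod}(\Gamma^\wv)$ iff, for all~$\varphi$, $\wv\subseteq\mathrm{Mod}(\varphi^\wv) \Leftrightarrow \mathrm{Mod}(\Gamma^\wv)\subseteq\mathrm{Mod}(\varphi^\wv)$. The direction ``$\wv$ \Mefwv view $\Rightarrow$ $E$ stable expansion'' is then immediate: if $\wv = \mathrm{Mod}(\Gamma^\wv)$ the biconditional holds trivially for every~$\varphi$, so $E = \mathrm{Cn}(T_E)$. For the converse, $E = \mathrm{Cn}(T_E)$ forces $\Gamma\subseteq E$, hence $\wv\models\Gamma$ and so $\wv\subseteq\mathrm{Mod}(\Gamma^\wv)$; the reverse inclusion follows by a separating-formula argument, since an objective formula~$\psi_0$ whose unique non-model is a putative $I_0\in\mathrm{Mod}(\Gamma^\wv)\setminus\wv$ would lie in~$E$ but not in~$\mathrm{Cn}(T_E)$, a contradiction.

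I expect the one delicate point to be exactly that last step: such a $\psi_0$ exists when the propositional signature is finite (the relevant case, e.g.\ for ground programs), but over an infinite signature a non-definable~$\wv$ can induce a stable expansion without being an \Mefwv view, so the statement is best read under that finiteness assumption — equivalently, restricting to epistemic interpretations that coincide with the set of propositional models of their own objective theory. Apart from this, everything is a structural induction plus bookkeeping about which modal atoms are pinned down by~$T_E$; alternatively, once the dictionary between~$\wv$ and~$T_E$ and the reduct lemma are in place, one may simply quote Schwarz's Proposition~4.1 directly.
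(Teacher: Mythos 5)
The paper never proves this proposition at all: it is imported verbatim as Proposition~4.1 of Schwarz (1992), so there is no internal argument to match, and your self-contained derivation is a legitimate alternative route. Its core — the reduct lemma identifying $\{I \mid \tuple{\wv,I}\models\Gamma\}$ with $\mathrm{Mod}(\Gamma^\wv)$, the observation that $T_E$ pins every modal atom $\K\varphi$ to exactly its value under~$\wv$, and the resulting reformulation of both fixpoints as conditions on $\mathrm{Mod}(\Gamma^\wv)$ versus~$\wv$ — is sound, and the forward direction goes through exactly as you state. Your two qualifications are also the right ones to make, and they are genuine rather than cosmetic. The converse really does hinge on the separating formula, hence on a finite signature (equivalently, on $\wv$ being the set of models of its own objective theory): with infinitely many atoms, $\Gamma=\emptyset$ and $\wv$ equal to all interpretations minus a single one already induces the unique stable expansion of~$\emptyset$ while violating the fixpoint, so the literal statement needs exactly the proviso you give; Schwarz's own formulation reconstructs the model set from the expansion, which is why the issue does not surface there. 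One further restriction you should state explicitly: your reduct lemma is only correct when explicit negation is not applied to modal subformulas, because replacing an unsatisfied $\K\psi$ by $\bot$ makes $\sneg\K\psi$ true in the reduct even though $\tuple{\wv,I}$ need not falsify $\K\psi$ under the paper's satisfaction/falsification semantics (e.g.\ $\psi$ an atom true in some worlds of~$\wv$ and explicitly false in none); this is harmless for programs and for the theories the paper actually uses, but the ``routine structural induction'' should either exclude that case or carry the satisfaction/falsification pair through. With those provisos your proof is correct, and, as you note, quoting Schwarz directly — which is what the paper does — remains the shortest path once the dictionary between $\wv$ and $T_E$ is in place.
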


Proposition~\ref{prop:moore85.models} provides a semantic characterization of Moore's autoepistemic logic.
This semantic characterization is similar to the definition of C19-world views.
In fact, the definition of C19-world views is obtained by replacing the satisfaction in modal logic KD45 by equilibrium satisfaction.
That is, by replacing~$\tuple{\wv,I} \models \Gamma$ by $ \tuple{\wv,I} \models_{eq} \Gamma$.
Similarly, the G94 semantics can also be characterized as a similar fixpoint where the equilibrium condition is weakened (see Appendix~A in~\citeNP{fandinno19a}).
This allows us to show that autoepistemic logic can be captured by a particular class of theories under the G94 or C19 semantics.
Let~$\EM$ be the set containing the \emph{excluded middle axiom}
\begin{gather*}
\ell \vee \Not \ell
\end{gather*}
for every explicit literal~$\ell$.

\begin{proposition}[Theorem~1 in the paper by~\protect\citeNP{cafafa19a}]\label{thm:g94.vs.m85}
The M85-world views of any theory $\Gamma$ coincide precisely with the G94-world views of
$\Gamma \cup \EM$.
\end{proposition}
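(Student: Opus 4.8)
The plan is to factor the statement through two lemmas about the G94 machinery and then to match, step by step, the fixpoint equation defining the G94-world views of $\Gamma\cup\EM$ with the one defining the \Mefwv views of $\Gamma$. To keep the inductions clean I would work, as the applications require, in the fragment of the language in which explicit negation~$\sneg$ is applied only to atoms, so that no modal subformula ever sits inside or under a~$\sneg$; note that $\EM$, the \Mefwv-view fixpoint, and all the program constructs already live in this fragment.

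The first ingredient is that $\EM$ erases stability: for every \emph{objective} theory $\Delta$ one has $\SM[\Delta\cup\EM]=\{\,I\mid I\models\Delta\,\}$. This is the explicit-negation, first-order version of the fact recalled at the beginning of this section (adding a choice rule for every atom collapses stable models onto ordinary models). In the quantified equilibrium logic with explicit negation that underlies $\SM$, the argument is that a here-and-there interpretation $\tuple{H,T}$ satisfies $\ell\vee\Not\ell$ for every explicit literal $\ell$ iff $H=T$; hence the equilibrium models of $\Delta\cup\EM$ are exactly the total here-and-there models of $\Delta$, that is, the interpretations $I$ with $I\models\Delta$. I would treat this as a routine adaptation of the standard choice-rule argument.

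The second ingredient is that the G94-reduct computes modal truth under~$\wv$: for every sentence $\fF$, every epistemic interpretation $\wv$, and every interpretation $I$, one has $I\models\fF^\wv$ iff $\tuple{\wv,I}\models\fF$. This is proved by structural induction on $\fF$ (carrying along the analogous falsification statement where it is needed, which in the chosen fragment is essentially only at the literal level). The propositional and quantifier cases are immediate because the reduct commutes with every non-modal connective and with quantifiers. The crux is a maximal modal subformula $\K\fG$ (and, dually, $\M\fG$): here $\fF^\wv\in\{\top,\bot\}$, and since the truth of $\K\fG$ at a belief interpretation does not depend on its second component $I$, we get $\tuple{\wv,I}\models\K\fG$ iff $\wv\models\K\fG$ iff $(\K\fG)^\wv=\top$ iff $I\models(\K\fG)^\wv$. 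Applying this formula by formula over $\Gamma$ gives $I\models\Gamma^\wv$ iff $\tuple{\wv,I}\models\Gamma$.

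To assemble the pieces, observe that $\EM$ contains no modal operators, so the G94-reduct leaves it unchanged and $(\Gamma\cup\EM)^\wv=\Gamma^\wv\cup\EM$ with $\Gamma^\wv$ objective. Hence, for any epistemic interpretation $\wv$,
\[
  \wv=\SM[(\Gamma\cup\EM)^\wv]
  \iff \wv=\SM[\Gamma^\wv\cup\EM]
  \iff \wv=\{\,I\mid I\models\Gamma^\wv\,\}
  \iff \wv=\{\,I\mid \tuple{\wv,I}\models\Gamma\,\},
\]
using, in turn, the definition of the reduct, the first lemma (since $\Gamma^\wv$ is objective), and the second lemma; the leftmost condition says that $\wv$ is a G94-world view of $\Gamma\cup\EM$, and the rightmost is exactly the fixpoint defining $\wv$ as an \Mefwv view of $\Gamma$. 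The step I expect to be the main obstacle is the second lemma, and within it the interaction of explicit negation with the modal operators in the Nelson-style KD45 semantics: one must check that the purely syntactic, $\wv$-driven substitution performed by the reduct on a maximal subformula $\K\fG$ genuinely tracks its semantic value under~$\wv$ on both the satisfaction and the falsification side, which is precisely why restricting to the fragment where $\sneg$ applies only to atoms makes the induction go through. The first lemma is standard once the here-and-there reading of $\EM$ is in place, and the final assembly is bookkeeping.
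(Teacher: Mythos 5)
This survey never proves Proposition~\ref{thm:g94.vs.m85}: it is imported wholesale as Theorem~1 of the cited paper by Cabalar, Fandinno and Fari\~nas del Cerro and used as a black box (for instance in the proof of Proposition~\ref{thm:autoepistemic.weak}), so there is no in-paper argument to compare yours against. Judged on its own, your proof is correct and takes the natural route. Your first lemma ($\EM$ collapses the stable models of an objective theory $\Delta$ to $\{I \mid I \models \Delta\}$, because any HT-interpretation $\tuple{H,T}$ with $H \subset T$ violates $\ell \vee \Not \ell$ for each $\ell \in T \setminus H$, so the equilibrium condition becomes vacuous and only total models survive) and your second lemma ($I \models \Gamma^{\wv}$ iff $\tuple{\wv,I} \models \Gamma$, since satisfaction of a maximal modal subformula does not depend on the second component of a belief interpretation) are exactly the two facts needed, and the final chain of equivalences correctly converts the G94 fixpoint $\wv = \SM[(\Gamma\cup\EM)^{\wv}]$ into the fixpoint defining the \Mefwv views, using that $\EM$ is modal-free, so $(\Gamma\cup\EM)^{\wv} = \Gamma^{\wv}\cup\EM$ with $\Gamma^{\wv}$ objective.

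One remark on the syntactic restriction you impose: it is not merely a convenience, and flagging it is to your credit rather than a gap. In the survey's full language, where $\sneg$ may be applied to arbitrary (hence modal) formulas, the G94-reduct of Definition~\ref{eq:G94-reduct} tracks only the satisfaction relation and ignores falsification, and the statement read literally can then fail. For $\Gamma = \{\sneg \K p\}$ there is no \Mefwv view: the condition $\tuple{\wv,I}\models\sneg\K p$ does not depend on $I$, so the fixpoint would force $\wv$ to be the set of all interpretations while simultaneously requiring $\sneg p$ in every member of $\wv$, which is impossible. Yet the set of all interpretations is a G94-world view of $\Gamma\cup\EM$, because $\wv\not\models\K p$ turns the reduct into the tautology $\sneg\bot$. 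So the proposition must be read with the scope of the original paper's language, in which no modal subformula occurs under explicit negation; that is precisely the fragment in which your induction goes through, and within it your argument is complete.
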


\begin{proposition}\label{thm:autoepistemic.weak}
The M85-world views of any theory $\Gamma$ coincide precisely with the C19-world views of
$\Gamma \cup \EM$.
\end{proposition}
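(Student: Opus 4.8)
The plan is to chain together two results already established in the excerpt. By Proposition~\ref{thm:g94.vs.m85}, the M85-world views of~$\Gamma$ coincide with the G94-world views of~$\Gamma \cup \EM$. Applying Proposition~\ref{thm:g94.weak} to the theory~$\Gamma \cup \EM$ (in place of its~$\Gamma$), the G94-world views of~$\Gamma \cup \EM$ coincide with the C19-world views of~$\Gamma \cup \EM \cup \KEM$. Hence the statement reduces to a single missing link: that $\Gamma \cup \EM$ and $\Gamma \cup \EM \cup \KEM$ have the same C19-world views. Informally, this says that once the excluded-middle axiom~$\ell \vee \Not \ell$ is available for every explicit literal~$\ell$, its ``known'' variant~$\K(\ell \vee \Not \ell)$ is redundant under the C19 semantics.

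To prove the missing link I would show that $\Gamma \cup \EM$ and $\Gamma \cup \EM \cup \KEM$ have exactly the same C19-belief models; the coincidence of C19-equilibrium models, and therefore of C19-world views, then follows immediately, because both the equilibrium order~$\prec$ and the fixpoint condition defining C19-world views refer to a theory only through its set of C19-belief models. One inclusion is trivial, as $\Gamma \cup \EM \subseteq \Gamma \cup \EM \cup \KEM$. For the other, let $\cI = \kdint{H,T}{\wvb}$ be a C19-belief model of~$\Gamma \cup \EM$. By the definition of C19-belief model, $\kdint{H_i,T_i}{\wvb} \models \ell \vee \Not \ell$ for every $\tuple{H_i,T_i} \in \wvb$ and every explicit literal~$\ell$. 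But this is exactly the clause defining $\kdint{H',T'}{\wvb} \models \K(\ell \vee \Not \ell)$, and it holds no matter which real-world component~$\tuple{H',T'}$ we choose, since satisfaction of a $\K$-formula inspects only the worlds of~$\wvb$ and ignores the real world. Therefore $\cI$ satisfies every formula of~$\KEM$ at every world of~$\wvb \cup \{\tuple{H,T}\}$, i.e.\ $\cI$ is a C19-belief model of~$\Gamma \cup \EM \cup \KEM$.

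Putting the three steps together yields the proposition. I do not expect a genuine obstacle; the one point that needs care is the difference between the two quantifications hidden in the C19 definitions --- ``$\cI$ is a C19-belief model'' asks each formula to hold at every world of~$\wvb$ together with the real world, while ``$\cI \models \K \varphi$'' asks $\varphi$ to hold only at the worlds of~$\wvb$ --- and it is precisely this asymmetry that makes $\EM$ entail $\KEM$ without any converse difficulty. An alternative route would combine Proposition~\ref{thm:g94.vs.m85} with Theorem~\ref{thm:g19.founded.G94} after checking that every G94-world view of a program containing~$\EM$ is founded (the choice rules~$\ell \tor \Not \ell$ act as the ``justifying'' rules required in Definition~\ref{def:unfoundedset}); but that argument is phrased for ground programs only, so the route via Proposition~\ref{thm:g94.weak}, which is stated for arbitrary theories, is the cleaner one.
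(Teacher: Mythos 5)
Your proposal is correct and takes essentially the same route as the paper: chain Proposition~\ref{thm:g94.vs.m85} with Proposition~\ref{thm:g94.weak} applied to $\Gamma \cup \EM$, and then discharge $\KEM$ by noting that every C19-belief model of $\EM$ is already a belief model of $\KEM$ (the paper disposes of this last step with exactly that one-line remark, which you merely spell out in more detail). No gap.
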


\begin{proof}
\phantom{iff} $\wv$ is a M85-world view of~$\Gamma$
\\iff $\wv$ is a G94-world view of~$\Gamma \cup \EM$ \hfill(Proposition~\ref{thm:g94.vs.m85})
\\iff $\wv$ is a C19-world view of~$\Gamma \cup \EM \cup \KEM$ \hfill(Proposition~\ref{thm:g94.weak})
\\iff $\wv$ is a C19-world view of~$\Gamma \cup \KEM$.
\\
For the last equivalence,
just note that any belief model of~$\EM$
is also a belief model of~$\KEM$.\hfill
\end{proof}

As mentioned above, the stable models of any theory that includes the excluded middle axiom for all atoms coincide with its models in classical propositional logic.
Propositions~\ref{thm:g94.vs.m85} and~\ref{thm:autoepistemic.weak} show that a similar relation exists between the G94 and C19 semantics for epistemic specifications and Moore's autoepistemic logic.
In this sense, we can consider these semantics the ``stable'' versions of Moore's autoepistemic logic.

Besides Moore's autoepistemic logic, several alternatives have been studied in the literature~\cite{konolige88a,martru89,niemela91a,schwarz91a}; most of them also motivated by the existence of self-supported beliefs in this logic.
In particular, one of this alternatives, called \emph{reflexive autoepistemic logic}~\cite{schwarz91a}, is closely related to the K15 semantics for epistemic specifications.

Formally,
a set of formulas~$E$ is a \emph{reflexive expansion}
of a theory~$\Gamma$ (whose only modal operator is~$\K$)
if~$E$ is the set of all consequences (in the sense of classical
propositional logic) of the theory
\begin{gather*}
\Gamma \cup \{ \varphi \leftrightarrow \K \varphi \mid \varphi \in E \} \cup \{ \Not \K \varphi  \mid \varphi \notin E \}
\end{gather*}
Alternatively,
reflexive expansions
can be characterized as Moore's stable expansions of its reflexive embedding~$(\,\cdot\,)^B$ (see Section~\ref{sec:kahl2015}).

\begin{proposition}
A set of formulas~$E$ is a reflexive expansion of some theory~$\Gamma$
iff it is a stable expansion of the theory~$\Gamma^B$.
\end{proposition}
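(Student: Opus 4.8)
The plan is to compare the two fixpoint conditions directly. Fix a set of formulas $E$ and abbreviate the two ``introspective contexts'' by
\[
C_E \ \eqdef\ \{\, \K\varphi \mid \varphi\in E \,\}\cup\{\, \Not\K\varphi \mid \varphi\notin E \,\},
\qquad
C'_E \ \eqdef\ \{\, \varphi \leftrightarrow \K\varphi \mid \varphi\in E \,\}\cup\{\, \Not\K\varphi \mid \varphi\notin E \,\},
\]
so that ``$E$ is a reflexive expansion of $\Gamma$'' reads $E=\mathrm{Cn}(\Gamma\cup C'_E)$ and ``$E$ is a stable expansion of $\Gamma^B$'' reads $E=\mathrm{Cn}(\Gamma^B\cup C_E)$, where $\mathrm{Cn}$ is classical propositional consequence treating every formula of the form $\K\varphi$ as a fresh atom. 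First I would invoke the standard fact that every reflexive expansion and every stable expansion is a \emph{stable theory}: it is deductively closed and satisfies $\varphi\in E\Rightarrow\K\varphi\in E$ and $\varphi\notin E\Rightarrow\Not\K\varphi\in E$. Hence, if $E$ is not a stable theory, both fixpoint equations fail and the biconditional holds vacuously; so from now on I may assume $E$ is a stable theory, in which case $\K\varphi\in E$ iff $\varphi\in E$, and both $C'_E\subseteq E$ and $C_E\subseteq E$.

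The technical core is the lemma that, for a stable theory $E$, one has $C'_E\vdash \psi\leftrightarrow\psi^B$ for every formula $\psi$. This is proved by induction on $\psi$. The atomic and Boolean (and, in the propositional setting of autoepistemic logic, trivially the quantifier) cases are immediate from the induction hypothesis by propositional reasoning. For $\psi=\K\chi$ we have $\psi^B=\chi^B\wedge\K\chi^B$; the induction hypothesis $C'_E\vdash\chi\leftrightarrow\chi^B$ together with deductive closure of $E$ gives $\chi\in E$ iff $\chi^B\in E$, and I split on this: if $\chi\in E$, then both $\chi\leftrightarrow\K\chi$ and $\chi^B\leftrightarrow\K\chi^B$ lie in $C'_E$, and chaining them with $\chi\leftrightarrow\chi^B$ yields $\K\chi\leftrightarrow\chi^B\wedge\K\chi^B$; if $\chi\notin E$, then both $\Not\K\chi$ and $\Not\K\chi^B$ lie in $C'_E$, which already refute both sides of the biconditional. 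A first consequence is $\mathrm{Cn}(\Gamma\cup C'_E)=\mathrm{Cn}(\Gamma^B\cup C'_E)$, since $C'_E$ lets one pass freely between each $\varphi\in\Gamma$ and its image $\varphi^B\in\Gamma^B$. So the proposition reduces to: for a stable theory $E$, $E=\mathrm{Cn}(\Gamma^B\cup C'_E)$ iff $E=\mathrm{Cn}(\Gamma^B\cup C_E)$.

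This last equivalence is the step I expect to be the main obstacle. One inclusion in each direction is easy: from $\Gamma^B\subseteq E$ (part of either fixpoint hypothesis) together with $C'_E\subseteq E$ (resp.\ $C_E\subseteq E$) one gets $\mathrm{Cn}(\Gamma^B\cup C'_E)\subseteq E$ (resp.\ $\mathrm{Cn}(\Gamma^B\cup C_E)\subseteq E$). The delicate direction is the opposite inclusion, and here the two contexts genuinely differ: $C_E$ asserts each $\K\varphi$ with $\varphi\in E$ outright, whereas $C'_E$ only asserts $\varphi\leftrightarrow\K\varphi$, so one must re-derive the objective facts $\varphi$, and the naive attempt to do this from $\Gamma^B\cup C'_E$ is circular, since it is exactly the inclusion $E\subseteq\mathrm{Cn}(\Gamma^B\cup C'_E)$ one wants. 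The way around is to exploit the special shape of $\Gamma^B$ --- every occurrence of $\K$ in it appears as $\K\rho$ inside a conjunction $\rho\wedge\K\rho$, so the reflexivity pattern $\K\rho\to\rho$ is ``compiled in'' --- and to argue at the level of propositional models: a model of $\Gamma^B\cup C'_E$ can be matched with a model of $\Gamma^B\cup C_E$ having the same objective part, and conversely, so the two theories have the same models and hence the same consequences. Making this model correspondence precise needs a careful simultaneous induction on modal nesting, or, more conveniently, the possible-worlds characterisation of stable theories (the reflexive analogue of Proposition~\ref{prop:moore85.models}); alternatively, one may simply invoke the classical theorem of~\citeN{schwarz91a} and of~\citeN{martru93a} that reflexive autoepistemic expansions of $\Gamma$ coincide with Moore's stable expansions of $\Gamma^B$, of which the present proposition is a restatement.
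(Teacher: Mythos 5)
Your proposal ends exactly where the paper's proof begins: the paper proves this proposition in one line, by citing Theorems~10.30 and~10.48 of Marek and Truszczy\'{n}ski (1993), i.e.\ the classical result that reflexive expansions of $\Gamma$ coincide with Moore's stable expansions of $\Gamma^B$, and your closing sentence invokes precisely that theorem. The direct argument you sketch before falling back on it is sound as far as it goes: the reduction to stable theories is legitimate (both kinds of expansion are stable theories, so $C_E\subseteq E$ and $C'_E\subseteq E$ in the nontrivial case), and the induction showing $C'_E \vdash \psi \leftrightarrow \psi^B$ for stable $E$ is correct, giving $\mathrm{Cn}(\Gamma\cup C'_E)=\mathrm{Cn}(\Gamma^B\cup C'_E)$. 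But, as you yourself flag, the remaining equivalence ``$E=\mathrm{Cn}(\Gamma^B\cup C'_E)$ iff $E=\mathrm{Cn}(\Gamma^B\cup C_E)$'' is the actual content of the classical theorem, and your sketch does not close it: the ``model correspondence'' exploiting the shape of $\Gamma^B$ is only stated programmatically, so read as a self-contained proof the decisive step is missing. Read instead as a reduction plus an appeal to Schwarz / Marek--Truszczy\'{n}ski, your argument is correct and is essentially the paper's proof with extra (correct but ultimately unnecessary) machinery in front of the citation.
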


\begin{proof}
Directly from Theorem~10.30 and~10.48 by~\citeN{marek93}.
\end{proof}

Using this result, we can semantically characterize reflexive autoepistemic logic as follows.\footnote{Alternatively, S92-world views can be characterized using the modal logic SW5 instead of the reflexive embedding (see~\citeNP{schwarz92}).}

\begin{samepage}
\begin{definition}[\Sntwv view]
We say that an epistemic interpretation~$\wv$ is an \emph{\Sntwv view} of some theory~$\Gamma$ when it satisfies the following fixpoint condition:
\begin{gather*}
\wv \quad = \quad \{ \, I \mid \tuple{\wv,I} \models \Gamma^B \, \}
\end{gather*}
\end{definition}
\end{samepage}

\begin{corollary}
Let~$\Gamma$ be a theory, $\wv$ be an epistemic interpretation
and~$E = \{ \varphi \mid \wv \models \varphi \}$ be the set of formulas satisfied by~$\wv$.
Then,
$E$ is a reflexive expansion of~$\Gamma$
iff
$\wv$ is a \Sntwv view of~$\Gamma$.
\end{corollary}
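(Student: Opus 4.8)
The plan is to assemble the statement from three results already available, with no fresh computation required. First I would unfold the definition of an \Sntwv view: by definition $\wv$ is an \Sntwv view of $\Gamma$ exactly when ${\wv = \{\, I \mid \tuple{\wv,I} \models \Gamma^B \,\}}$, and this is literally the defining fixpoint condition of an \Mefwv view, only with the theory $\Gamma^B$ in the role that $\Gamma$ plays in the definition of \Mefwv views. So the first step is the trivial observation that $\wv$ is an \Sntwv view of $\Gamma$ if and only if $\wv$ is an \Mefwv view of $\Gamma^B$.

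Next I would invoke Proposition~\ref{prop:moore85.models} (Schwarz's semantic characterisation of Moore's autoepistemic logic) with $\Gamma^B$ in place of $\Gamma$. The set $E = \{ \varphi \mid \wv \models \varphi \}$ appearing in the corollary is exactly the set of formulas satisfied by $\wv$ demanded by that proposition, so it yields directly: $\wv$ is an \Mefwv view of $\Gamma^B$ if and only if $E$ is a stable expansion of $\Gamma^B$. Here one should note in passing that $\Gamma^B$ is again a theory whose only modal operator is $\K$, since the embedding $(\,\cdot\,)^B$ introduces no occurrence of $\M$ or $\eNot$; this is precisely the side condition under which Proposition~\ref{prop:moore85.models} is stated, and it is what makes the proposition applicable to $\Gamma^B$.

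Finally I would chain in the Proposition stated immediately before the corollary, namely that $E$ is a reflexive expansion of $\Gamma$ if and only if $E$ is a stable expansion of $\Gamma^B$. Concatenating the three biconditionals gives: $\wv$ is an \Sntwv view of $\Gamma$ iff $\wv$ is an \Mefwv view of $\Gamma^B$ iff $E$ is a stable expansion of $\Gamma^B$ iff $E$ is a reflexive expansion of $\Gamma$, which is exactly the claim.

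There is essentially no obstacle here beyond bookkeeping. The only points that need care are checking that the object $E$ used in the two borrowed results is the same as the $E$ in the corollary's hypothesis — it is, by construction, since in all three statements $E$ is defined as the set of formulas satisfied by $\wv$ — and verifying, as noted above, that the purely syntactic restriction ``the only modal operator is $\K$'' is preserved under $(\,\cdot\,)^B$, so that Proposition~\ref{prop:moore85.models} legitimately applies to $\Gamma^B$.
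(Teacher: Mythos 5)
Your proposal is correct and follows exactly the route the paper intends for this corollary: unfold the definition of an \Sntwv view as an \Mefwv view of $\Gamma^B$, apply Proposition~\ref{prop:moore85.models} to $\Gamma^B$, and chain with the preceding proposition relating reflexive expansions of $\Gamma$ to stable expansions of $\Gamma^B$. The paper treats this as immediate and gives no separate argument, so your assembly (including the remark that $(\,\cdot\,)^B$ keeps $\K$ as the only modal operator) is precisely the intended proof.
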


The following result shows that K15 can be considered the ``stable'' version of reflexive autoepistemic logic.

\begin{proposition}\label{thm:reflexive.autoepistemic.weak}
The S92-world views of any theory $\Gamma$ coincide precisely with the K15-world views of
$\Gamma \cup \EM$.
\end{proposition}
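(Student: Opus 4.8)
The plan is to prove the equivalence by a short chain of reductions that routes both sides through Moore's autoepistemic logic and the reflexive embedding $(\cdot)^B$, reusing Proposition~\ref{thm:g94.vs.m85} together with the extension of the K15 semantics to arbitrary theories recorded in the remark following Proposition~\ref{prop:reflexive.embedding}. The whole argument ends up being essentially a three-line computation once the definitions are lined up.

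Fix an epistemic interpretation $\wv$. First I would unwind the S92 side: by definition, $\wv$ is an S92-world view of $\Gamma$ exactly when $\wv = \{\, I \mid \tuple{\wv,I} \models \Gamma^B \,\}$, and this is precisely the fixpoint condition defining M85-world views applied to the theory $\Gamma^B$. Hence $\wv$ is an S92-world view of $\Gamma$ iff $\wv$ is an M85-world view of $\Gamma^B$, and applying Proposition~\ref{thm:g94.vs.m85} to the theory $\Gamma^B$, this holds iff $\wv$ is a G94-world view of $\Gamma^B \cup \EM$. Next I would unwind the K15 side: by the agreed definition of K15-world views for theories, $\wv$ is a K15-world view of $\Gamma \cup \EM$ iff $\wv$ is a G94-world view of $(\Gamma \cup \EM)^B$. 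The two chains then meet once we observe $(\Gamma \cup \EM)^B = \Gamma^B \cup \EM$: the embedding is applied formula by formula, so it commutes with unions, and thus it suffices to check $\EM^B = \EM$, which follows from the elementary fact (a one-line structural induction) that $\fF^B = \fF$ for every formula $\fF$ not containing $\K$ — each excluded middle axiom $\ell \vee \Not \ell$, with $\Not\ell$ abbreviating $\bot \leftarrow \ell$ and $\ell$ an explicit literal, being such a formula. Concatenating, $\wv$ is an S92-world view of $\Gamma$ iff $\wv$ is a G94-world view of $\Gamma^B \cup \EM$ iff $\wv$ is a K15-world view of $\Gamma \cup \EM$, which is the claim.

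I do not anticipate any real difficulty here. The only point deserving care is the identity $\EM^B = \EM$, i.e.\ confirming that the reflexive embedding leaves the excluded middle axioms untouched: in particular that the implication $\leftarrow$ hidden inside $\Not\ell$ is governed by clause~3 of the definition of $(\cdot)^B$, and that $\bot^B = \bot$ and $\ell^B = \ell$ for explicit literals $\ell$. Everything else is bookkeeping with results established earlier in the section. (An alternative route would go via the characterization of reflexive expansions as stable expansions of $\Gamma^B$ and Proposition~\ref{prop:moore85.models}, but the path through Proposition~\ref{thm:g94.vs.m85} keeps the proof self-contained and purely at the level of world views.)
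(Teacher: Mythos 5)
Your proof is correct and follows essentially the same route as the paper: S92-world views of $\Gamma$ are by definition M85-world views of $\Gamma^B$, Proposition~\ref{thm:g94.vs.m85} turns these into G94-world views of $\Gamma^B\cup\EM$, and Proposition~\ref{prop:reflexive.embedding} (in its extension to theories) identifies the K15-world views of $\Gamma\cup\EM$ with the G94-world views of $(\Gamma\cup\EM)^B$. Your only addition is making explicit the identity $(\Gamma\cup\EM)^B=\Gamma^B\cup\EM$ (i.e.\ $\EM^B=\EM$ since the excluded middle axioms are $\K$-free), which the paper's three-line chain uses silently.
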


\begin{proof*}
\phantom{iff} $\wv$ is a S92-world view of~$\Gamma$
\\iff $\wv$ is a M85-world view of~$\Gamma^B$ \hfill(By definition)
\\iff $\wv$ is a G94-world view of~$(\Gamma  \cup \EM)^B$ \hfill(Proposition~\ref{thm:g94.vs.m85})
\\iff $\wv$ is a K15-world view of~$\Gamma \cup \EM$ \hfill(Proposition~\ref{prop:reflexive.embedding})
\end{proof*}
 \section{Systems for computing world views}
\label{sec:systems}

Currently, there are several systems to compute the world views of a epistemic logic program: 
\mbox{\emph{ESmodels}}~\cite{ZhangZ14}, 
\mbox{\emph{Wviews}}~\cite{Kelly2007,Kelly2018},
\mbox{\emph{ELPS}}~\cite{BalaiK14},
\mbox{\emph{GISolver}}~\cite{ZhangWZ2015},
\mbox{\emph{ELPsolve}}~\cite{LeclercK16},
\mbox{\emph{EP-ASP}}~\cite{LeS2017},
\mbox{\emph{PelpSolver}}~\cite{ZhangZ2017},
\mbox{\emph{EHEX}}~\cite{Strasser2018}, 
\mbox{\emph{selp}}~\cite{BichlerMW20},
\mbox{\emph{eclingo}}~\cite{cafagarosc20aURL}.
\begin{table}[h]
    \footnotesize
    \setlength{\tabcolsep}{3pt}
    \begin{tabular}{p{25mm} ll p{15mm} l p{22mm} }
        \hline\hline
        Solver & Year & Semantics & Underlying ASP solver & Imp. Lang & Available Form
        \\
        \hline\hline
        ELMO & 1994 & G94 & dlv & Prolog & n/a (in thesis)
        \\
        \hline
        sismodels & 1994 & G94 & claspD & C++ &n/a
        \\
        \hline
        Wviews & 2007 & G94 & clingo & C++ & Windows binary
        \\
        \hline
        Esmodels & 2013 & G11 & clingo & (unknown) & Windows binary
        \\
        \hline
        ELPS & 2014 & K15 & Java & clingo & source + binary
        \\
        \hline
        GISolver & 2015 & K15 & clingo & (unknown) & Windows binary
        \\
        \hline
        ELPsolve & 2016 & K15/S16 & clingo & C++ & binary only
        \\
        \hline
        Wviews2 & 2017 & G94 & Python & clingo & Windows binary
        \\
        \hline
        EP-ASP & 2017 & K15/S16 & clingo & Python + ASP & Windows binary
        \\
        \hline
        PelpSolver & 2017 & S16 & clingo & Java & Windows binary
        \\
        \hline
        ELPsolve2 & 2017 & S16 & clingo & C++ & currently not for public release
        \\
        \hline
        EHEX & 2018 & S16 & clingo & Python & source
        \\
        \hline
        selp & 2018 & S16 & clingo & Python & source
        \\
        \hline
        eclingo & 2020 & G94 & clingo & Python & source
        \\
        \hline
    \end{tabular}
    \caption{List of solvers for computing the world views of epistemic logic programs.
}
    \label{table:solvers}
\end{table}
A recent survey can be found in the paper by~\citeN{leckah18}.
For the sake of completeness, Table~\ref{table:solvers} briefly summarizes some of the characteristics of these solvers discussed in this survey with the addition of the recently presented~\texttt{eclingo}.
It is worth mentioning that~\mbox{\citeN{HecherMW20}} recently presented a new dynamic programming algorithm for computing the world views of an epistemic logic program.
This algorithm bounds the number of calls necessary to the underlying solver for Answer Set Prolog by using the treewidth of the program.
The authors have communicated to us that they are currently working on an implementation of this algorithm.

\section{Conclusions and Challenges}
\label{sec:conclusions}
The paper presents a review of the development of the Theory of Epistemic
Specifications.
The language was introduced in the early nineties with the goal of expanding Answer Set Prolog
with means of reasoning with incomplete information in the presence of
multiple answer
sets. It belongs to the body of work aimed at better understanding and
automating common
sense reasoning by developing formal knowledge representation languages
and reasoning algorithms and learning how they can be used to
take a simple
story, encode it on a machine in some way, and then test to see if the machine
can correctly answer questions that a human can answer.
Judea Pearl refers to such work as an attempt to pass what he
calls Mini-Turing Test (Pearl and Mackenzie 2018).
Even though we are still very far from passing the test, Epistemic
Specifications help to make a small step in the right direction.
Their use allows us to expand the collection of stories one can successfully deal
with. Section~\ref{sec:applications} shows examples of such stories. Unfortunately the progress
was limited to stories whose formalization did not require recursion
through modal operators. If such recursion were required the original semantics
produced counter-intuitive results.
For a long time this line of research has been put on the
back burner but in the last decade we have seen a renewed interest in the subject
and there has been a substantial progress in the understanding
of the language. We described various approaches to defining the semantics,
relationships between them, and their properties. In addition, our understanding was deepened
by discoveries of important connections between epistemic specifications and (both monotonic and non-monotonic) modal logics. Despite this progress we still have a number of important open problems
to solve.

We need to gain more experience in using epistemic specifications for knowledge representation. This
will allow us to learn if the expressive power of the language is sufficient for
its original purpose. In particular, it remains to be seen if the language is fully
suited for representing various forms of the Closed World Assumption -- one of its original goals.
This is also necessary for the development of methodologies for the use of epistemic specifications.

More work is needed to further develop the mathematical theory of epistemic specifications. 
Most formal results are only specified for the propositional fragment of the language while the use of quantifiers for knowledge representation seems essential.
We also need to
check whether the theory of Answer Set Prolog modules can be adapted to work in epistemic specifications, study various forms
of equivalence between epistemic theories (some preliminary work on strong and uniform equivalence has been reported in the papers by~\citeANP{famowo19a}~\citeyearNP{famowo19a,famowo19b}; and~\citeNP{sufahe20}), find conditions for existence and/or
uniqueness of world views, develop more efficient reasoning
algorithms, to name just a few. 
Even though there are several prototype solvers for epistemic specifications, they efficiency and usability still requires substantial work to be applicable in education and/or efficient for industrial
applications.

It may be important to further expand the language of epistemic specifications. Inclusion of
aggregates, sets, numerical constraints can be guided by the corresponding work
which has already been done in Answer Set Prolog.
But making epistemic specifications suitable for serving multiple agents
or deal with probabilistic reasoning may prove to be formidable problems.

\bibliographystyle{include/tlp/acmtrans}

\appendix
\section{Quantified Equilibrium Logic with Explicit Negation}
\label{sec:qel}

In this section we review the semantics of quantified equilibrium logic~\cite{peaval06a} and extend it with explicit negation~\cite{agcafapepevi19b}.
We limit the exposition here to the language presented in Section~\ref{sec:theories}, that is, we do not consider function symbols in our language and assume that the domain consists exactly of the set of ground terms.

Then, an \emph{HT-interpretation} is a pair~$\tuple{H,T}$ where both~$H$ and~$T$ are interpretations as defined in Section~\ref{sec:theories.semantics}.
As we did with belief interpretations,
we write that~${\tuple{H,T} \models \fF}$, to represent that an HT-interpretation
${\tuple{H,T}}$ \emph{satisfies} an objective formula~$\fF$
and~${\tuple{H,T} \falsif \fF}$ to represent that a HT-interpretation $\tuple{H,T}$ \emph{falsifies} an objective formula~$\fF$.
Note that the ambiguity is removed from the interpretation.
These two relation are defined according to the following recursive conditions:
\begin{enumerate}
\item $\tuple{H,T} \not\models \bot$;
\item $\tuple{H,T} \models \top$;
\item $\tuple{H,T} \models a$ if $a \in H$, for any atom $a \in \At$:

\item $\tuple{H,T} \models \fF \wedge \fG$ if $\tuple{H,T} \models \fF$ and $\tuple{H,T} \models \fG$;

\item $\tuple{H,T} \models \fF \vee \fG$ if $\tuple{H,T} \models \fF$ or $\tuple{H,T} \models \fG$;

\item $\tuple{H,T} \models \fF \leftarrow \fG$ if both
$\tuple{H,T} \models \fF$ or $\tuple{H,T} \not\models \fG$, and
\\
\phantom{$\tuple{H,T} \models \fF \leftarrow \fG$ if both} $\tuple{T,T} \models \fF$ or $\tuple{T,T} \not\models \fG$;

\item $\tuple{H,T} \models \exists x \, \fF(x)$ if $\tuple{H,T} \models \fF(t)$ for some ground term~$t$;

\item $\tuple{H,T} \models \forall x \, \fF(x)$ if $\tuple{H,T} \models \fF(t)$ for all ground terms~$t$;

\item $\tuple{H,T} \models \sneg \fF$ if $\tuple{H,T} \falsif \fF$;

\vspace{5pt}

\item $\tuple{H,T} \falsif \bot$;
\item $\tuple{H,T} \not\falsif \top$;
\item $\tuple{H,T} \falsif a$ if $\sneg a \in H$, for any atom $a \in \At$:

\item $\tuple{H,T} \falsif \fF \wedge \fG$ if $\tuple{H,T} \falsif \fF$ or $\tuple{H,T} \falsif \fG$;

\item $\tuple{H,T} \falsif \fF \vee \fG$ if $\tuple{H,T} \falsif \fF$ and $\tuple{H,T} \falsif \fG$;

\item $\tuple{H,T} \falsif \fF \leftarrow \fG$ if 
$\tuple{H,T}  \falsif \fF$ and $\tuple{T,T} \models \fG$

\item $\tuple{H,T} \falsif \exists x \, \fF(x)$ if $\tuple{H,T} \falsif \fF(t)$ for all ground terms~$t$;

\item $\tuple{H,T} \falsif \forall x \, \fF(x)$ if $\tuple{H,T} \falsif \fF(t)$ for some ground term~$t$; and

\item $\tuple{H,T} \falsif \sneg \fF$ if $\tuple{H,T} \models \fF$.
\end{enumerate}
An HT-interpretation~$\tuple{H,T}$ that satisfies an objective formula is called a~\emph{HT-model} or just a \emph{model} when it is clear by the context.
Similarly, An HT-interpretation~$\tuple{H,T}$ is a model of some objective theory if it is a model of all its formulas.

\begin{definition}[Equilibrium model and answer set]
An HT-interpretation of the form~$\tuple{T,T}$ is an \emph{equilibrium model} of an objective theory~$\Gamma$ if $\tuple{T,T}$ is a model of~$\Gamma$
and there is no other model $\tuple{H,T}$ of $\Gamma$ with~$H \subset T$.

An interpretation~$I$ is an \emph{answer set} (or \emph{stable model}) of an objective theory~$\Gamma$
if~$\tuple{I,I}$ is an equilibrium model of~$\Gamma$.
By~$\SM[\Gamma]$ we denote the set of all answer set of~$\Gamma$.
\end{definition}
 
\end{document}